\newcommand{\bz}{\boldsymbol{z}}
\newcommand{\bx}{\boldsymbol{x}}
\newcommand{\ba}{\boldsymbol{a}}
\newcommand{\bb}{\boldsymbol{b}}
\newcommand{\by}{\boldsymbol{y}}
\newcommand{\bw}{\boldsymbol{w}}
\newcommand{\bY}{\boldsymbol{Y}}
\newcommand{\bU}{\boldsymbol{U}}
\newcommand{\bzero}{\boldsymbol{0}}
\newcommand{\bone}{\boldsymbol{1}}
\newcommand{\bI}{\boldsymbol{I}}
\newcommand{\cE}{\mathcal{E}}
\newcommand{\bh}{\boldsymbol{h}}
\newcommand{\bA}{\boldsymbol{A}}
\newcommand{\bB}{\boldsymbol{B}}
\newcommand{\bF}{\boldsymbol{F}}
\newcommand{\bD}{\boldsymbol{D}}
\newcommand{\bM}{\boldsymbol{M}}
\newcommand{\bbP}{\mathbb{P}}
\newcommand{\bbR}{\mathbb{R}}
\newcommand{\bbN}{\mathbb{N}}
\newcommand{\bbC}{\mathbb{C}}
\newcommand{\bv}{\boldsymbol{v}}
\newcommand{\cS}{\mathcal{S}}
\newcommand{\cN}{\mathcal{N}}
\newcommand{\cO}{\mathcal{O}}
\newcommand{\cP}{\mathcal{P}}
\newcommand{\cI}{\mathcal{I}}
\newcommand{\cA}{\mathcal{A}}
\newcommand{\sgn}{\text{sgn}}
\newcommand{\phase}{\text{Ph}}
\newcommand{\erfc}{\text{erfc}}
\newcommand{\erf}{\text{erf}}
\newcommand{\dist}{\text{dist}}
\newcommand{\mE}{\mathrm{E}}
\newtheorem{lemma}{\textbf{Lemma}}
\newtheorem{theorem}{\textbf{Theorem}}
\newtheorem{problem}{\textbf{Problem}}
\newtheorem{proposition}{\textbf{Proposition}}
\newcommand{\nn}{\nonumber}
\title{Reshaped Wirtinger Flow and Incremental Algorithm  \\ for Solving Quadratic System of Equations}
\author{ Huishuai Zhang, Yi Zhou, Yingbin Liang, Yuejie Chi,\thanks{H. Zhang, Y. Zhou and Y. Liang are with Department of EECS, Syracuse University, Syracuse, NY 13244 USA (email: \{hzhan23,yzhou35, yliang06\}@syr.edu). Y. Chi is with Department of ECE, The Ohio State University, Columbus, OH 43210 USA (email: chi.97@osu.edu).} }
\begin{document}

\maketitle

\begin{abstract}

We study the phase retrieval problem, which solves quadratic system of equations, i.e., recovers a vector $\boldsymbol{x}\in \mathbb{R}^n$ from its magnitude measurements $y_i=|\langle \boldsymbol{a}_i, \boldsymbol{x}\rangle|, i=1,..., m$. We develop a gradient-like algorithm (referred to as RWF representing reshaped Wirtinger flow) by minimizing a nonconvex nonsmooth loss function.  In comparison with existing nonconvex  Wirtinger flow (WF) algorithm \cite{candes2015phase}, although the loss function becomes nonsmooth, it involves only the second power of variable and hence reduces the complexity. We show that for random Gaussian measurements, RWF enjoys geometric convergence to a global optimal point as long as the number $m$ of measurements is on the order of $n$, the dimension of the unknown $\boldsymbol{x}$. This improves the sample complexity of WF, and achieves  the same sample complexity as truncated Wirtinger flow (TWF) \cite{chen2015solving}, but without truncation in gradient loop. Furthermore, RWF costs less computationally than WF, and runs faster numerically than both WF and TWF. 
We further develop the incremental (stochastic) reshaped Wirtinger flow (IRWF) and show that IRWF converges linearly to the true signal. We further establish performance guarantee of an existing Kaczmarz method for the phase retrieval problem based on its connection to IRWF.  We also empirically demonstrate that IRWF outperforms existing ITWF algorithm (stochastic version of TWF) as well as other batch algorithms.
\end{abstract}

\section{Introduction}

Many problems in machine learning and signal processing can be reduced to solve a quadratic system of equations. For instance, in phase retrieval applications, i.e., X-ray crystallography and coherent diffraction imaging \cite{drenth2007X,miao1999extending,miao2008extending}, the structure of an object is to be recovered from the measured far field diffracted intensity when an object is illuminated by a source light. Mathematically, such a problem amounts to recover the signal from only measurements of magnitudes. Specifically, the problem is formulated below.
\begin{problem} \label{pb:mainproblem}
	Recover $\bx \in \bbR^n/\bbC^n$ from measurements $y_i$ given by
	\begin{flalign}
		y_i=\left|\langle \ba_i,\mathbf{x}\rangle\right|, \quad \text{for }\; i=1,\cdots,m, \label{eq:mainproblem}
	\end{flalign}
	where $\ba_i \in \bbR^n/\bbC^n$ are random design vectors (known). 
\end{problem}

Various algorithms have been proposed to solve this problem since 1970s. The error-reduction methods proposed in \cite{gerchberg1972practical, fienup1982phase} work well empirically but lack theoretical guarantees. More recently, convex relaxation of the problem has been formulated, for example, via PhaseLift \cite{chai2011array,candes2013phaselift, gross2015improved} and PhaseCut \cite{waldspurger2015phase}, and the correspondingly developed algorithms typically come with performance guarantee.  The reader can refer to the review paper \cite{shechtman2015phase} to learn more about applications and algorithms of the phase retrieval problem.

While with good theoretical guarantee, these convex methods often suffer from computational complexity particularly when the signal dimension is large. On the other hand, more efficient nonconvex approaches have been proposed and shown to recover the true signal as long as initialization is good enough. \cite{sujay2013phase} proposed \emph{AltMinPhase} algorithm, which alternatively updates the phase and the signal with each signal update solving a least-squares problem, and showed that AltMinPhase converges linearly and recovers the true signal with $\cO(n\log^3n)$ Gaussian measurements. More recently, \cite{candes2015phase} introduces \emph{Wirtinger flow} (WF) algorithm, which guarantees signal recovery via a simple gradient algorithm with only $\cO(n\log n)$ Gaussian measurements and attains $\epsilon-$accuracy within $\cO(mn^2\log 1/\epsilon)$ flops. More specifically, WF obtains good initialization by the spectral method, and then minimizes the following nonconvex loss function 
\begin{flalign}
	\ell_{WF}(\bz):=\frac{1}{4m}\sum_{i=1}^m(|\ba_i^T\bz|^2-y_i^2)^2, \label{eq:WFloss} 
\end{flalign}
via the gradient descent scheme. 

WF is further improved by \emph{truncated Wirtinger flow} (TWF) algorithm proposed in \cite{chen2015solving}, which adopts a Poisson loss function of $|\ba_i^T\bz|^2$, and keeps only well-behaved measurements  based on carefully designed truncation thresholds  for calculating the initial seed and every step of gradient. Such truncation assists to yield linear convergence with certain fixed step size and reduces both the sample complexity to $\cO(n)$ and the convergence time  to $\cO(mn\log 1/\epsilon)$.

It can be observed that WF uses the quadratic loss of $|\ba_i^T\bz|^2$ so that the optimization objective is a {\em smooth} function of $\ba_i^T\bz$ and the gradient step becomes simple. But this comes with a cost of increasing the order of  $\ba_i^T\bz$ to be four in the loss function. In this paper, we adopt the quadratic loss of $|\ba_i^T\bz|$. Although the loss function is not smooth everywhere, it reduces the order of $\ba_i^T\bz$ to be two, and the general curvature can be more amenable to convergence of the gradient method. The goal of this paper is to explore potential advantages of such a nonsmooth lower-order loss function.

Furthermore, incremental/stochastic methods have been proposed to solve Problem \ref{pb:mainproblem}. Specifically, Kaczmarz method for phase retrieval (Kaczmarz-PR)  \cite{wei2015solving, li2015phase} is shown to have excellent  empirical performance, but no global convergence guarantee was established. Incremental truncated Wirtinger flow (ITWF) \cite{kolte2016phase} is a stochastic algorithm developed based on TWF and exhibits linear convergence to the true signal once initialized properly. 
In this paper, we consider the incremental/stochastic version of RWF (IRWF) and compare its performance with Kaczmarz-PR and ITWF, in order to further demonstrate the advantage of the lower-order loss function.

\subsection{Our Contribution}

This paper adopts the following loss function\footnote{The loss function \eqref{eq:rushloss} was also used in \cite{fienup1982phase} to derive a gradient-like update for the phase retrieval problem with Fourier magnitude measurements. However, the focus of this paper is to characterize global convergence guarantee for such an algorithm with appropriate initialization, which was not studied in \cite{fienup1982phase}.}
\begin{flalign}
	\ell(\bz):=\frac{1}{2m} \sum_{i=1}^m \left(|\ba_i^T\bz|-y_i\right)^2. \label{eq:rushloss}
\end{flalign}
Compared to the loss function \eqref{eq:WFloss} in WF that adopts $|\ba_i^T\bz|^2$, the above loss function adopts the absolute value/magnitude $|\ba_i^T\bz|$ and hence has lower-order variables. For such a nonconvex and nonsmooth loss function, we develop a gradient descent-like algorithm, which sets zero for the ``gradient'' component corresponding to nonsmooth samples.  We refer to such an algorithm together with an initialization using a new spectral method (different from that employed in TWF or WF) as {\em reshaped Wirtinger flow} (RWF). We show that the lower-order loss function has great advantage in both statistical and computational efficiency, although scarifying smoothness. In fact, the curvature of such a loss function behaves similarly to that of a least-squares loss function in the neighborhood of global optimums (see Section \ref{sec:gradient}), and hence RWF converges fast. The nonsmoothness does not significantly affect the convergence of the algorithm because only with negligible probability the algorithm encounters nonsmooth points for some samples, which furthermore are set not to contribute to the gradient direction by the algorithm. 


We further exploit the loss function \eqref{eq:rushloss} to design the incremental/stochastic reshaped Wirtinger flow (IRWF), and we show that IRWF also enjoys the advantageous local curvature of such a loss function and achieves excellent statistical and computation performance. In particular, IRWF performs better than other competitive incremental methods (ITWF and Kaczmarz-PR) as well as batch algorithms (RWF, TWF and WF) numerically.

We summarize our main results as follows.
\begin{itemize}
\item Statistically, we show that RWF recovers the true signal with $\cO(n)$ samples, when the design vectors consist of \emph{independently and identically distributed} (i.i.d.) Gaussian entries, which is optimal in the order sense. Thus, even {\em without truncation} in gradient steps (truncation only in initialization stage), RWF improves the  sample complexity $\cO(n\log n)$ of WF, and achieves the same sample complexity as TWF with truncation in gradient step. It is thus more robust to random measurements.

\item Computationally, RWF converges geometrically to the true signal, requiring $\cO(mn\log (1/\epsilon))$ flops to reach $\epsilon-$accuracy. Again, without truncation in gradient steps, RWF improves computational cost $\cO(mn^2\log (1/\epsilon)$ of WF  and achieves the same computational cost as TWF. 

\item Numerically, RWF adopts fixed step size and does not require truncation, which avoids the trouble to set truncation thresholds in practice. It is generally two times faster than TWF and four to six times faster than WF in terms of the number of iterations and time cost.

\item We also show that RWF is robust to bounded additive noise. The estimation error is shown to diminish geometrically to the power of bounded noise up to a certain coefficient. Experiments on Poisson noise further corroborate the stability guarantee.

\item We further propose incremental/stochastic algorithm based on RWF (referred to as IRWF) and show that IRWF converges to the true signal geometrically under an appropriate initialization. More interestingly, we show that randomized Kaczmarz-PR (i.e., Kaczmarz method adapted for phase retrieval \cite{wei2015solving}) can be viewed as IRWF under a specific way of choosing step size, via which we further established geometric convergence guarantee for randomized Kaczmarz-PR. 
\end{itemize}


Compared to WF and TWF, the new form of the gradient step due to nonsmoothness of the loss function, in terms of technical analysis, requires new developments of bounding techniques. On the other hand, our technical proof of performance guarantee is much simpler, because the lower-order loss function allows to bypass higher-order moments of variables and truncation in gradient steps. We also anticipate that such analysis is more easily extendable. 

\subsection{Connection to Related Work}

Along the line of developing nonconvex algorithms with global performance guarantee for the phase retrieval problem, \cite{sujay2013phase} developed alternating minimization algorithm, \cite{candes2015phase,chen2015solving,zhang2016provable,cai2015optimal} developed/studied first-order gradient-like algorithms, and a recent study \cite{sun2016geometric} characterized geometric structure of the nonconvex objective and designed a second-order trust-region algorithm.  This paper is most closely related to \cite{candes2015phase,chen2015solving, sanghavi2016local,
zhang2016provable}, but develops a new gradient-like algorithm based on a lower-order nonsmooth (as well as nonconvex) loss function that yields advantageous statistical/computational efficiency.

Stochastic algorithms are also developed for the phase retrieval problem. \cite{kolte2016phase} studied the incremental truncated Wirtinger flow (ITWF) and showed that ITWF needs much fewer passes of data than TWF to reach the same accuracy. \cite{wei2015solving} adapted the Kaczmarz method to solve the phase retrieval problem and demonstrated its fast empirical convergence. We propose a stochastic algorithm based on RWF (IRWF) and show that it has close connection with Kaczmarz-PR. We also show that IRWF runs faster than ITWF due to the benefit of low-order loss function.

After our work was posted on arXiv, an independent work \cite{wang2016solving} was subsequently posted, which also adopts the same loss function but develops a slightly different algorithm TAF (i.e., truncated amplitude flow). One major difference of our algorithm RWF from TAF is that RWF does not require truncation in gradient loops while TAF employs truncation. Hence, RWF has fewer parameters to tune, and is easier to implement than TAF in practice. Furthermore, RWF demonstrates the performance advantage of adopting a lower-order loss function even without truncation, which cannot be observed from TAF. Moreover, we analyze stochastic algorithm based on new loss function while \cite{wang2016solving} does not.

More generally, various problems have been studied by minimizing nonconvex loss functions. For example, a partial list of these studies include matrix completion \cite{keshavan2010matrix, jain2013low, sun2014guaranteed, hardt2014understanding, de2015global, zheng2016convergence, jin2016provable, ge2016matrix}, low-rank matrix recovery \cite{bhojanapalli2016global, chen2015fast, tu2015low,zheng2015convergent, park2016provable, wei2015guarantees}, robust PCA \cite{netrapalli2014non}, robust tensor decomposition \cite{anandkumar2015tensor}, dictionary learning \cite{arora2015simple,sun2015complete}, community detection \cite{bandeira2016low}, phase synchronization\cite{boumal2016nonconvex}, blind deconvolution \cite{lee2015blind, li2016rapid}, etc. 


For minimizing a general nonconvex nonsmooth objective, various algorithms have been proposed, such as gradient sampling algorithm \cite{burke2005robust, kiwiel2007convergence} and majorization-minimization method \cite{ochs2015iteratively}. These algorithms were often shown to convergence to critical points which may be local minimizers or saddle points, without explicit characterization of convergence rate. In contrast, our algorithm is specifically designed for the phase retrieval problem, and can be shown to converge linearly to global optimum  under appropriate initialization. 


The advantage of nonsmooth loss function exhibiting in our study is analogous in spirit to that of the rectifier activation function (of the form $\max\{0,\cdot\}$) in neural networks. It has been shown that rectified linear unit (\emph{ReLU}) enjoys superb advantage in reducing the training time \cite{krizhevsky2012imagenet} and promoting sparsity \cite{glorot2011deep} over its counterparts of sigmoid and hyperbolic tangent functions, in spite of non-linearity and non-differentiability at zero. Our result in fact also demonstrates that a nonsmooth but simpler loss function yields improved performance.

\subsection{Paper Organization and Notations}
The rest of this paper is organized as follows. Section \ref{sec:guarantee} describes  RWF algorithm in detail and establishes its performance guarantee. 
Section \ref{sec:incremental} introduces the IRWF algorithm and establishes its performance guarantee and compares it with existing stochastic algorithms. Section \ref{sec:algcomp} compares RWF and IRWF with other competitive algorithms numerically. Finally, Section \ref{sec:discussion} concludes the paper with comments on future directions.

Throughout the paper, boldface lowercase letters such as $\ba_i, \bx, \bz$ denote vectors, and boldface capital letters such as $\bA, \bY$ denote matrices. For two matrices, $\bA\preceq\bB$ means that $\bB-\bA$ is positive definite. For a complex matrix or vector, $\bA^*$ and $\bz^*$ denote conjugate transposes of $\bA$ and $\bz$ respectively. For a real matrix or vector, $\bA^T$ and $\bz^T$ denote transposes of $\bA$ and $\bz$ respectively. The indicator function $\bone_A=1$ if the event $A$ is true, and $\bone_A=0$ otherwise.

\section{Reshaped Wirtinger Flow}\label{sec:guarantee}

Consider the problem of recovering a signal $\bx \in \bbR^n$ based on $m$ measurements $y_i$ given by 
\begin{flalign}\label{eq:model}
	y_i=\left|\langle \ba_i,\mathbf{x}\rangle\right|, \quad \text{for }\; i=1,\cdots,m,
\end{flalign}
where $\ba_i \in \bbR^n$ for $i=1,\cdots,m$ are known measurement vectors, independently generated by Gaussian distribution $\cN(0, \bI_{n\times n})$.  It can be observed that if $\bz$ is a solution, i.e., satisfying \eqref{eq:mainproblem}, then $\bz e^{-j\phi}$ is also the solution of the problem. So the recovery is up to a phase difference. Thus, we define the Euclidean distance between two vectors up to a global phase difference \cite{candes2015phase} as, for complex signals,
\begin{flalign}
\dist(\bz,\bx):=\min_{\phi \in [0,2\pi)} \|\bz e^{-j \phi}-\bx\|, \label{eq:defdis}
\end{flalign}
where it is simply $\min \|\bz\pm \bx\|$ for real case. We focus on the real-valued case in analysis, but the algorithm designed below is applicable to the complex-valued case and the case with \emph{coded diffraction pattern} (CDP) as we demonstrate via numerical results in Section \ref{sec:algcomp}.

We design RWF (see Algorithm \ref{alg:rwf}) for solving the above problem, which contains two stages: spectral initialization and gradient loop. Suggested values for parameters are $\alpha_l=1, \alpha_u=5$ and $\mu=0.8$\footnote{For complex Gaussian case, we suggest $\mu=1.2$.}. The scaling parameter in $\lambda_0$ and the conjugate transpose $\ba_i^*$ allow the algorithm readily applicable to complex and CDP cases. We next describe the two stages of the algorithm in detail in Sections \ref{sec:initialization} and \ref{sec:gradient}, respectively, and establish the convergence of the algorithm in Section \ref{sec:convergence}.  Finally, we provide the stability result of RWF in Section \ref{sec:stability}.
\begin{algorithm}[th]
\caption{Reshaped Wirtinger Flow}\label{alg:rwf}

\textbf{Input}: $\by=\{y_i\}_{i=1}^m$, $\{\ba_i\}_{i=1}^m$; \\
\textbf{Parameters:}  Lower and upper thresholds $\alpha_l,\alpha_u$ for  truncation in initialization, stepsize $\mu$;\\
\textbf{Initialization}: Let $\bz^{(0)}=\lambda_0 \tilde{\bz}$, where $\lambda_0=\frac{mn}{\sum_{i=1}^m \|\ba_i\|_1}\cdot \left(\frac{1}{m}\sum_{i=1}^m y_i\right)$ and $\tilde{\bz}$ is the leading eigenvector of
\begin{equation}\label{eq:init_TRWF}
\bY:=\frac{1}{m}\sum_{i=1}^m y_i\ba_i \boldsymbol{a}_i^*\bone_{\{\alpha_l \lambda_0<y_i< \alpha_u \lambda_0\}}.
\end{equation}


 \textbf{Gradient loop}: for $t=0:T-1$ do
  \begin{flalign}\label{eq:loop_FWF}
		\bz^{(t+1)}=\bz^{(t)}- \frac{\mu}{m}\sum_{i=1}^{m}\left(\ba_i^*\bz^{(t)}-y_i\cdot\frac{\ba_i^*\bz^{(t)}}{|\ba_i^*\bz^{(t)}|} \right) \ba_i.
\end{flalign}
\textbf{Output} $\bz^{(T)}$.
\end{algorithm}

\vspace{-2mm}
\subsection{Initialization via Spectral Method}\label{sec:initialization}

Differently from the spectral initialization methods for WF in  \cite{candes2015phase} and for TWF in \cite{chen2015solving}, both of which are based on $|\ba_i^*\bx|^2$, we propose an alternative initialization in Algorithm \ref{alg:rwf} that uses magnitude $|\ba_i^*\bx|$ instead, and truncates samples with both lower and upper thresholds as in \eqref{eq:init_TRWF}. We show that such initialization achieves smaller sample complexity than WF and the same order-level sample complexity as TWF, and furthermore, performs better than both WF and TWF numerically. 

Our initialization consists of estimation of both the norm and direction of $\bx$. The norm estimation of $\bx$ is given by $\lambda_0$ in Algorithm \ref{alg:rwf} with mathematical justification in Appendix \ref{supp:initialization}.
Intuitively, with real Gaussian measurements, the scaling coefficient $\frac{mn}{\sum_{i=1}^m \|\ba_i\|_1}\approx\sqrt{\frac{\pi}{2}}$. 
 Moreover, $y_i=|\ba_i^T\bx|$ are independent sub-Gaussian random variables for $i=1,\ldots,m$ with mean $\sqrt{ \frac{2}{\pi}}\|\bx\|$, and thus $\frac{1}{m}\sum_{i=1}^m y_i\approx\sqrt{ \frac{2}{\pi}}\|\bx\|$. Combining these two facts yields the desired argument.

\begin{figure}[th]
\centering
\includegraphics[width=3.2in]{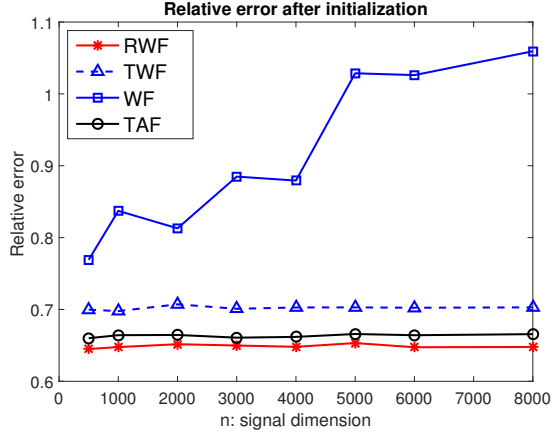}
\caption{Comparison of different initialization methods with $m=6n$ and 50 iterations.}
\label{fig:initGauss}
\vspace{-0.1in}
\end{figure}

The direction of $\bx$ is approximated by the leading eigenvector of $\bY$, because $\bY$ approaches $\mE[\bY]$ by concentration of measure and the leading eigenvector of $\mE[\bY]$ takes the form $c\bx$ for some scalar $c\in \bbR$. We note that \eqref{eq:init_TRWF} involves truncation of samples from both sides, in contrast to truncation only by an upper threshold in \cite{chen2015solving}. This difference is due to the following reason. We note that in high dimension setting, two random vectors are almost perpendicular to each other \cite{cai2013distributions}, which indicates there are considerable amount of $|\ba_i^*\bx|$ with small values, i.e., less than 1. These samples with small values deviate the direction of leading eigenvector of $\bY$ from $\bx$, whose effect cannot be offset and neglected if there are only moderate number of samples. Specifically, \cite{chen2015solving} uses $y'=|\ba_i^*\bx|^2$ to weight the contribution of $\ba_i\ba_i^*$ in $\bY$ and the square power helps to reduce the contribution of bad directions (that samples with small $|\ba_i^*\bx|$ values). In contrast, we use $y_i=|\ba_i^*\bx|$ to weight the contribution of $\ba_i\ba_i^*$ and apply truncation from bellow to filter out bad directions directly. 




We next provide the formal statement of the performance guarantee for the initialization step that we propose. 
\begin{proposition}\label{prop:init}
Fix  $\delta>0$. The initialization step in Algorithm \ref{alg:rwf} yields $\bz^{(0)}$ satisfying $\dist(\bz^{(0)},\bx)\le \delta\|\bx\|$ with probability at least $1-\exp(-cm\epsilon^2)$, if $m>C(\delta,\epsilon) n$, where $c$ is some positive constant and $C$ is a positive number only affected by $\delta$ and $\epsilon$.
\end{proposition}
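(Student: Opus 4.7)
The initialization has two pieces that must be controlled separately: the scalar $\lambda_0$ estimating $\|\bx\|$, and the leading eigenvector $\tilde{\bz}$ of $\bY$ estimating $\pm \bx/\|\bx\|$. Writing $\bz^{(0)} = \lambda_0 \tilde{\bz}$, the triangle inequality then combines a relative norm error with an angle error into the desired bound $\dist(\bz^{(0)},\bx) \le \delta\|\bx\|$.

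\emph{Norm estimate.} First I would show $\lambda_0 \in (1\pm\eta)\|\bx\|$ for any prescribed $\eta$ with exponentially high probability. The quantity $\frac{1}{mn}\sum_{i=1}^m \|\ba_i\|_1$ is an average of $mn$ i.i.d.\ $|\cN(0,1)|$ random variables, which concentrates at $\sqrt{2/\pi}$ with sub-Gaussian tails. Similarly, the $y_i = |\ba_i^T\bx|$ are i.i.d.\ sub-Gaussian with mean $\sqrt{2/\pi}\,\|\bx\|$, so $\frac{1}{m}\sum_i y_i$ concentrates at $\sqrt{2/\pi}\,\|\bx\|$. Taking the ratio and product in the definition of $\lambda_0$ then gives $\lambda_0 = (1\pm\eta)\|\bx\|$ with probability at least $1-\exp(-c_1 m\eta^2)$.

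\emph{Expectation of $\bY$ and spectral gap.} On the event $\{\lambda_0 \in (1\pm\eta)\|\bx\|\}$, each indicator $\bone_{\{\alpha_l \lambda_0 < y_i < \alpha_u \lambda_0\}}$ is sandwiched between two indicators with deterministic thresholds depending only on $\|\bx\|$, so $\bY$ is sandwiched in PSD order between two averages of independent rank-one matrices. For each such average I would compute the expectation: by rotational invariance of the Gaussian, take $\bx = \|\bx\|\be_1$, so that $y_i = \|\bx\|\,|a_{i,1}|$ and the truncation depends only on $|a_{i,1}|$. The expectation is diagonal by independence and odd symmetry of the coordinates $a_{i,j}$ for $j\neq 1$; its $(1,1)$ entry equals $\|\bx\|\,\mE[|a|^3 \bone_{\alpha_l'<|a|<\alpha_u'}]$ while the remaining diagonal entries equal $\|\bx\|\,\mE[|a|\bone_{\alpha_l'<|a|<\alpha_u'}]$, where $\alpha_l',\alpha_u'$ are the inner/outer effective thresholds. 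For the recommended $\alpha_l=1,\alpha_u=5$ and sufficiently small $\eta$, a direct numerical check confirms a strictly positive spectral gap $\gamma = \Omega(\|\bx\|)$ whose top eigenvector is exactly $\be_1 = \bx/\|\bx\|$.

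\emph{Concentration and angle bound.} Next I would apply a matrix Bernstein-type inequality to $\frac{1}{m}\sum_i y_i \ba_i\ba_i^T \bone_{\{\cdot\}}$. Since the truncation caps $y_i$ at $\alpha_u(1+\eta)\|\bx\|$ and $\|\ba_i\|$ is well controlled by a standard truncation-plus-union-bound argument (as in \cite{candes2015phase,chen2015solving}), the deviation satisfies $\|\bY-\mE\bY\| \le c_3\|\bx\|\sqrt{n/m}$ with probability at least $1-\exp(-c_2 m)$. Choosing $m \ge C(\delta,\epsilon)\,n$ with $C$ large enough drives this deviation below a small fraction of $\gamma$, so the Davis--Kahan $\sin\theta$ theorem yields $\sin\angle(\tilde{\bz},\pm\bx/\|\bx\|) \le \delta/3$. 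Combined with the norm estimate and the triangle inequality, this gives $\dist(\bz^{(0)},\bx) \le \delta\|\bx\|$.

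\emph{Main obstacle.} The principal difficulty is the data dependence of the truncation threshold: $\lambda_0$ and each $y_i$ both depend on the same $\ba_i$'s, so matrix concentration cannot be applied directly to $\bY$. The sandwich argument conditioned on $\{\lambda_0\in(1\pm\eta)\|\bx\|\}$ decouples this dependence at the price of slightly loosening the effective thresholds, which is also the reason $C(\delta,\epsilon)$ must absorb both the norm and the direction accuracies simultaneously.
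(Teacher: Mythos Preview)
Your proposal is correct and follows essentially the same route as the paper: norm estimate via sub-Gaussian concentration, a sandwich argument $\bY_1 \preceq \bY \preceq \bY_2$ to decouple the data-dependent threshold $\lambda_0$, computation of the expectations $\mE[\bY_k] = \beta_k \bx\bx^T + \beta_k' \bI$ via rotational invariance, matrix concentration, and an eigenvector perturbation bound. The only differences are in the specific tools---the paper invokes Vershynin's sub-Gaussian-rows result after writing each summand as $\bb_i\bb_i^T$ with $\bb_i := \ba_i\sqrt{|\ba_i^T\bx|}\,\bone_{\{\cdot\}}$ (thereby avoiding a separate truncation of $\|\ba_i\|$), and it cites the eigenvector argument in \cite[Section~7.8]{candes2015phase} in place of Davis--Kahan---but these are interchangeable with your matrix-Bernstein-plus-truncation and Davis--Kahan choices.
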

\begin{proof}
	See Appendix \ref{supp:initialization}. 
\end{proof}
Finally, Figure \ref{fig:initGauss} demonstrates that RWF achieves better initialization accuracy in terms of the relative error $\frac{\dist(\bz^{(0)},\bx)}{\|\bx\|}$ than WF and TWF. Furthermore, we also include the orthonormal promoting initialization method proposed for truncated amplitude flow (TAF) in the independent work \cite{wang2016solving}, in the comparison. It can be seen that our initialization is slightly better.

\subsection{Gradient Loop and Why RWF is Fast}\label{sec:gradient}

The gradient loop of Algorithm \ref{alg:rwf} is based on the loss function \eqref{eq:rushloss}, which is rewritten below
\begin{flalign}
	\ell(\bz):=\frac{1}{2m} \sum_{i=1}^m \left(|\ba_i^T\bz|-y_i\right)^2. \label{eq:rushloss1}
\end{flalign}

We let the update direction be as follows:
\begin{flalign}
	\nabla \ell(\bz):=\frac{1}{m}\sum_{i=1}^m \left(\ba_i^T\bz-y_i\cdot\sgn(\ba_i^T\bz)\right)\ba_i=\frac{1}{m}\sum_{i=1}^m \left(\ba_i^T\bz-y_i\cdot \frac{\ba_i^T\bz}{|\ba_i^T\bz|}\right)\ba_i,\label{eq:rushgrad}
\end{flalign}
where $\sgn(\cdot)$ is the sign function for nonzero arguments. We further set $\sgn(0)=0$ and $\frac{0}{|0|}=0$. In fact, $\nabla \ell(\bz)$ equals the gradient of the loss function \eqref{eq:rushloss1} if $\ba_i^T\bz\neq 0$ for all $i=1, ..., m$. For samples with nonsmooth point, i.e., $\ba_i^T\bz=0$, we adopt Fr\'echet superdifferential \cite{kruger2003frechet} for nonconvex function  to set the corresponding gradient component to be zero (as zero is an element in Fr\'echet superdifferential). With abuse of terminology, we still refer to $\nabla \ell(\bz)$ in \eqref{eq:rushgrad} as ``gradient'' for simplicity, which rather represents the update direction in the gradient loop of Algorithm \ref{alg:rwf}.

We next provide the intuition about why reshaped WF is fast. Suppose that the spectral method sets an initial point in the neighborhood of the ground truth $\bx$. We compare RWF with the following problem of solving $\bx$ from {\em linear equations} $y_i=\langle \ba_i,\mathbf{x}\rangle$ with $y_i$ and $\ba_i$ for $i=1,\ldots,m$ given. In particular, we note that this problem has both magnitude and sign of the measurements observed. In this case, it is natural to use the least-squares loss. Assume that the gradient descent is applied to solve this problem. Then the gradient is given by
\begin{flalign}
	\text{Least-squares gradient:} \quad 	\nabla \ell_{LS}(\bz)=\frac{1}{m}\sum_{i=1}^m \left(\ba_i^T\bz-\ba_i^T\bx\right)\ba_i. \label{eq:LSgrad}
\end{flalign}
We now argue intuitively that the gradient \eqref{eq:rushgrad} of RWF behaves similarly to the
least-squares gradient \eqref{eq:LSgrad}. For each $i$, $y_i\frac{\ba_i^T\bz}{|\ba_i^T\bz|}=|\ba_i^T\bx|\sgn(\ba_i^T\bz)$, and hence the two gradient components are close if $|\ba_i^T\bx|\cdot \sgn(\ba_i^T\bz)$ is viewed as an estimate of $\ba_i^T\bx$. 
The following lemma shows that if $\dist(\bz,\bx)$ is small (guaranteed by initialization), then $\ba_i^T\bz$ has the same sign as $\ba_i^T\bx$ for large $|\ba_i^T\bx|$.
\begin{lemma}\label{clm:samesign}
Let $\ba_i \sim \cN(0, \bI_{n\times n})$. For any given $\bx$ and $\bz$, independent from $\{\ba_i\}_{i=1}^m$, satisfying $\|\bx-\bz\|<\frac{\sqrt{2}-1}{\sqrt{2}}\|\bx\|$, we have
\begin{flalign}\label{eq:samesign}
\bbP\{(\ba_i^T\bx)(\ba_i^T\bz)<0\big|(\ba_i^T\bx)^2= t\|\bx\|^2\}\le\erfc\left(\frac{\sqrt{t}\|\bx\|}{2\|\bz-\bx\|}\right),
\end{flalign}
where  $\erfc(u):=\frac{2}{\sqrt{\pi}}\int_u^\infty \exp(-\tau^2) d\tau$. 
\end{lemma}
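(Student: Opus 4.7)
The plan is to reduce the sign-disagreement event to a one-dimensional Gaussian tail by conditioning on $\ba_i^T\bx$ and exploiting the orthogonal decomposition of $\bz - \bx$. Set $\bh := \bz - \bx$. By the symmetry of the Gaussian under $\ba_i \mapsto -\ba_i$ together with the symmetry of the event $\{(\ba_i^T\bx)(\ba_i^T\bz)<0\}$ under this map, the conditional probability given $(\ba_i^T\bx)^2 = t\|\bx\|^2$ equals the conditional probability given $\ba_i^T\bx = \sqrt{t}\|\bx\|$; on this event, the sign-disagreement reduces to the one-sided inequality $\ba_i^T\bh < -\sqrt{t}\|\bx\|$.

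Next I would decompose $\bh = \bh_\parallel + \bh_\perp$ into components parallel and perpendicular to $\bx$. Since $\ba_i$ has i.i.d.\ Gaussian entries and $\bh_\perp \perp \bx$, the scalars $\ba_i^T\bx$ and $\ba_i^T\bh_\perp$ are jointly Gaussian with zero covariance, and hence independent. Therefore, conditional on $\ba_i^T\bx = \sqrt{t}\|\bx\|$,
\begin{equation*}
\ba_i^T\bh \sim \cN\!\left(\mu, \sigma^2\right), \qquad \mu = \frac{\sqrt{t}\,\bh^T\bx}{\|\bx\|}, \qquad \sigma = \|\bh_\perp\|,
\end{equation*}
so that the target probability equals $\Phi\bigl(-(\sqrt{t}\|\bx\| + \mu)/\sigma\bigr)$, where $\Phi$ denotes the standard Gaussian CDF.

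Finally I would upper bound this scalar quantity using two elementary estimates: $|\mu| \le \sqrt{t}\,\|\bh\|$ by Cauchy--Schwarz, and $\sigma = \|\bh_\perp\| \le \|\bh\|$. These imply $\sqrt{t}\|\bx\|+\mu \ge \sqrt{t}(\|\bx\|-\|\bh\|)$, which is strictly positive by the standing hypothesis, so that the probability is bounded by $\tfrac{1}{2}\erfc\bigl(\sqrt{t}(\|\bx\|-\|\bh\|)/(\sqrt{2}\,\|\bh\|)\bigr)$. It remains to verify that this is at most $\erfc(\sqrt{t}\|\bx\|/(2\|\bh\|))$; by monotonicity of $\erfc$ it suffices that $\sqrt{t}(\|\bx\|-\|\bh\|)/(\sqrt{2}\,\|\bh\|) \ge \sqrt{t}\|\bx\|/(2\|\bh\|)$, which simplifies to $(2-\sqrt{2})\|\bx\| \ge 2\|\bh\|$, i.e., exactly the hypothesis $\|\bh\| < (\sqrt{2}-1)/\sqrt{2}\,\|\bx\|$. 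The main obstacle, and the reason this precise constant appears in the statement, is the bookkeeping of this last algebraic margin: the two slacknesses introduced (replacing $\sigma$ by $\|\bh\|$ and $|\mu|$ by $\sqrt{t}\|\bh\|$) must telescope cleanly through the $1/\sqrt{2}$-to-$1/2$ conversion inside $\erfc$, and the threshold $(\sqrt{2}-1)/\sqrt{2}$ is precisely what makes this possible.
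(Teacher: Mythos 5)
Your proof is correct and follows essentially the same route as the paper's: condition on the value of $\ba_i^T\bx$, decompose $\bh$ into components parallel and perpendicular to $\bx$, bound the cross term by Cauchy--Schwarz and $\|\bh_\perp\|$ by $\|\bh\|$, and close with the Gaussian tail, where the threshold $(\sqrt{2}-1)\|\bx\|/\sqrt{2}$ emerges from exactly the same final algebraic comparison. The only cosmetic difference is that the paper passes early to the two-sided event $|\ba_i^T\bh|>\sqrt{t}\|\bx\|$ via the triangle inequality, whereas you retain the one-sided event and identify the conditional law $\cN(\mu,\sigma^2)$ explicitly, which even leaves an unused factor of $\tfrac{1}{2}$ of slack.
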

\begin{proof}
	See Appendix \ref{supp:samesign}.
\end{proof}
It is easy to observe in \eqref{eq:samesign} that large $\ba_i^T\bx$ is likely to have the same sign as $\ba_i^T\bz$ so that the corresponding gradient components in \eqref{eq:rushgrad} and \eqref{eq:LSgrad} are likely equal, whereas small $\ba_i^T\bx$ may have different sign as $\ba_i^T\bz$ but contributes less to the gradient. Hence, overall the two gradients \eqref{eq:rushgrad} and \eqref{eq:LSgrad} should be close to each other with a large probability. 


This fact can be further verified numerically. Figure \ref{fig:leastsquare} illustrates that RWF takes almost the {\em same} number of iterations for recovering a signal (with only magnitude information) as the least-squares gradient descent method for recovering a signal (with both magnitude and sign information). 
\begin{figure}[th]
\centering 
\subfigure[Convergence behavior]{\includegraphics[width=3in]{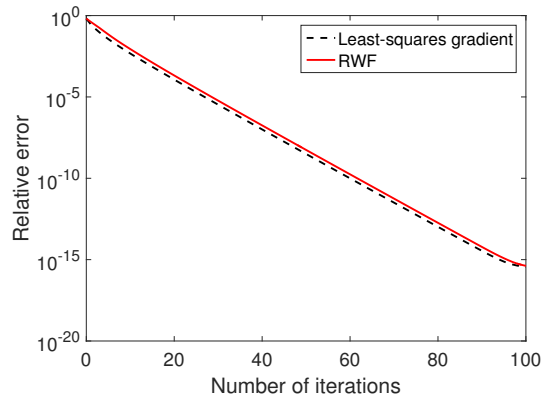}
\label{fig:leastsquare}
}
\caption{Comparison of convergence behavior between RWF and least-squares gradient descent with the same initialization. Parameters $n=1000$, $m=6n$, step size $\mu=0.8$.}
\end{figure}

\begin{figure}[th]
	\centering
\subfigure[Quadratic surface]{\includegraphics[width=1.8in]{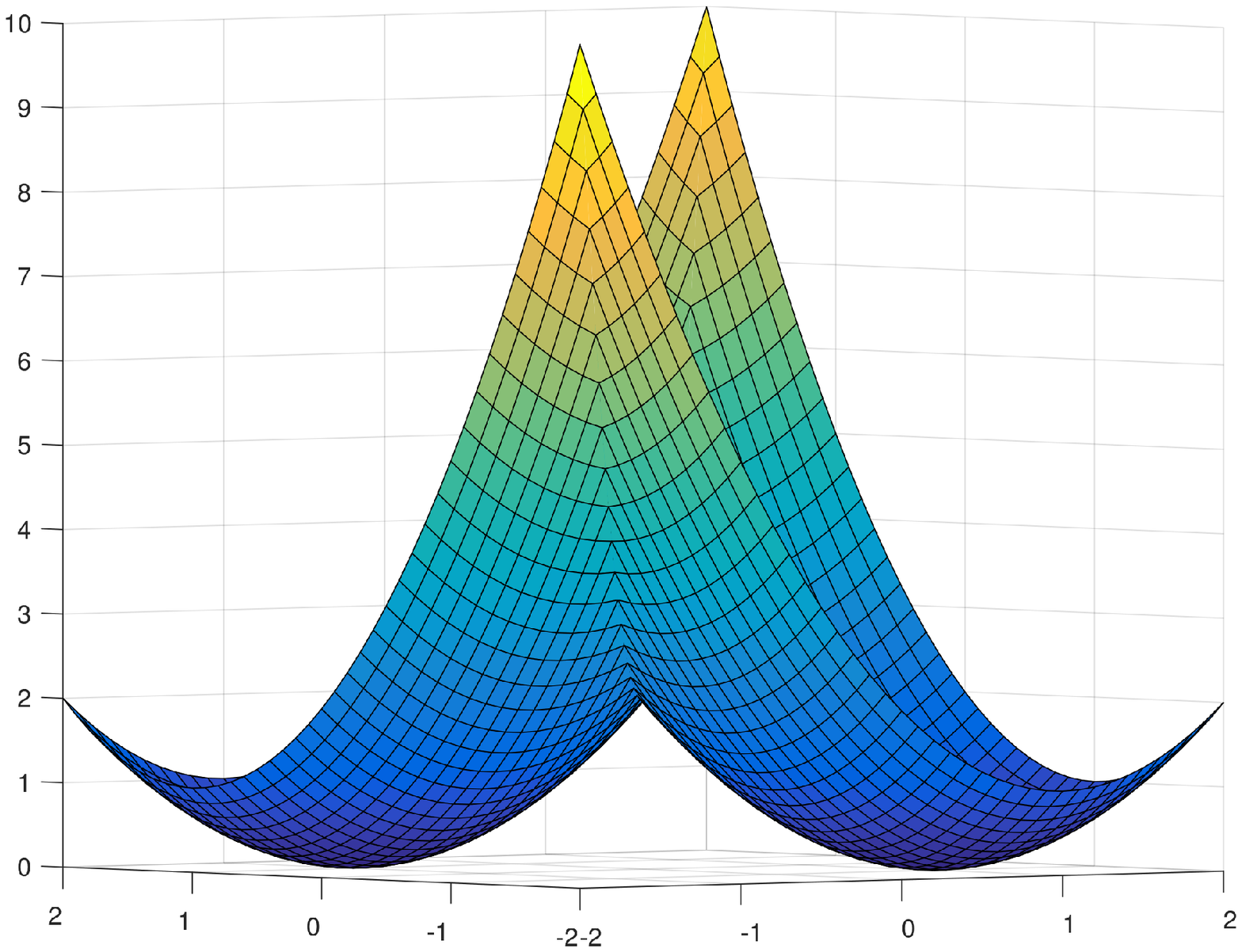}
\label{fig:quadraticsurf}
}
\hfil
\subfigure[Expected loss of RWF]{
\includegraphics[width=1.8in]{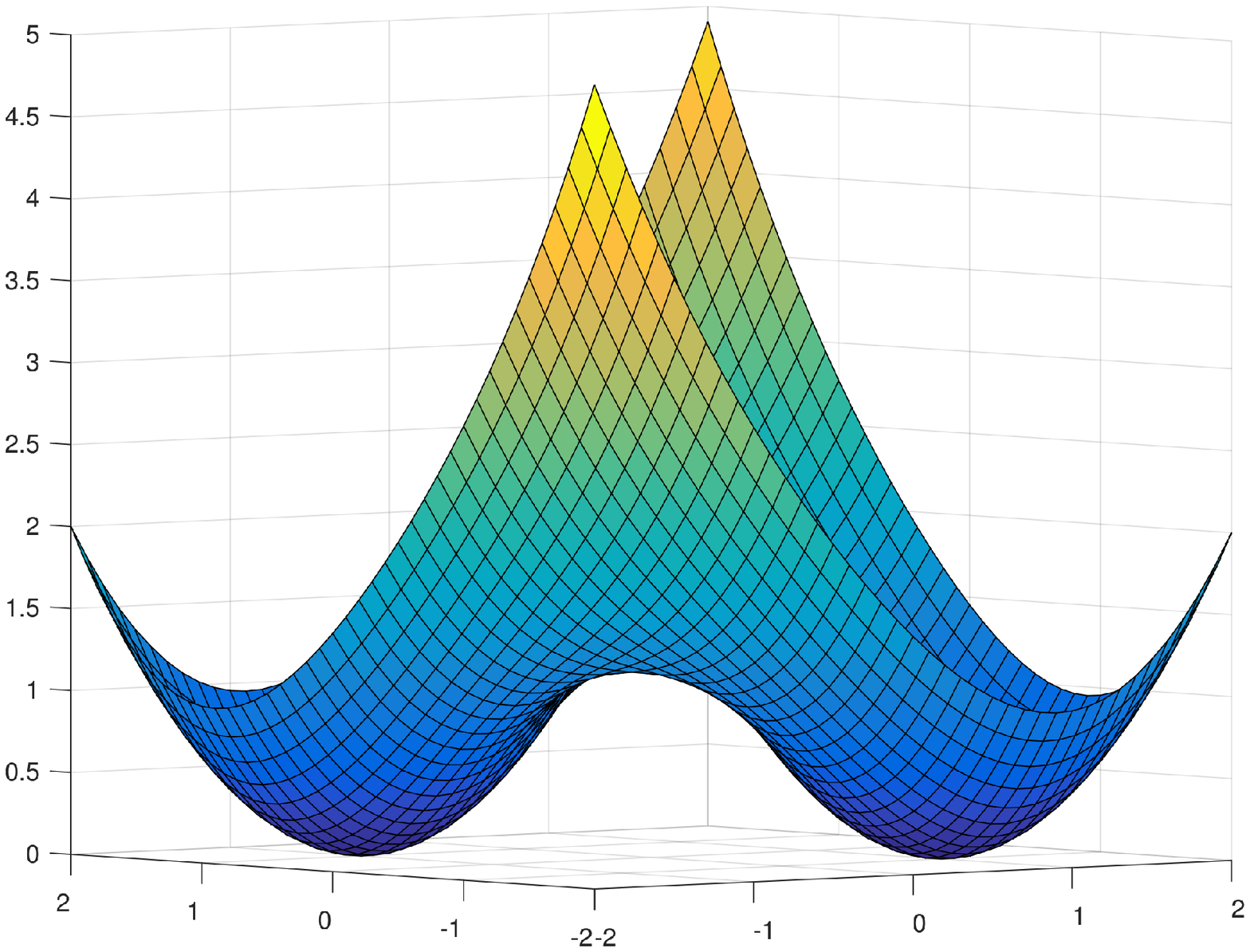}
\label{fig:rwfasym}
}
\hfil
\subfigure[Expected loss of WF]{
\includegraphics[width=1.8in]{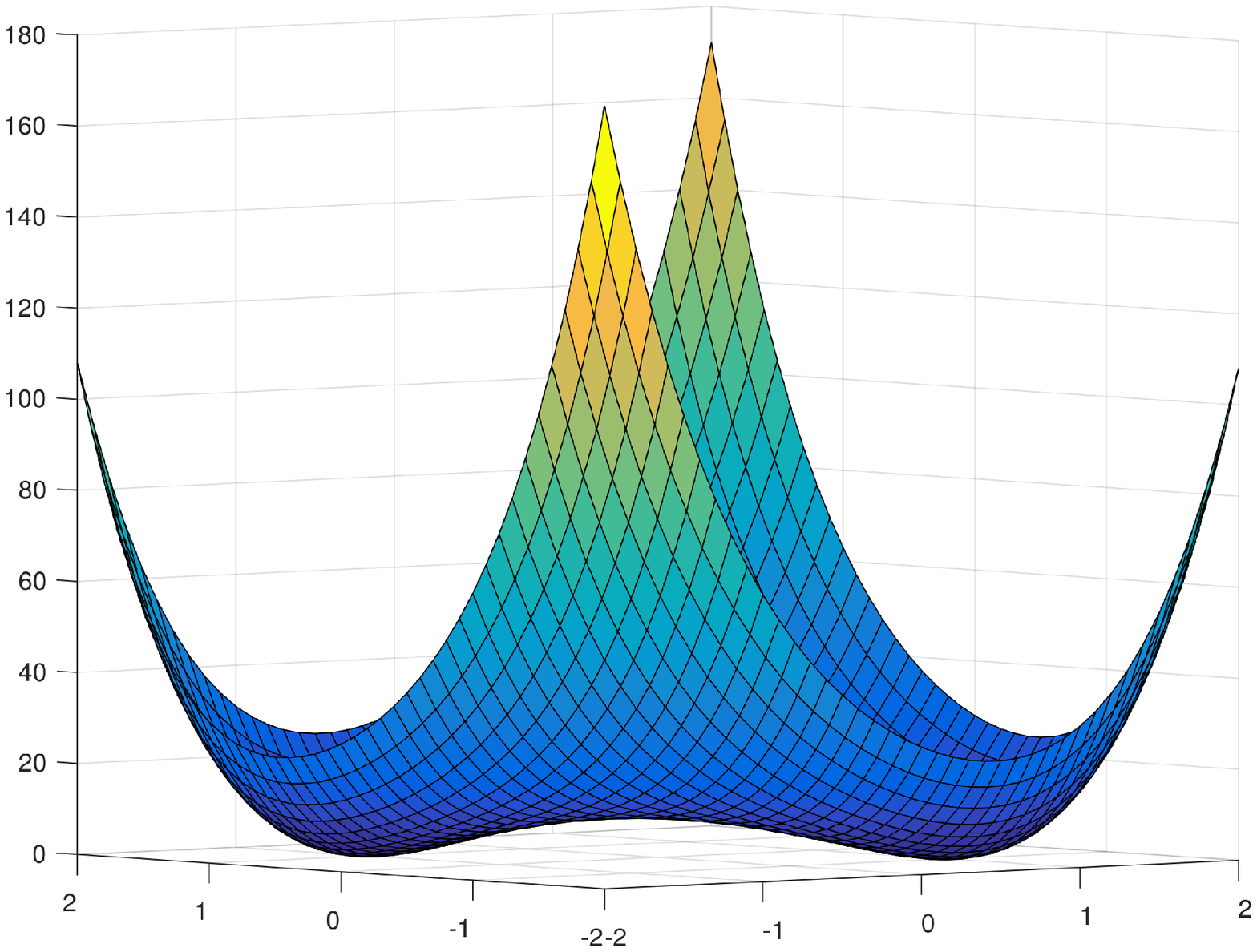}
\label{fig:wfasym}}
\caption{(a) Surface of quadratic function $f(z)=\min \{(\bz-\bx)^T(\bz-\bx), (\bz+\bx)^T(\bz+\bx)\}$ with $\bx=[1 -1]^T$. (b) Expected loss function of RWF for $\bx=[1 -1]^T$. (c) Expected loss function of WF for $\bx=[1 -1]^T$.}
\label{fig:losssurface}
\end{figure}

Furthermore, Figure \ref{fig:losssurface} illustrates a quadratic function $f(\cdot)$ (Figure \ref{fig:quadraticsurf}), the expected loss surface of RWF (Figure \ref{fig:rwfasym}, and see Appendix \ref{supp:whyfast} for expression), and the expected loss surface for WF  (Figure \ref{fig:wfasym}, and see Appendix \ref{supp:whyfast} for expression). It can be seen that the loss of RWF rather than the loss of WF has a similar curvature to the quadratic function around the global optimums, which justifies its better performance than WF.

\subsection{Geometric Convergence of RWF}\label{sec:convergence}
We characterize the convergence of RWF in the following theorem.
\begin{theorem}\label{th:mainthm}
Consider the problem of solving any given $\bx\in \bbR^n$ from a system of equations \eqref{eq:model} with Gaussian measurement vectors. 
There exist some universal constants $\mu_0>0$ ($\mu_0$ can be set as $0.8$ in practice), $0<\rho,\nu<1$ and $c_0,c_1,c_2>0$ such that if $m\ge c_0 n$ and $\mu<\mu_0$, then with probability at least $1-c_1\exp(-c_2 m)$, Algorithm \ref{alg:rwf} yields  
\begin{flalign}
	\dist(\bz^{(t)},\bx)\le \nu(1-\rho)^t \|\bx\|, \quad \forall t\in \bbN.
\end{flalign}
\end{theorem}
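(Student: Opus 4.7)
The plan is to combine the initialization bound of Proposition~\ref{prop:init} with a one-step contraction argument and then iterate. Concretely, I will prove a local Regularity Condition: there exist constants $\epsilon_0,\alpha,\beta>0$ such that with probability at least $1-c_1\exp(-c_2 m)$ whenever $m\ge c_0 n$,
\begin{equation*}
\langle \nabla\ell(\bz),\bh\rangle \;\ge\; \frac{1}{\alpha}\|\bh\|^2 + \frac{1}{\beta}\|\nabla\ell(\bz)\|^2
\end{equation*}
holds \emph{uniformly} for every $\bz$ with $\|\bh\|\le\epsilon_0\|\bx\|$, where $\bh:=\bz-\bx$ after aligning the global sign. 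Proposition~\ref{prop:init} places $\bz^{(0)}$ in this basin when $\delta\le\epsilon_0$, and expanding $\|\bz^{(t+1)}-\bx\|^2=\|\bh^{(t)}\|^2-2\mu\langle\nabla\ell(\bz^{(t)}),\bh^{(t)}\rangle+\mu^2\|\nabla\ell(\bz^{(t)})\|^2$ together with the Regularity Condition yields, for any $\mu\le 2/\beta$, the contraction $\|\bz^{(t+1)}-\bx\|^2\le(1-2\mu/\alpha)\|\bh^{(t)}\|^2$, which iterates to the claimed rate with $\mu_0=2/\beta$, $1-\rho=\sqrt{1-2\mu/\alpha}$, and $\nu=\delta$.

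The central algebraic observation is a decomposition of the update direction. Writing $\cI(\bz):=\{i:\sgn(\ba_i^T\bz)\ne\sgn(\ba_i^T\bx)\}$, for $i\notin\cI(\bz)$ we have $\ba_i^T\bz-y_i\sgn(\ba_i^T\bz)=\ba_i^T\bh$, while for $i\in\cI(\bz)$ the same quantity equals $\ba_i^T\bh+2\ba_i^T\bx$; hence
\begin{equation*}
\nabla\ell(\bz)\;=\;\left(\frac{1}{m}\sum_{i=1}^m \ba_i\ba_i^T\right)\bh\;+\;\frac{2}{m}\sum_{i\in\cI(\bz)}(\ba_i^T\bx)\,\ba_i.
\end{equation*}
The first summand is close to $\bh$ in both norm and inner product with $\bh$ by standard concentration of $\tfrac{1}{m}\sum\ba_i\ba_i^T$ around $\bI_n$ once $m\gtrsim n$; on its own it would deliver the Regularity Condition with constants matching a least-squares problem, so the task reduces to showing that the mis-sign term is strictly dominated.

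This is where Lemma~\ref{clm:samesign} is pivotal. Conditioning on $(\ba_i^T\bx)^2=t\|\bx\|^2$, the probability of mis-signing is at most $\erfc(\sqrt{t}\|\bx\|/(2\|\bh\|))$, which decays rapidly in $t$ once $r:=\|\bh\|/\|\bx\|$ is small. Integrating against the $\chi^2_1$ density of $(\ba_i^T\bx)^2/\|\bx\|^2$ yields $\mE\big[\tfrac{1}{m}\sum_{i\in\cI(\bz)}(\ba_i^T\bx)^2\big]=O(r\cdot\|\bh\|^2)$, which is arbitrarily smaller than $\|\bh\|^2$ once $\epsilon_0$ is taken small. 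A Cauchy--Schwarz split bounds $\|\tfrac{2}{m}\sum_{i\in\cI(\bz)}(\ba_i^T\bx)\ba_i\|$ and its inner product with $\bh$ by $O(\sqrt{r})\,\|\bh\|$ using the operator norm of $\tfrac{1}{m}\sum_i\ba_i\ba_i^T$, and combining with the first summand establishes the Regularity Condition pointwise.

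The hard step is upgrading these pointwise bounds to a \emph{uniform} guarantee over the whole basin, since each iterate $\bz^{(t)}$ depends on the $\ba_i$'s through the gradient recursion. My plan is an $\epsilon$-net argument: cover $\{\bh:\|\bh\|\le\epsilon_0\|\bx\|\}$ by a net of size $\exp(O(n))$, apply the concentration bounds with a union bound, and transfer to all $\bz$ by noting that the piecewise-linear map $\bz\mapsto\nabla\ell(\bz)$ has local variation controlled by $\|\tfrac{1}{m}\sum_i\ba_i\ba_i^T\|=O(1)$, with the jumps across the hyperplanes $\{\ba_i^T\bz=0\}$ absorbed into a mild enlargement of constants. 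For the exponential factor $\exp(-c_2 m)$ to beat the $\exp(O(n))$ net, one needs $m\ge c_0 n$, matching the statement; the absence of a $\log n$ factor (present in WF) traces back to the integrands here being second- rather than fourth-order in $\ba_i^T\bz$, and thus admitting sub-exponential instead of sub-Weibull tail bounds.
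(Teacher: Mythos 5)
Your overall architecture is exactly the paper's: initialization via Proposition~\ref{prop:init}, a Regularity Condition of the form \eqref{eq:RC} uniformly on a ball $\|\bh\|\le\epsilon_0\|\bx\|$, and the standard one-step contraction iterated. Your decomposition of $\nabla\ell(\bz)$ into a least-squares part $(\tfrac1m\sum_i\ba_i\ba_i^T)\bh$ plus the sign-mismatch term over $\cI(\bz)$ is the same splitting the paper uses (its set $\cS$ is your $\cI(\bz)$). Where you deviate is in bounding the mismatch term: you integrate the $\erfc$ bound of Lemma~\ref{clm:samesign} against the $\chi_1^2$ density to get $\mE[\tfrac1m\sum_{i\in\cI}(\ba_i^T\bx)^2]=O(r)\|\bh\|^2$, whereas the paper uses the deterministic implication that $i\in\cS$ forces $|\ba_i^T\bx|<|\ba_i^T\bh|$, reducing everything to $\tfrac1m\sum_i(\ba_i^T\bh)^2\bone_{\{|\ba_i^T\bx|<|\ba_i^T\bh|\}}$ and a numerically evaluated bivariate Gaussian integral (the constant $0.13$ at $r=1/10$). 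Your version is fine pointwise and even gives a vanishing-in-$r$ perturbation rather than a fixed constant.

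The genuine gap is in the uniformization. You correctly identify it as the hard step, but your resolution --- cover the ball by an $\exp(O(n))$ net and absorb "the jumps across the hyperplanes $\{\ba_i^T\bz=0\}$ into a mild enlargement of constants" --- does not go through as stated. The map $\bz\mapsto\cI(\bz)$ is not controlled by $\|\tfrac1m\sum_i\ba_i\ba_i^T\|$: for an index $i$ that enters $\cI(\bz)\setminus\cI(\bz_0)$ when you move from a net point $\bz_0$ to a nearby $\bz$, the only a priori control is $|\ba_i^T\bz_0|\le\|\ba_i\|\,\|\bz-\bz_0\|\approx\sqrt{n}\,\epsilon'$, so the newly admitted terms $(\ba_i^T\bx)^2$ are bounded by roughly $2n\epsilon'^2+2(\ba_i^T\bh_0)^2$. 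Making the first contribution negligible forces a net resolution $\epsilon'\lesssim\|\bh\|/\sqrt{n}$, hence a net of cardinality $\exp(O(n\log n))$ and sample complexity $m\gtrsim n\log n$ --- which loses exactly the improvement over WF that the theorem claims. The paper's proof of Lemma~\ref{lem:lemma2} avoids this by mollifying the indicator with the random Lipschitz functions $\chi_i$ and invoking the uniform rank-two $\ell_1$ restricted-isometry bound $\tfrac1m\|\cA(\bh\bh^T-\bh_0\bh_0^T)\|_1\le c_2\|\bh\bh^T-\bh_0\bh_0^T\|_F$ from \cite{chen2015solving}, which permits a constant relative net resolution and hence $m=O(n)$. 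You would need this (or an equivalent chaining device) to close your argument; without it the theorem as stated is not proved.
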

\begin{proof}[Outline of the Proof]
We outline the proof here with details relegated to Appendix \ref{supp:convergence}. Compared to WF and TWF, our proof requires new development of bounding techniques to deal with nonsmoothness, but is much simpler due to the lower-order loss function that RWF relies on. 

We first introduce a global phase notation for real case as follows:
\begin{flalign}
	\Phi(z): =\begin{cases}
0, \;\; &\text{ if } \|\bz-\bx\|\le \|\bz+\bx\|,\\
\pi, & \text{otherwise.}
\end{cases}
\end{flalign}
For the sake of simplicity, we let $\bz$ be $e^{-j\Phi(\bz)}\bz$, which indicates that $\bz$ is always in the neighborhood of $\bx$.

Here, the central idea is to show that within the neighborhood of global optimums, RWF satisfies the {\em Regularity Condition} $\mathsf{RC}(\mu,\lambda,c)$ \cite{chen2015solving}, i.e.,
\begin{flalign}
\left\langle \nabla \ell(\bz), \bz-\bx \right\rangle \ge \frac{\mu}{2} \left\|\nabla \ell(\bz)\right\|^2+\frac{\lambda}{2} \|\bz-\bx\|^2 \label{eq:RC}
\end{flalign}
for all $\bz$ obeying $\|\bz-\bx\|\le c \|\bx\|$, where $0<c<1$ is some constant. Then, as shown in \cite{chen2015solving}, once the initialization lands into this neighborhood, geometric convergence can be guaranteed, i.e., 
\begin{flalign}
	\dist^2\left(\bz+\mu\nabla\ell(\bz),\bx\right)\le (1-\mu\lambda)\dist^2(\bz,\bx),
\end{flalign}
for any $\bz$ with $\|\bz-\bx\|\le c\|\bx\|$.


Lemmas \ref{lem:lemma1} and \ref{lem:lemma2} in Appendix \ref{supp:convergence} yield that 
\[\left\langle \nabla \ell(\bz), \bz-\bx \right\rangle\ge (1-0.26-2\epsilon)\|\bz-\bx\|^2=(0.74-2\epsilon)\|\bz-\bx\|^2.\]
And Lemma \ref{lem:lemma3} in Appendix \ref{supp:convergence} further yields that 
\begin{flalign}
	\|\nabla \ell(\bz)\|\le (1+\delta)\cdot 2\|\bz-\bx\|. \label{eq:lemma3upper}
\end{flalign}
Therefore, the above two bounds imply that Regularity Condition \eqref{eq:RC} holds for $\mu$ and $\lambda$ satisfying
\begin{flalign} \label{eq:mulambda}
0.74-2\epsilon\ge \frac{\mu}{2}\cdot 4(1+\delta)^2+\frac{\lambda}{2}.
\end{flalign} 
\end{proof}

We note that \eqref{eq:mulambda} implies an upper bound $\mu\le \frac{0.74}{2}=0.37$, by taking $\epsilon$ and $\delta$ to be sufficiently small. This suggests a range to set the step size in Algorithm \ref{alg:rwf}. However, in practice, $\mu$ can be set much larger than such a bound, say $0.8$, while still keeping the algorithm convergent. This is because the coefficients in the proof are set for convenience of proof rather than being tightly chosen.

Theorem \ref{th:mainthm} indicates that RWF recovers the true signal with $\cO(n)$ samples, which is order-level optimal. Such an algorithm improves the  sample complexity $\cO(n\log n)$ of WF. Furthermore, RWF does not require truncation of weak samples in the gradient step to achieve the same sample complexity as TWF. This is mainly because RWF benefits from the lower-order loss function given in \eqref{eq:rushloss1}, the curvature of which behaves similarly to the least-squares loss function locally as we explain in Section \ref{sec:gradient}.


Theorem \ref{th:mainthm} also suggests that RWF converges geometrically at a constant step size. To reach $\epsilon-$accuracy, it requires computational cost of $\cO(mn\log 1/\epsilon)$ flops, which is better than WF ($\cO(mn^2\log (1/\epsilon)$). Furthermore, it does not require truncation in gradient steps to reach the same computational cost as TWF. Numerically, as we demonstrate in Section \ref{sec:algcomp}, RWF is two times faster than TWF and four to six times faster than WF in terms of both iteration count and time cost in various examples.


\subsection{Stability to Bounded Noise}\label{sec:stability}

We have established that RWF guarantees exact recovery with geometric convergence for noise-free case. We now study RWF in the presence of noise. Suppose the measurements are corrupted by bounded noise, and are given by
\begin{flalign}
y_i=|\ba_i^T\bx|+w_i, \quad 1\le i \le m, 	\label{eq:noisymodel}
\end{flalign}
where $\{w_i\}_{i=1}^m$ denote the additive noise. Then the following theorem shows that RWF is robust under such noise corruption.
\begin{theorem}
Consider the model \eqref{eq:noisymodel}. Suppose that the measurement vectors are independently Gaussian, i.e., $\ba_i \sim \cN(0,\bI)$ for $1\le i \le m$, and the noise is bounded, i.e., $\|\bw\|/\sqrt{m}\le c\|\bx\|$. Then there exist some universal constants $\mu_0>0$ ($\mu_0$ can be set as $0.8$ in practice), $0<\rho<1$ and $c_0,c_1,c_2>0$ such that if $m\ge c_0 n$ and $\mu<\mu_0$, then with probability at least $1-c_1\exp(-c_2 m)$, Algorithm \ref{alg:rwf} yields
\begin{flalign}
	\dist(\bz^{(t)},\bx)\lesssim \frac{\|\bw\|}{\sqrt{m}}+(1-\rho)^t\|\bx\|, \quad \forall t\in \bbN,
\end{flalign}
for some $\rho\in (0,1)$. 
\end{theorem}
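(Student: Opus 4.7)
The plan is to carry the noisy update through the regularity-condition machinery already developed for Theorem 1, treating the noise as a controlled additive perturbation of the gradient. Since $y_i = |\ba_i^T\bx| + w_i$, the RWF update direction decomposes as
\[\nabla \ell(\bz) \;=\; \nabla \ell_0(\bz) \;-\; \be(\bz),\]
where $\nabla \ell_0(\bz) = \frac{1}{m}\sum_{i=1}^m \bigl(\ba_i^T\bz - |\ba_i^T\bx|\,\sgn(\ba_i^T\bz)\bigr)\ba_i$ is exactly the noise-free gradient to which the regularity condition \eqref{eq:RC} applies, and $\be(\bz) = \frac{1}{m}\sum_{i=1}^m w_i \,\sgn(\ba_i^T\bz)\,\ba_i$ captures the noise contribution.

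First I would obtain a \emph{uniform} bound on the perturbation $\be(\bz)$. Writing $\be(\bz) = \frac{1}{m}\bA^T (D_{\bz}\bw)$ where $D_{\bz}$ is the diagonal matrix with entries $\sgn(\ba_i^T\bz)$, we have $\|D_{\bz}\bw\| = \|\bw\|$ regardless of $\bz$. Combined with the standard Gaussian operator-norm bound $\|\bA\| \le (1+\delta)(\sqrt{m}+\sqrt{n})$, which holds with probability at least $1 - 2\exp(-\Omega(m))$ once $m \gtrsim n$, this yields $\sup_{\bz}\|\be(\bz)\| \le C_e \|\bw\|/\sqrt{m}$ for some absolute constant $C_e$. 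Notably no union bound over $\bz$ is needed, which is the key simplification afforded by the sign structure of the nonsmooth loss.

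Next I would plug the decomposition into the one-step expansion
\[\dist^2(\bz^{(t+1)},\bx) \;=\; d_t^2 \;-\; 2\mu\langle \nabla\ell_0(\bz^{(t)}) - \be(\bz^{(t)}),\,\bz^{(t)}-\bx\rangle \;+\; \mu^2\|\nabla\ell_0(\bz^{(t)})-\be(\bz^{(t)})\|^2,\]
with $d_t := \dist(\bz^{(t)},\bx)$. I would then apply the noise-free regularity condition (Lemmas 1--3 in the proof of Theorem 1) to $\nabla\ell_0$, use Cauchy--Schwarz to bound $|\langle \be(\bz^{(t)}),\bz^{(t)}-\bx\rangle| \le \|\be(\bz^{(t)})\|\,d_t$, and $\|\nabla\ell_0 - \be\|^2 \le 2\|\nabla\ell_0\|^2 + 2\|\be\|^2$ on the quadratic term. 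Splitting the remaining cross-term via $2\mu\|\be\|d_t \le \gamma d_t^2 + \mu^2\|\be\|^2/\gamma$ with $\gamma$ chosen small enough to preserve contraction, one arrives at a perturbed recursion
\[d_{t+1}^2 \;\le\; (1-\rho)\,d_t^2 \;+\; C\,\|\bw\|^2/m.\]
Unrolling gives $d_t^2 \le (1-\rho)^t d_0^2 + (C/\rho)\,\|\bw\|^2/m$, so $d_t \lesssim (1-\rho)^{t/2}\|\bx\| + \|\bw\|/\sqrt{m}$, which matches the stated bound (up to renaming $\rho$).

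The remaining ingredient is to confirm that the spectral initialization still lands in the basin where \eqref{eq:RC} applies. The matrix $\bY$ in \eqref{eq:init_TRWF} is perturbed additively by $\frac{1}{m}\sum_i w_i \ba_i\ba_i^*\bone_{\{\cdot\}}$, whose operator norm is $O(\|\bw\|/\sqrt{m})$ by a standard covariance concentration argument together with $\|\bA\|\le (1+\delta)\sqrt{m}$; Davis--Kahan then transfers this to a small direction error, and the norm estimate $\lambda_0$ is likewise perturbed only by $O(\|\bw\|/\sqrt{m})$, so the conclusion of Proposition 1 survives intact provided $c$ in the hypothesis $\|\bw\|/\sqrt{m}\le c\|\bx\|$ is taken small. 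The principal obstacle I anticipate is precisely the sign-dependent, data-coupled nature of $\be(\bz)$; the observation that the diagonal sign matrix preserves $\|\bw\|$ is what makes a dimension-free uniform bound possible and is ultimately why a clean $\|\bw\|/\sqrt{m}$ stability rate emerges despite the nonsmoothness of the loss.
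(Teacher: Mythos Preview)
Your proposal is correct and rests on the same gradient decomposition $\nabla\ell(\bz)=\nabla\ell_0(\bz)-\be(\bz)$ and the same uniform bound $\|\be(\bz)\|\le (1+\delta)\|\bw\|/\sqrt{m}$ (via the operator-norm bound $\|\bA\|\le (1+\delta)\sqrt{m}$ and the sign matrix being an isometry) that the paper uses. The one substantive difference is in how these ingredients are assembled. You run a single perturbed-contraction recursion $d_{t+1}^2\le (1-\rho)d_t^2+C\|\bw\|^2/m$, absorbing the $O(\|\be\|\,d_t)$ cross terms by Young's inequality and then unrolling. The paper instead splits into two regimes: when $\|\bh\|\ge c_3\|\bw\|/\sqrt{m}$ they show the \emph{noisy} gradient itself satisfies the regularity condition (because there $\|\be\|\le \frac{1+\delta}{c_3}\|\bh\|$), so the step is a pure geometric contraction; when $\|\bh\|\le c_3\|\bw\|/\sqrt{m}$ they simply note each step has length $O(\|\bw\|/\sqrt{m})$, so the iterate cannot escape an $O(\|\bw\|/\sqrt{m})$ ball. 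Your single-recursion route is tidier and delivers the geometric-plus-floor bound in one stroke; the two-regime argument keeps the regularity-condition constants intact and avoids the mild step-size halving that your use of $\|\nabla\ell_0-\be\|^2\le 2\|\nabla\ell_0\|^2+2\|\be\|^2$ entails (you implicitly need $\mathsf{RC}(2\mu,\lambda',c)$ rather than $\mathsf{RC}(\mu,\lambda,c)$, which just shrinks $\mu_0$). On initialization the paper says only ``similar to the noise-free case''; your Davis--Kahan sketch is more explicit and entirely adequate. One point you left implicit that is worth stating is the induction showing the iterates remain in the $\frac{1}{10}\|\bx\|$-ball where Lemmas~2--3 apply; this follows immediately from your recursion together with the hypothesis $\|\bw\|/\sqrt{m}\le c\|\bx\|$ for $c$ small.
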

\begin{proof}
	See Appendix \ref{sec:proofofstability}.
\end{proof}
 The numerical result under the Poisson noise model in Section \ref{sec:algcomp} further corroborates the stability of RWF.

\section{Incremental Reshaped Wirtinger Flow}\label{sec:incremental}
In large sample size and online scenarios, stochastic algorithm is preferred due to its potential advantage of fast convergence and low computational complexity. Thus, in this section, we develop the stochastic algorithm based on RWF, named incremental reshaped Wirtinger flow (IRWF). We show that IRWF guarantees exact recovery with linear convergence rate. We further draw the connection between IRWF and the Kaczmarz-PR algorithm recently developed for phase retrieval \cite{wei2015solving, li2015phase, chi2016kaczmarz}.

\subsection{IRWF: Algorithm and Convergence}\label{sec:irwf}
We describe IRWF in Algorithm \ref{alg:irwf}. More specifically, IRWF applies the same initialization step  as in RWF, but calculates each gradient update using only one sample selected randomly.

\begin{algorithm}[th]
\caption{Incremental Reshaped Wirtinger Flow (IRWF)}\label{alg:irwf}

\textbf{Input}: $\by=\{y_i\}_{i=1}^m$, $\{\ba_i\}_{i=1}^m$; \\
\textbf{Initialization}: Same as in RWF (Algorithm \ref{alg:rwf}); \\

 \textbf{Gradient loop}: for $t=0:T-1$ do\\
 Choose $i_t$ uniformly at random from $\{1,2,\ldots, m\}$, and let
  \begin{flalign}
		\bz^{(t+1)}=\bz^{(t)}- \mu\left(\ba_{i_t}^*\bz^{(t)}-y_{i_t}\cdot\frac{\ba_{i_t}^*\bz^{(t)}}{|\ba_{i_t}^*\bz^{(t)}|} \right) \ba_{i_t}, \label{eq:incrementalupdate}
\end{flalign}
\textbf{Output} $\bz^{(T)}$.
\end{algorithm}

We characterize the convergence of IRWF in the following theorem.
\begin{theorem}\label{th:incremental}
Consider the problem of solving any given $\bx\in \bbR^n$ from a system of equations \eqref{eq:model} with Gaussian measurement vectors. 
There exist some universal constants $0<\rho,\rho_0, \nu<1$ and $c_0,c_1,c_2>0$ such that if $m\ge c_0 n$ and $\mu=\rho_0/n$, then with probability at least $1-c_1\exp(-c_2 m)$, Algorithm \ref{alg:irwf} yields  
\begin{flalign}
	\mE_{\mathcal{I}^t}\left[\dist^2(\bz^{(t)},\bx)\right]\le \nu\left(1-\frac{\rho}{n}\right)^t \|\bx\|^2, \quad \forall t\in \bbN,		\label{eq:incremental}
\end{flalign}
where  $\mE_{\cI^t}[\cdot]$ denotes the expectation with respect to algorithm randomness $\cI^t=\{i_1,i_2,\ldots, i_t\}$ conditioned on the high probability event of random measurements $\{\ba_i\}_{i=1}^m$.
\end{theorem}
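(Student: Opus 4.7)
The plan is to establish a one-step contraction in conditional expectation and then iterate. Let $\bh^{(t)} = \bz^{(t)} - \bx$ (after the implicit phase alignment of Algorithm \ref{alg:rwf}), and write the per-sample gradient as $\bg_i(\bz) = \bigl(\ba_i^T\bz - y_i\,\sgn(\ba_i^T\bz)\bigr)\ba_i = \sgn(\ba_i^T\bz)\bigl(|\ba_i^T\bz|-|\ba_i^T\bx|\bigr)\ba_i$, so that $\mE_{i_t}[\bg_{i_t}(\bz)] = \nabla\ell(\bz)$ for the batch gradient from Section \ref{sec:gradient}. Expanding the squared update gives
\begin{equation*}
\mE_{i_t}\bigl[\|\bz^{(t+1)}-\bx\|^2 \,\big|\, \bz^{(t)}\bigr] = \|\bh^{(t)}\|^2 - 2\mu\,\langle \nabla\ell(\bz^{(t)}), \bh^{(t)}\rangle + \mu^2\,\mE_{i_t}\bigl[\|\bg_{i_t}(\bz^{(t)})\|^2\bigr].
\end{equation*}

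First I would condition on the high-probability event that (i) the initialization from Proposition \ref{prop:init} lies in the basin $\|\bh^{(0)}\|\le c\|\bx\|$, and (ii) the measurement ensemble $\{\ba_i\}$ is such that the Regularity Condition \eqref{eq:RC} established in the proof of Theorem \ref{th:mainthm} holds uniformly on this basin, yielding a lower bound $\langle\nabla\ell(\bz),\bh\rangle \ge c_1\|\bh\|^2$ with, say, $c_1 \approx 0.37$. On the same event, the proof of Theorem \ref{th:mainthm} also controls $\|\nabla\ell(\bz)\|\le 2(1+\delta)\|\bh\|$; I will reuse the concentration ingredients behind it.

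The core new step is an upper bound on the stochastic second moment
\begin{equation*}
\mE_{i_t}\bigl[\|\bg_{i_t}(\bz)\|^2\bigr] = \frac{1}{m}\sum_{i=1}^m \bigl(|\ba_i^T\bz|-|\ba_i^T\bx|\bigr)^2 \|\ba_i\|^2.
\end{equation*}
Using $\bigl(|\ba_i^T\bz|-|\ba_i^T\bx|\bigr)^2 \le (\ba_i^T\bh)^2\cdot \bone_{\text{sign match}} + (\ba_i^T(\bz+\bx))^2\cdot \bone_{\text{sign mismatch}}$ together with the Gaussian concentration $\|\ba_i\|^2 \approx n$, and applying Lemma \ref{clm:samesign} to control the measure of sign-mismatched indices, I would show $\mE_{i_t}[\|\bg_{i_t}(\bz)\|^2] \le C\,n\,\|\bh\|^2$ uniformly for all $\bz$ in the basin, when $m\ge c_0 n$. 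The argument mirrors the bound \eqref{eq:lemma3upper} but retains the extra factor $\|\ba_i\|^2\sim n$ characteristic of sampling a single term rather than averaging.

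Combining these two bounds with step size $\mu = \rho_0/n$ yields
\begin{equation*}
\mE_{i_t}\bigl[\|\bz^{(t+1)}-\bx\|^2 \,\big|\, \bz^{(t)}\bigr] \le \Bigl(1 - \frac{2\rho_0 c_1 - \rho_0^2 C}{n}\Bigr)\|\bh^{(t)}\|^2,
\end{equation*}
which is a contraction by $1-\rho/n$ for $\rho_0$ sufficiently small. Taking total expectation over $\cI^t$ iteratively gives \eqref{eq:incremental}, provided $\bz^{(t)}$ stays in the basin almost surely along the trajectory. The main obstacle will be exactly this basin-containment issue: the contraction only gives control in expectation, not almost surely, so a single bad sample could in principle kick $\bz^{(t)}$ outside the neighborhood where the Regularity Condition holds. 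I plan to address this by either (a) showing that the per-step displacement $\mu\|\bg_{i_t}(\bz)\| \lesssim \|\bh\|$ is so small that $\bz^{(t+1)}$ cannot escape a slightly enlarged basin, or (b) applying a supermartingale stopping-time argument: on the event that the trajectory has never left the basin, the conditional contraction holds, and a union bound using $\mu=\rho_0/n$ shows this event has high probability for all $t$ up to the desired horizon. The rest of the technical work is routine: uniform control of the needed concentration statements follows from standard covering/sub-exponential arguments identical in spirit to those used for Theorem \ref{th:mainthm}.
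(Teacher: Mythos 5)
Your proposal matches the paper's proof in all essentials: the same conditional expansion of $\|\bz^{(t+1)}-\bx\|^2$ into a cross term and a second-moment term, the same reuse of Lemmas~\ref{lem:lemma1} and~\ref{lem:lemma2} to lower-bound the cross term by $(0.74-3\epsilon)\|\bh\|^2$, and the same $O(n)\|\bh\|^2$ bound on the stochastic second moment obtained from the event $\max_{1\le i\le m}\|\ba_i\|^2\le 6n$ (probability at least $1-m\exp(-1.5n)$), yielding a per-step contraction factor $1-\rho/n$ for $\mu=\rho_0/n$ that is then iterated back to the initialization. Two minor points of comparison. For the second moment, the paper needs no sign-mismatch case analysis and no appeal to Lemma~\ref{clm:samesign}: it writes $(\ba_i^T\bz-y_i\sgn(\ba_i^T\bz))^2=(\ba_i^T\bh)^2+4(\ba_i^T\bx)(\ba_i^T\bz)\bone_{\{i\in\cS\}}$ and simply drops the second term, which is negative on $\cS$ (equivalently, the reverse triangle inequality gives $\bigl||\ba_i^T\bz|-|\ba_i^T\bx|\bigr|\le|\ba_i^T\bh|$ outright). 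As for basin containment: the concern you raise is legitimate, but the paper does not engage with it at all --- it iterates the conditional one-step bound without comment. Note that your fix (a) does not close the gap as stated, since $\mu\|\ba_{i_t}\|^2$ can be as large as $6\rho_0$, so a single step can inflate $\|\bh\|$ by a constant factor and in principle exit the radius-$\frac{1}{10}\|\bx\|$ region where Lemma~\ref{lem:lemma2} is invoked; the stopping-time/supermartingale route (b) is the one that would actually make the induction airtight, and including it would strengthen the argument relative to what the paper provides.
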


 We suggest the step size $\rho_0=1$ in practice.
\begin{proof}
The proof is relegated to Appendix \ref{sec:proofofincremental}, which uses technical lemmas established for proving Theorem \ref{th:mainthm}.
\end{proof}
 
Theorem \ref{th:incremental} establishes that IRWF achieves linear convergence to the global optimum. For general objectives, it is not anticipated that incremental/stochastic first-order method achieves linear convergence  due to the variance of stochastic gradient. However, for our specific problem, the variance of stochastic gradient reduces as the estimate approaches the true solution, and hence a fixed step size can be employed and linear convergence can be established. Such a result is also established in \cite{kolte2016phase} for the stochastic algorithm based on TWF (ITWF). We comment further on comparison between our algorithm and ITWF in Section \ref{sec:compITWF}.  Another explanation may be due to the fact \cite{moulines2011non, needell2016stochastic} that stochastic gradient method yields linear convergence to the minimizer $\bx_\star$ when the objective $F(\bx)=\sum_{i}f_i(\bx)$ is a smooth and strongly convex function and  $\bx_\star$ minimizes all components $f_i(\bx)$. This may also hold for our objective \eqref{eq:rushloss} whose summands share a same minimizer, although it is neither convex nor smooth. 

\subsection{Minibatch IRWF: Algorithm and Convergence}\label{sec:birwf}
In oder to fully exploit the processing throughput of CPU/GPU, we develop a minibatch version of IRWF, described in Algorithm \ref{alg:birwf}.  The minibatch IRWF applies the same initialization step  as in RWF, but uses a minibatch of data for each gradient update in contrast to IRWF that uses only a single sample. 

\begin{algorithm}[th]
\caption{Minibatch Incremetnal Reshaped Wirtinger Flow (minibatch IRWF)}\label{alg:birwf}

\textbf{Input}: $\by=\{y_i\}_{i=1}^m$, $\{\ba_i\}_{i=1}^m$; \\
\textbf{Initialization}: Same as in RWF (Algorithm \ref{alg:rwf}); \\

 \textbf{Gradient loop}: for $t=0:T-1$ do\\
 Choose $\Gamma_t$ uniformly at random from the subsets of  $\{1,2,\ldots, m\}$ with cardinality $k$, and let
  \begin{flalign}
	\bz^{(t+1)}=\bz^{(t)}-\mu\cdot\bA_{\Gamma_t}^* \left(\bA_{\Gamma_t}\bz^{(t)}-\by_{\Gamma_t}\odot\phase (\bA_{\Gamma_t}\bz^{(t)})\right), \label{eq:birwfUpdate}
\end{flalign}
where $\bA_{\Gamma_t}$ is a matrix stacking $\ba_i^*$ for $i\in \Gamma_t$ as its rows,  $\by_{\Gamma_t}$ is a vector stacking $y_i$ for $i\in \Gamma_t$ as its elements, $\odot$ denotes element-wise product, and $\phase(\bz)$ denotes a phase vector of $\bz$.

\textbf{Output} $\bz^{(T)}$.
\end{algorithm}

We characterize the convergence of minibatch IRWF in the following theorem.
\begin{theorem}\label{th:blockincremental}
Consider the problem of solving any given $\bx\in \bbR^n$ from a system of equations \eqref{eq:model} with Gaussian measurement vectors. 
There exist some universal constants $0<\rho,\rho_0,\nu<1$ and $c_0,c_1,c_2>0$ such that if $m\ge c_0 n$ and $\mu=\rho_0/n$, then with probability at least $1-c_1\exp(-c_2 m)$, Algorithm \ref{alg:birwf} yields  
\begin{flalign}
	\mE_{\Gamma^t}\left[\dist^2(\bz^{(t)},\bx)\right]\le \nu\left(1-\frac{k\rho}{n}\right)^t \|\bx\|^2, \quad \forall t\in \bbN,		\label{eq:blockincremental}
\end{flalign}
where $\mE_{\Gamma^t}[\cdot]$ denotes the expectation with respect to algorithm randomness $\Gamma^t=\{\Gamma_1,\Gamma_2,\ldots, \Gamma_t\}$ conditioned on the high probability event of random measurements $\{\ba_i\}_{i=1}^m$.
\end{theorem}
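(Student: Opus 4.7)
The plan is to adapt the proof of Theorem \ref{th:incremental} to the minibatch setting, reusing both Proposition \ref{prop:init} for initialization and the Regularity Condition \eqref{eq:RC} established in the proof of Theorem \ref{th:mainthm} to control the cross term arising in the one-step analysis. Fix iteration $t$ and condition on $\bz^{(t)}$ lying in the basin $\|\bz^{(t)}-\bx\|\le c\|\bx\|$ obtained from spectral initialization. Writing $\bh^{(t)}:=e^{-j\Phi(\bz^{(t)})}\bz^{(t)}-\bx$, $v_i:=\ba_i^*\bz^{(t)}-y_i\,\sgn(\ba_i^*\bz^{(t)})$, and $\boldsymbol{g}_\Gamma:=\sum_{i\in\Gamma}v_i\ba_i$, the update \eqref{eq:birwfUpdate} reads $\bh^{(t+1)}=\bh^{(t)}-\mu\,\boldsymbol{g}_{\Gamma_t}$. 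Expanding the squared norm and taking expectation over the uniformly random $k$-subset $\Gamma_t\subset\{1,\ldots,m\}$, together with $\mE_{\Gamma_t}[\boldsymbol{g}_{\Gamma_t}]=k\,\nabla\ell(\bz^{(t)})$, yields
\begin{align*}
\mE_{\Gamma_t}\!\left[\|\bh^{(t+1)}\|^2\,\big|\,\bz^{(t)}\right]=\|\bh^{(t)}\|^2-2\mu k\left\langle\nabla\ell(\bz^{(t)}),\bh^{(t)}\right\rangle+\mu^2\,\mE_{\Gamma_t}\!\left[\|\boldsymbol{g}_{\Gamma_t}\|^2\right].
\end{align*}

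The cross term is bounded using the Regularity Condition and Lemma \ref{lem:lemma3} from the proof of Theorem \ref{th:mainthm}, which supply $\langle\nabla\ell(\bz^{(t)}),\bh^{(t)}\rangle\ge c_3\|\bh^{(t)}\|^2$ and $\|\nabla\ell(\bz^{(t)})\|\le c_4\|\bh^{(t)}\|$. For the variance term I would use the standard identity for sampling without replacement,
\begin{align*}
\mE_{\Gamma_t}\!\left[\|\boldsymbol{g}_{\Gamma_t}\|^2\right]=\frac{k(m-k)}{m(m-1)}\sum_{i=1}^m v_i^2\|\ba_i\|^2+\frac{k(k-1)}{m(m-1)}\left\|\sum_{i=1}^m v_i\ba_i\right\|^2.
\end{align*}
The second term equals $\tfrac{k(k-1)m}{m-1}\|\nabla\ell(\bz^{(t)})\|^2\le C_2 k^2\|\bh^{(t)}\|^2$ by Lemma \ref{lem:lemma3}, while the first is handled via the single-sample second-moment bound used for Theorem \ref{th:incremental}, which gives $\tfrac{1}{m}\sum_i v_i^2\|\ba_i\|^2\le C_1 n\|\bh^{(t)}\|^2$ with high probability on $\{\ba_i\}$. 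Combining yields $\mE_{\Gamma_t}[\|\boldsymbol{g}_{\Gamma_t}\|^2]\le C_3 kn\|\bh^{(t)}\|^2$ whenever $k\le n$.

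Inserting these two estimates and setting $\mu=\rho_0/n$ with $\rho_0>0$ a sufficiently small absolute constant delivers the per-step conditional contraction $\mE_{\Gamma_t}[\|\bh^{(t+1)}\|^2\mid\bz^{(t)}]\le(1-k\rho/n)\|\bh^{(t)}\|^2$ with $\rho:=\rho_0(2c_3-\rho_0 C_3)>0$. Iterating via the tower property of conditional expectation, together with an induction argument showing that every iterate stays in the basin with high probability (a supermartingale/union-bound argument analogous to the one for IRWF in Appendix \ref{sec:proofofincremental}, and slightly easier here because the per-step variance is smaller), yields the desired bound \eqref{eq:blockincremental}. The main obstacle will be the variance estimate: the diagonal part mirrors the single-sample analysis of Theorem \ref{th:incremental}, but the off-diagonal part does not vanish in expectation and must be rewritten in terms of the deterministic full-batch gradient so that Lemma \ref{lem:lemma3} applies. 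It is precisely this reduction that keeps the off-diagonal contribution at order $k^2\|\bh^{(t)}\|^2$ rather than $k^2 n\|\bh^{(t)}\|^2$, ensuring the linear speedup $k/n$ in the contraction rate for all minibatch sizes up to $k\lesssim n$.
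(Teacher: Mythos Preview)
Your proposal is correct and follows the same overall skeleton as the paper's argument in Appendix~\ref{suppsub:blockincremental}: expand $\|\bh^{(t+1)}\|^2$, take expectation over the random minibatch, control the cross term with the same inequalities that underlie the Regularity Condition (Lemmas~\ref{lem:lemma1}--\ref{lem:lemma2}), bound the quadratic term using $\max_i\|\ba_i\|^2\le 6n$, and then set $\mu=\rho_0/n$.

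The one substantive difference is in how the second moment $\mE_{\Gamma_t}[\|\boldsymbol{g}_{\Gamma_t}\|^2]$ is handled. The paper's step~(a) reduces this expectation to $\tfrac{k}{m}\sum_{i=1}^m\|\ba_i\|^2v_i^2$, i.e.\ it keeps only the diagonal part. You instead invoke the exact sampling-without-replacement identity, which retains the off-diagonal contribution $\tfrac{k(k-1)}{m(m-1)}\|\sum_i v_i\ba_i\|^2$, and then kill it with Lemma~\ref{lem:lemma3}. Your route is more careful: the equality claimed in the paper's step~(a) is not literally true for the quadratic term (the cross products $v_iv_j\ba_i^T\ba_j$ do not vanish in expectation over $\Gamma_t$), so an additional argument like yours is in fact needed. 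The payoff of your decomposition is that the off-diagonal piece is of order $k^2\|\bh^{(t)}\|^2$ rather than $kn\|\bh^{(t)}\|^2$, so after multiplying by $\mu^2=\rho_0^2/n^2$ it is dominated by the diagonal piece whenever $k\lesssim n$, preserving the $k/n$ speedup. Your remark about keeping iterates in the basin via a supermartingale argument is also a point the paper leaves implicit when it simply iterates Proposition~\ref{prop:oneiterateblock}.
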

\begin{proof}
See Appendix \ref{suppsub:blockincremental}.
\end{proof}
We suggest that $\rho_0=1$ in practice. 
%

\subsection{Connection to Kaczmarz Method for Phase Retrieval}\label{sec:kaczmarz}

\emph{Kaczmarz method} was originally developed for solving the linear equation systems $\bA\bx=\bb$ \cite{kaczmarz1937angenaherte}. In \cite{wei2015solving}, it was adapted to solve phase retrieval problem, which we refer to as \emph{Kaczmarz-PR}. It has been demonstrated in \cite{wei2015solving} that Kaczmarz-PR exhibits better empirical performance than error reduction (ER) \cite{gerchberg1972practical, fienup1982phase} and Wirtinger flow (WF) \cite{candes2015phase}. 
However, theoretical guarantee of Kaczmarz-PR has not been well established yet although Kaczmarz method for least-squares problem achieves linear convergence guarantee  \cite{strohmer2009randomized, zouzias2013randomized}.  For instance, \cite{wei2015solving} obtained a bound on the estimation error which can be as large as the signal energy no matter how many iterations are taken. \cite{li2015phase} requires infinite number of samples to establish the asymptotic convergence. 

In this section, we draw the connection between IRWF and Kaczmarz-PR, and thus the theoretical guarantee of Kaczmarz-PR can be established by adapting that of IRWF. This is analogous to the connection established in \cite{needell2016stochastic} between Kaczmarz method and stochastic gradient method when solving the least-squares problem. Here, the connection is interesting because RWF rather than WF and TWF connects to Kaczmarz-PR due to the lower-order loss function that RWF adopts.

To be more specific, the Kaczmarz-PR (Algorithm 3 in \cite{wei2015solving}) employs the following update rule
\begin{flalign}
	\bz^{(t+1)}=\bz^{(t)}- \frac{1}{\|\ba_{i_t}\|^2}\left(\ba_{i_t}^*\bz^{(t)}-y_{i_t}\cdot\frac{\ba_{i_t}^*\bz^{(t)}}{|\ba_{i_t}^*\bz^{(t)}|} \right) \ba_{i_t}, 	\label{eq:simpleKaczmarz}
\end{flalign}
where $i_t$ is selected either in a deterministic manner or randomly. We focus on the randomized Kaczmarz-PR where $i_t$ is selected uniformly at random.

Comparing \eqref{eq:simpleKaczmarz} and \eqref{eq:incrementalupdate}, the update rule of randomized Kaczmarz is a special case of  IRWF with step size $\mu$ replaced by $\frac{1}{\|\ba_{i_t}\|^2}$. Moreover, these two update rules are close if $\mu$ is set as suggested, i.e., $\mu=\frac{1}{n}$, because $\|\ba_{i_t}\|^2$ concentrates around $n$ by law of large numbers. As we demonstrate in empirical results (see Table \ref{tab:algcomp}), these two methods have similar performance as anticipated. Thus, following the convergence result  Theorem \ref{th:incremental} for IRWF, we have the convergence guarantee for the randomized Kaczmarz-PR as follows.
\begin{theorem}\label{cor:kaczmarz}
Assume the measurement vectors are independent and each $\ba_i\sim \cN(0,\bI)$.
There exist some universal constants $0<\rho<1$ and $c_0,c_1,c_2>0$ such that if $m\ge c_0 n$, then with probability at least $1-c_1 m \exp(-c_2 n)$, the randomized Kaczmarz update rule \eqref{eq:simpleKaczmarz} yields  
\begin{flalign}
	\mE_{i_t}\left[\dist^2(\bz^{(t+1)},\bx)\right]\le \left(1-\frac{\rho}{n}\right)\cdot \dist^2(\bz^{(t)},\bx)		\label{eq:kaczmarzresult}
\end{flalign}
holds for all $\bz^{(t)}$ satisfying $\frac{\dist(\bz^{(t)},\bx)}{\|\bz\|}\le \frac{1}{10}$.
\end{theorem}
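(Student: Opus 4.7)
The plan is to exploit the observation that the Kaczmarz-PR update \eqref{eq:simpleKaczmarz} is exactly the IRWF update \eqref{eq:incrementalupdate} with random, data-dependent step size $\mu_t = 1/\|\ba_{i_t}\|^2$, and then to leverage the tight concentration of $\|\ba_{i_t}\|^2$ around $n$ so that the one-step analysis underlying Theorem~\ref{th:incremental} (with fixed $\mu = 1/n$) carries over after a $(1\pm\epsilon)$ adjustment of constants. The first step is to establish the high-probability event $\mathcal{E} := \{\max_{1\le i \le m}|\|\ba_i\|^2/n - 1| \le \epsilon\}$ via $\chi^2_n$ concentration and a union bound; this has probability at least $1 - 2m \exp(-c \epsilon^2 n)$, which supplies exactly the $1 - c_1 m \exp(-c_2 n)$ factor in the theorem.

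Next, writing $\bh := \bz^{(t)} - \bx$ after global phase alignment and $r_i := \ba_i^T \bz^{(t)} - y_i \sgn(\ba_i^T \bz^{(t)})$, a direct expansion of \eqref{eq:simpleKaczmarz} together with the Kaczmarz-specific cancellation $(1/\|\ba_{i_t}\|^2)^2 \|\ba_{i_t}\|^2 = 1/\|\ba_{i_t}\|^2$ gives
\begin{flalign*}
\|\bz^{(t+1)} - \bx\|^2 = \|\bh\|^2 - \frac{1}{\|\ba_{i_t}\|^2}\bigl(2 r_{i_t} \ba_{i_t}^T \bh - r_{i_t}^2\bigr).
\end{flalign*}
Partitioning indices into $\mathcal{G} := \{i : \sgn(\ba_i^T \bz^{(t)}) = \sgn(\ba_i^T \bx)\}$ and its complement $\mathcal{B}$, elementary algebra yields $2r_i \ba_i^T \bh - r_i^2 = (\ba_i^T \bh)^2$ on $\mathcal{G}$ and $2r_i \ba_i^T \bh - r_i^2 = (\ba_i^T \bh)^2 - 4(\ba_i^T \bx)^2$ on $\mathcal{B}$. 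Taking expectation over the uniform draw of $i_t$ then gives
\begin{flalign*}
\mE_{i_t}\bigl[\|\bz^{(t+1)} - \bx\|^2\bigr] = \|\bh\|^2 - \frac{1}{m}\sum_{i=1}^m \frac{(\ba_i^T \bh)^2}{\|\ba_i\|^2} + \frac{4}{m}\sum_{i \in \mathcal{B}} \frac{(\ba_i^T \bx)^2}{\|\ba_i\|^2}.
\end{flalign*}

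On $\mathcal{E}$, the denominators lie in $[(1-\epsilon)n, (1+\epsilon)n]$, so the middle term is at least $\tfrac{1}{(1+\epsilon)nm}\sum_i (\ba_i^T \bh)^2 \ge \tfrac{1-\delta}{(1+\epsilon)n}\|\bh\|^2$ by the same operator-norm concentration of $\tfrac{1}{m}\sum_i \ba_i \ba_i^T$ used in the proof of Theorem~\ref{th:mainthm}. For the bad sum, sign disagreement forces $|\ba_i^T \bh| \ge |\ba_i^T \bx|$, hence $(\ba_i^T \bx)^2 \le (\ba_i^T \bh)^2$; combining this with the pointwise sign-agreement bound of Lemma~\ref{clm:samesign} --- exactly the ingredient driving Lemma~\ref{lem:lemma1} --- shows that the bad sum is a small fraction of $\|\bh\|^2/n$ provided $\|\bh\|/\|\bx\| \le 1/10$. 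Choosing $\epsilon$ and $\delta$ small enough yields a net coefficient $\rho/n > 0$, which is \eqref{eq:kaczmarzresult}.

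The main obstacle will be controlling the bad-sample sum uniformly over all $\bz^{(t)}$ in the local neighborhood, since after the first iteration $\bz^{(t)}$ is not independent of the measurements $\{\ba_i\}$. This is the usual nonsmoothness difficulty and compels one to invoke the $\epsilon$-net/uniform-concentration arguments already developed for Theorem~\ref{th:mainthm} rather than just pointwise concentration; beyond this, only the $(1\pm \epsilon)$ step-size reweighting distinguishes the proof from that of Theorem~\ref{th:incremental}.
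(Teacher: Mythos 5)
Your proposal is correct and follows essentially the same route as the paper's proof: the same one-step expansion with the Kaczmarz cancellation, the same good/bad index decomposition yielding $\|\bh\|^2 - \frac{1}{m}\sum_i \frac{(\ba_i^T\bh)^2}{\|\ba_i\|^2} + \frac{4}{m}\sum_{i\in\cB}\frac{(\ba_i^T\bx)^2}{\|\ba_i\|^2}$, the bound $(\ba_i^T\bx)^2\le(\ba_i^T\bh)^2$ on the bad set controlled by the uniform Lemma~\ref{lem:lemma2}, and concentration of $\|\ba_i\|^2$ near $n$ (the paper handles the good term slightly differently, via isotropy of $\sqrt{n}\,\ba_i/\|\ba_i\|$ rather than two-sided norm bounds plus Lemma~\ref{lem:lemma1}, but this is immaterial). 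One small slip: the uniform ingredient you need for the bad sum is Lemma~\ref{lem:lemma2}, not Lemma~\ref{clm:samesign} or Lemma~\ref{lem:lemma1}, though your closing paragraph makes clear you intend the uniform $\epsilon$-net version, which is exactly what the paper invokes.
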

\begin{proof}
	See Appendix \ref{supp:subsec:kaczmarz}.
\end{proof}
The above theorem implies that once the estimate $\bz^{(t)}$ enters the neighborhood of true solutions (often referred as to \emph{basin of attraction}), the error diminishes geometrically by each update in expectation.

Furthermore, \cite{wei2015solving} also provided a \emph{block} Kaczmarz-PR (similar to the minibatch version), whose update rule is given by
\begin{flalign}
\bz^{(t+1)}=\bz^{(t)}-\bA_{\Gamma_t}^\dagger \left(\bA_{\Gamma_t}\bz^{(t)}-\by_{\Gamma_t}\odot\phase (\bA_{\Gamma_t}\bz^{(t)})\right), \label{eq:blockKaczmarz}
\end{flalign} 
where $\odot$ is element-wise product, $\phase(\ba)$ denotes a phase vector of $\ba$ (stacking phase of each element of $\ba$ together), $\bA$ is a matrix with each row being measurement vector $\ba_i^*$, and $\Gamma_t$ is a selected block at iterate $t$ containing row indices. Moreover, $\dagger$ represents \emph{Moore}-{\em Penrose pseudoinverse}, which can be computed as follows:
\begin{flalign}
\bA^\dagger =\begin{cases}
(\bA^*\bA)^{-1} \bA^*, \quad \text{if } \bA \text{ has linearly independent columns};\\
\bA^*(\bA\bA^*)^{-1}, \quad \text{if } \bA \text{ has linearly independent rows}.
\end{cases}
\end{flalign}
Comparing \eqref{eq:blockKaczmarz} and the minibatch IRWF update in \eqref{eq:birwfUpdate}, these two update rules are similar to each other if $\bA_{\Gamma_t}\bA_{\Gamma_t}^*$  approaches $\frac{n}{\rho_0}\bI_{|\Gamma_t|}$. For the case with Gaussian measurements, $\bA_{\Gamma_t}$ has linearly independent rows with high probability if $|\Gamma_t|\le n$ and hence $\bA_{\Gamma_t}\bA_{\Gamma_t}^*$ is not far from $n\bI_{|\Gamma_t|}$. Our empirical results (see Table \ref{tab:algcomp}) further suggest similar convergence rate for these two algorithms with the same block size.

Next, we argue that for the CDP setting, block Kaczmarz-PR is the same as the minibatch IRWF with $\mu=\frac{1}{n}$. The CDP measurements are collected in the following form
\begin{flalign}
	\by^{(l)}=|\bF\bD^{(l)}\bx|, \quad 1\le l\le L, \label{eq:CDPmodel}
\end{flalign}
where $\bF$ represents the discrete Fourier transform (DFT) matrix, and $\bD^{(l)}$ is a diagonal matrix (mask).
We choose the block size $|\Gamma_t|$ to be $n$, the dimension of the signal, for the convenience of Fourier transform.  Then $\bA_{\Gamma_t}$ becomes a Fourier transform composed with $\bD^{(l)}$ (mask effect) and $\bA_{\Gamma_t}^*$ becomes $\bD^{(l)*}$ multiplied by inverse Fourier transform. Therefore, $(\bA_{\Gamma_t}\bA_{\Gamma_t}^*)=\bI$ if the diagonal elements of $\bD^{(l)}$ have unit magnitude.  Taking the step size $\mu=1$, the two algorithms are identical. 

On the other hand, since the block Kaczmarz-PR needs to calculate the matrix inverse or to solve an inverse problem, the block size cannot be too large. However, minibatch IRWF works well for a wide range of batch sizes which can even vary with signal dimension $n$ as long as a batch of data is loadable into memory. 

\subsection{Comparison with Incremental Truncated Wirtinger Flow (ITWF)}\label{sec:compITWF}
Recently, \cite{kolte2016phase} designed and analyzed an incremental algorithm based on TWF, which is referred to as ITWF. More specifically, ITWF employs the same initialization procedure as TWF and randomly chooses one sample for gradient update as follows.
\begin{flalign}
\text{Step $t$: Sample $i_t$ uniformly at random from } \{1,2,\ldots,m\}, \; \text{ and } \nn\\
\quad \bz^{(t+1)}=\bz^{(t)}-\frac{\rho_0}{n}\cdot \frac{|\ba_i^T\bz|^2-y_i^2}{\ba_i^T\bz}\ba_i \bone_{\cE_{1,t}^{i_t}\cap \cE_3^{i_t}},
\end{flalign}
where $\bone_{\cE_{1,t}^{i_t}\cap \cE_3^{i_t}}$ is the truncation rule determined by two events $\cE_{1,t}^{i_t}$ and $\cE_3^{i_t}$. 
Compared to ITWF developed based on TWF, our IRWF uses lower-order variable $|\ba_i^T\bz|$ rather than $|\ba_i^T\bz|^2$ used in ITWF. Moreover, IRWF does not employ any truncation in gradient loops and hence has fewer parameters to tune, which is easier to implement in practice. 

\cite{kolte2016phase} proved that ITWF converges linearly to the true signal as long as $m/n$ (sample size/ signal dimension) is large enough. Compared to ITWF, IRWF also achieves the same linear convergence, but runs faster than ITWF numerically as demonstrated in Section \ref{sec:algcomp}. The proof of IRWF requires different bounding techniques, but is simpler than ITWF due to the lower-order loss function that RWF adopts and the avoidance of truncation in the gradient update.
 
\section{Numerical Results}\label{sec:algcomp}

In this section, we demonstrate the numerical efficiency of RWF and IRWF by comparing their performances with other competitive algorithms. Our experiments are conducted not only for real Gaussian case but also for complex Gaussian and the CDP cases. All the experiments are implemented in Matlab 2015b and carried out on a computer equipped with Intel Core i7 3.4GHz CPU and  12GB RAM.

We first compare the sample complexity of RWF and IRWF with those of TWF and WF via the empirical successful recovery rate versus the number of measurements. 
For RWF,  we follow Algorithm \ref{alg:rwf} with suggested parameters. For IRWF,  we adopt a block size 64 for efficiency and set the step size $\rho_0=1$. For WF, TWF, we use the codes provided in the original papers with the suggested parameters. For ITWF,  we also adopt a block size 64 and set the step size $\rho_0=0.6$ (optimal step size). 
We conduct the experiment for real Gaussian, complex Gaussian and CDP cases respectively. For real  and complex cases, we set the signal dimension $n$ to be 1000, and set the ratio $m/n$ to take values from $2$ to $6$ by a step size $0.1$. For each $m$, we run $100$ trials and count the number of successful trials. For each trial, we run a fixed number of iterations/passes $T=1000$ for all algorithms. A trial is declared to be successful if  $\bz^{(T)}$, the output of the algorithm, satisfies $\dist(\bz^{(T)},\bx)/\|\bx\|\le 10^{-5}$. For the real Gaussian case, we generate signal $\bx\sim \cN(\bzero,\bI_{n\times n})$, and the measurement vectors $\ba_i\sim \cN(\bzero,\bI_{n\times n})$ i.i.d. for $i=1,\ldots, m$.  For the complex Gaussian case, we generate signal $\bx\sim \cN(0,\bI_{n\times n})+j\cN(0,\bI_{n\times n})$ and measurements $\ba_i\sim  \frac{1}{2}\cN(0,\bI_{n\times n})+j\frac{1}{2}\cN(0,\bI_{n\times n})$ i.i.d. for $i=1, \ldots, m$. For the CDP case \eqref{eq:CDPmodel}, we set $n=1024$ for convenience of FFT and $m/n=L=1,2,\ldots, 8$. All other settings are the same as those for the real case.
\begin{figure}[th]
\centering 
\subfigure[Real Gaussian case]{
\includegraphics[width=3in]{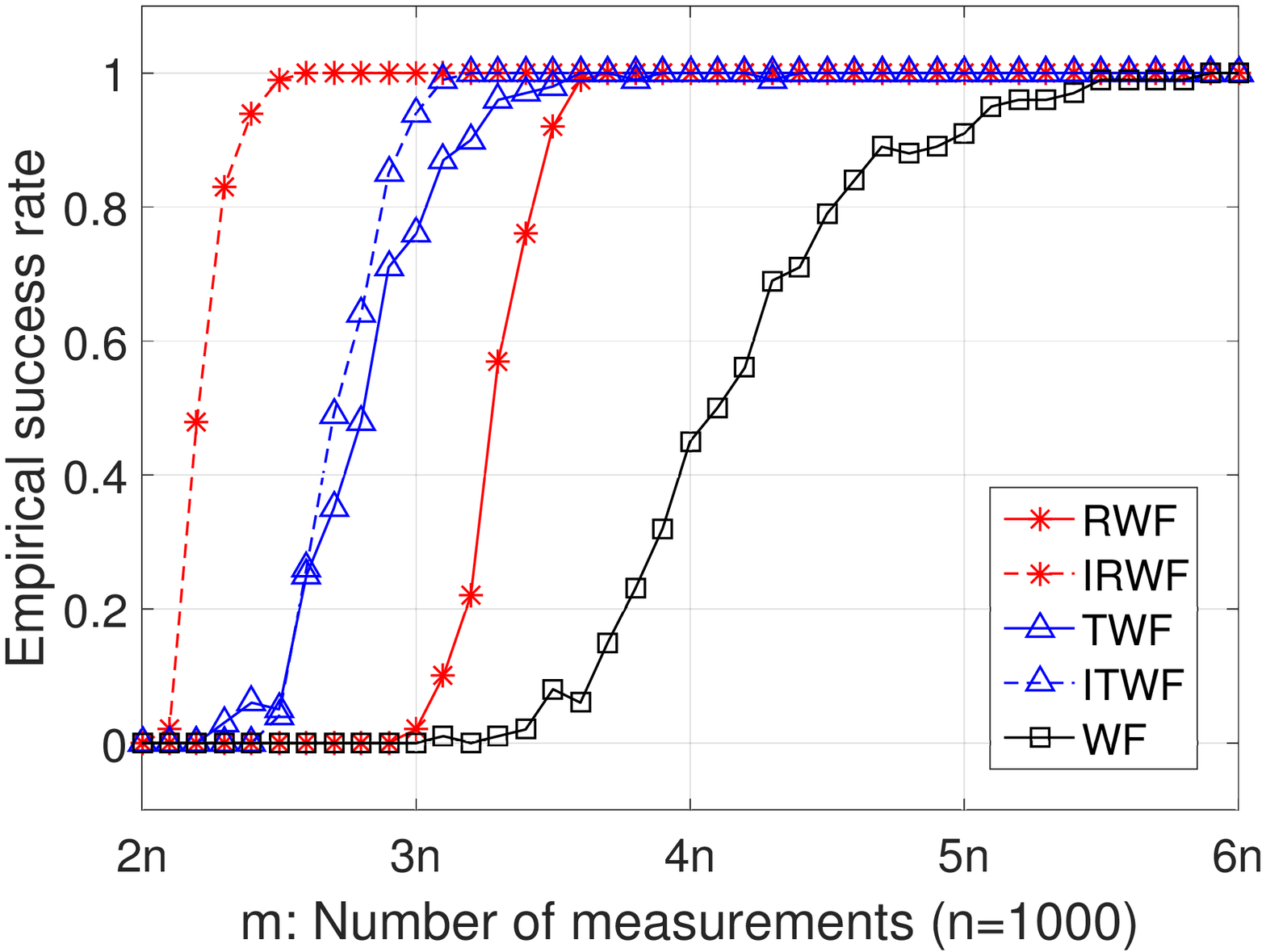}
}
\hfil
\subfigure[Complex Gaussian case]{
\includegraphics[width=3in]{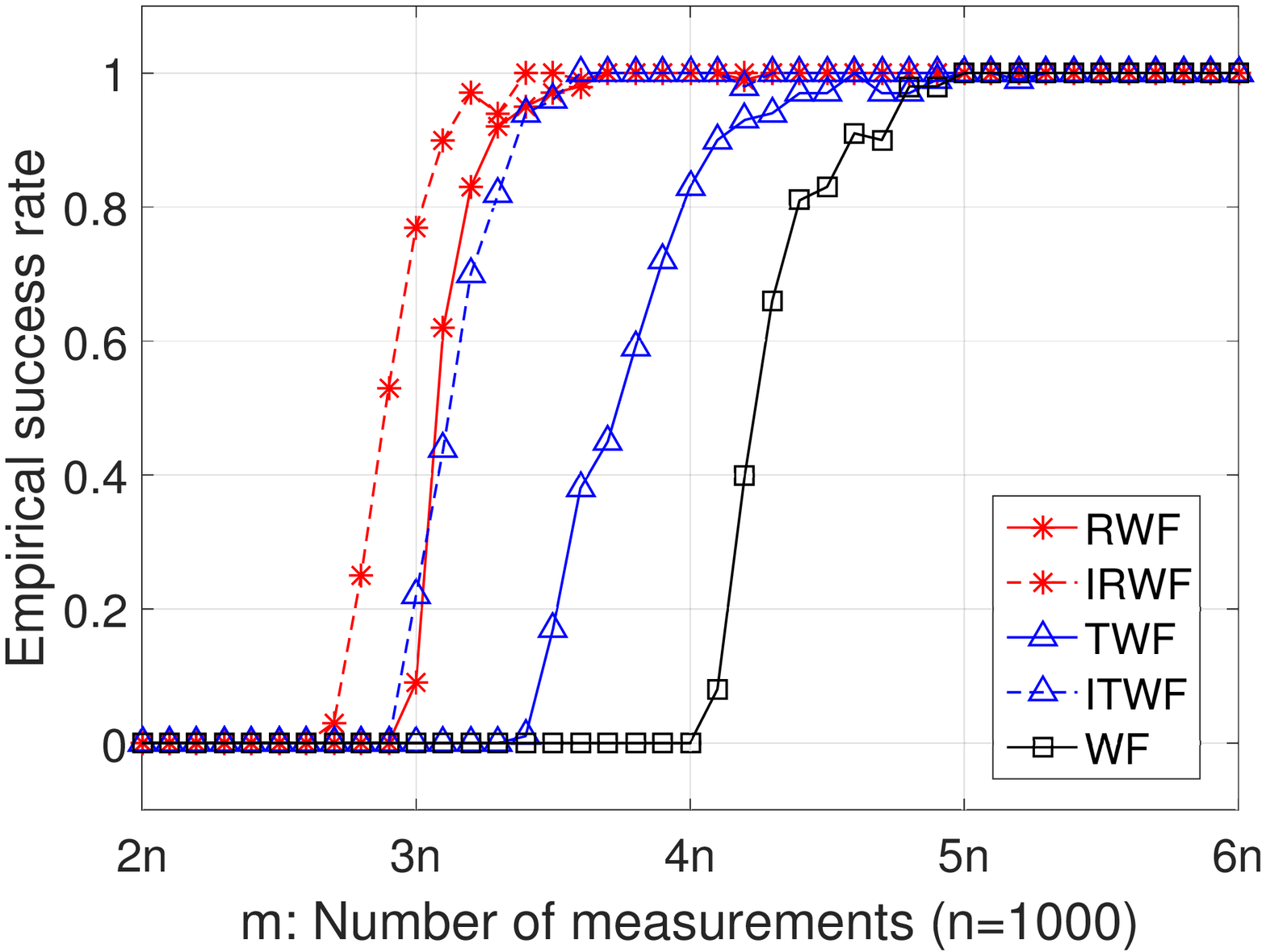}
}
\subfigure[CDP case]{
\includegraphics[width=3in]{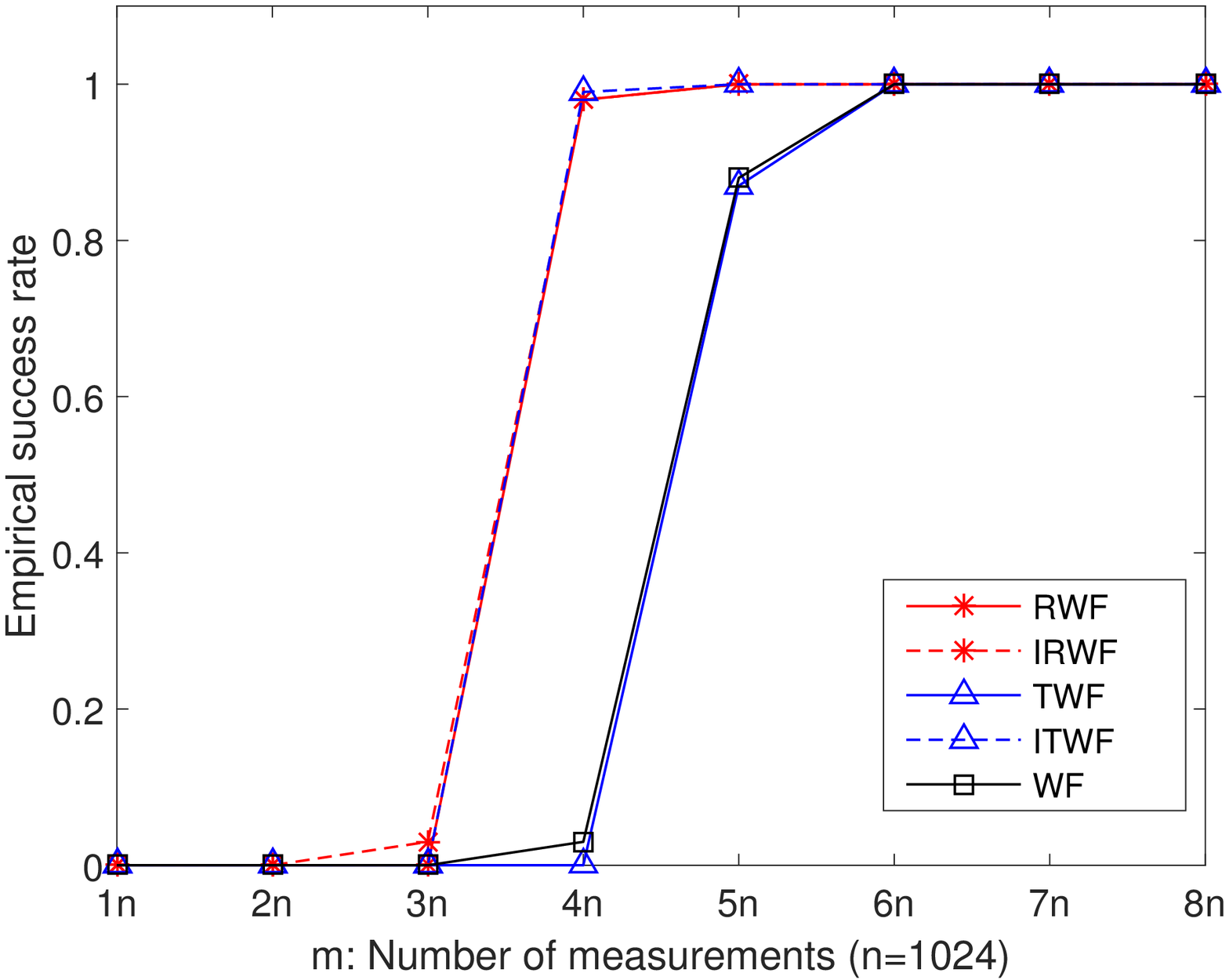}
}
\vspace{-0.1in}
\caption{Comparison of sample complexity among RWF, IRWF, TWF, ITWF and WF.} 
\vspace{-4mm}
\label{fig:phaseTransition}
\end{figure}

Figure~\ref{fig:phaseTransition} plots the fraction of successful trials out of 100 trials for all algorithms, with respect to $m$. It can be further seen that IRWF exhibits the best sample complexity for all three cases, which is close to the theoretical limits \cite{xu2015minimal}. It can be seen that the two incremental methods (IRWF and ITWF) outperform batch methods (RWF, TWF and WF). This can be due to the inherent noise in incremental methods helps to escape bad local minimums, which is extremely helpful in the regime of  small number of samples where local minimums do exist near the global ones. Comparing among the three batch methods (RWF, TWF and WF), it can be seen that although RWF outperforms only WF (not TWF) for the real Gaussian case, it outperforms both WF and TWF for complex Gaussian and CDP cases. An intuitive explanation for the real case is that a substantial number of samples with small $|\ba_i^T\bz|$ can deviate gradient so that truncation indeed helps to stabilize the algorithm if the number of measurements is not large. Furthermore, RWF exhibits \emph{sharper} transition than TWF and WF. 

\begin{table}[t]
  \caption{Comparison of iteration count and time cost among algorithms $(n=1000, m=8n)$}
  \label{tab:algcomp}
  \centering
  \begin{tabular}{l|lll|ll}
    \toprule
   & &\multicolumn{2}{c|}{Real Gaussian} &\multicolumn{2}{c}{Complex Gaussian}            \\
     & &\#passses &time(s)  & \# passes & time(s) \\
     \midrule
      &RWF& 72 &\textbf{0.52} &177 &\textbf{4.81} \\[3pt]
     Batch&TWF & 182& 1.30&484 &13.5 \\[3pt]
     methods&WF & 217&2.22&922&24.9 \\[3pt]
	&AltMinPhase & \textbf{6} &0.94 &\textbf{157} &91.8 \\[3pt]
	\midrule
	
      &IRWF &9 &3.15 &21 &11.8 \\[3pt]
     &minibatch IRWF (64) &9 &\textbf{0.28} &\textbf{21} &\textbf{1.53} \\[3pt]
     Incremental&minibatch ITWF (64) & 15& 0.72&28.6 &3.28 \\[3pt]
     methods&Kaczmarz-PR & 9&3.71& 21 &13.2 \\[3pt]
     &block Kaczmarz-PR (64)& \textbf{8} &0.45 &21 &3.22 \\[3pt] 
         \bottomrule
  \end{tabular}
\end{table}
We next compare the convergence rate of RWF, IRWF with those of TWF, ITWF, WF and AltMinPhase. 
We run all of the algorithms with suggested parameter settings in the original codes. 
We generate signal and measurements in the same way as those in the first experiment with $n=1024, m=8n$. All algorithms are seeded with RWF initialization. In Table \ref{tab:algcomp}, we list the number of passes and time cost for those algorithms to achieve the relative error of $10^{-14}$ averaged over 10 trials. For incremental methods, one update passes $k$ samples and one pass amounts to $m/k$ updates. 
Clearly, IRWF with minibatch size 64 runs fastest for both real and complex cases. Moreover, among batch (deterministic) algorithms, RWF takes much less number of passes as well as running much faster than TWF and WF. Although RWF takes more iterations than AltMinPhase, it runs much faster than AltMinPhase due to the fact that each iteration of AltMinPhase needs to solve a least-squares problem that takes much longer time than a simple gradient update in RWF.



We also compare the performance of the above algorithms on the recovery of a real image from the Fourier intensity measurements (two dimensional CDP case). The image (see Figure \ref{fig:galaxy}) is the Milky Way Galaxy with resolution $1920\times1080$. Table \ref{tab:galaxy} lists the number of passes and the time cost of the above six algorithms to achieve the relative error of $10^{-15}$ for one R/G/B channel. All algorithms are seeded with RWF initialization. To explore the advantage of FFT, we run the incremental/stochastic methods with minibatch size of the one R/G/B channel. We note that with such a minibatch size IRWF is equivalent to block Kaczmarz-PR from the discussion in Section \ref{sec:kaczmarz}. It can be seen that in general, the incremental/stochastic methods (IRWF/Kaczmarz-PR and ITWF) run faster than the batch methods (RWF, TWF, WF and AltMinPhase). Moreover, among batch methods, RWF outperforms other three algorithms  in both number of passes and the computational time cost. In particular, RWF runs two times faster than TWF and six times faster than WF in terms of both the number of iterations and computational time cost. 
\begin{figure}[th]
	\centering 
	\includegraphics[width=5.5in]{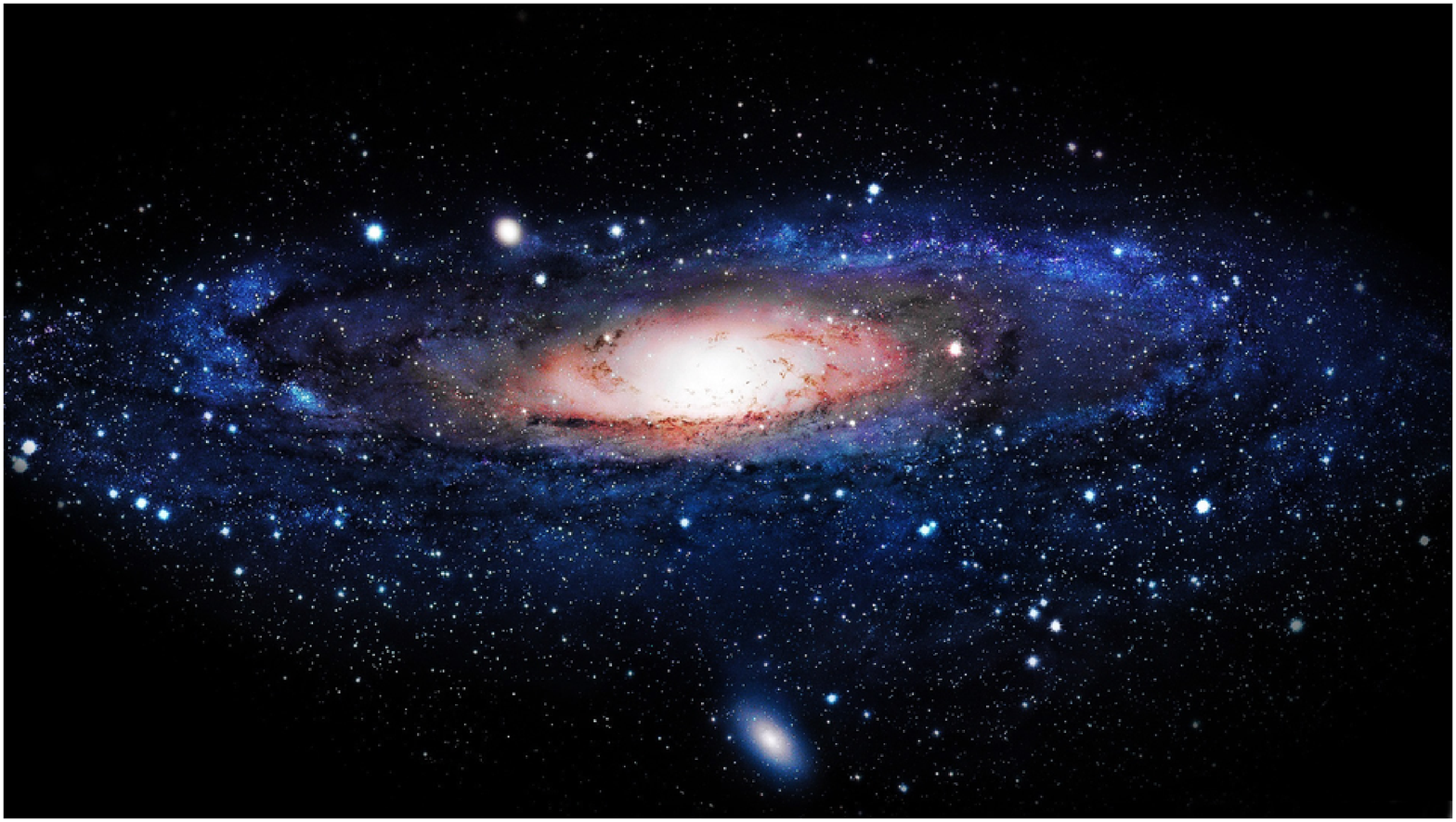}
	\caption{Milky way Galaxy.}
	\label{fig:galaxy}
\end{figure}
\begin{table}[th]
  \caption{Comparison of iterations and time cost among algorithms on Galaxy image (shown in Figure \ref{fig:galaxy})}
  \label{tab:galaxy}
  \centering
  \begin{tabular}{llllllll}
    \toprule
    &Algorithms     & RWF  &IRWF/Kaczmarz-PR   & TWF & ITWF& WF & AltMinPhase\\
    \midrule
   $L=6$ &\#passes & 140&\textbf{24}  &410&41&fail&230      \\
   & time cost(s)    &110  & \textbf{21.2} &406 &43&fail&167    \\
   \midrule
   $L=12$ &\#passes & 70&\textbf{8}  &190&12&315&120      \\
   & time cost(s)    &107  & \textbf{13.7} &363.6 &25.9&426&171   \\
    \bottomrule
  \end{tabular}
\end{table}

We next demonstrate the robustness of RWF to noise corruption and compare it with TWF. We consider the phase retrieval problem in imaging applications, where random Poisson noises are often used to model the sensor and electronic noise \cite{fogel2013phase}. Specifically, the noisy measurements of intensity can be expressed as y$_i=\sqrt{\alpha\cdot \text{Poisson}\left(|\ba_i^T\bx|^2/\alpha\right)},\quad \text{for } i=1,2,...m$ where $\alpha$ denotes the level of input noise, and $\text{Poisson}(\lambda)$ denotes a random sample generated by the Poisson distribution with mean $\lambda$. 
It can be observed from Figure \ref{fig:poissonnoise} that RWF performs better than TWF in terms of recovery accuracy under different noise levels.
\begin{figure}[!th]
\centering 
\includegraphics[width=3.2in]{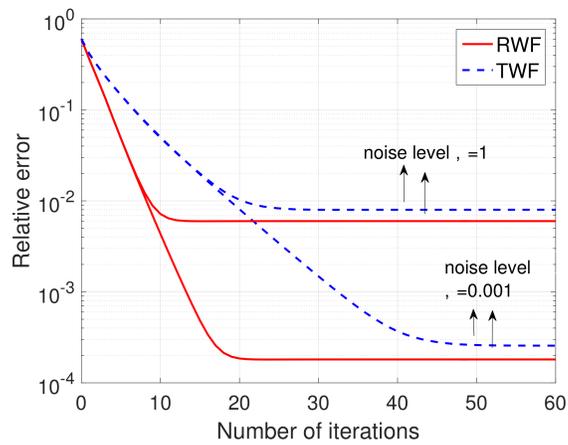}
\vspace{-0.1in}
\caption{Comparison of relative error under Poisson noises between RWF and TWF.} 
\label{fig:poissonnoise}
\end{figure}

\vspace{-4mm}
\section{Conclusion}\label{sec:discussion}

In this paper, we proposed RWF and its stochastic version IRWF to recover a signal from a quadratic systems of equations, based on a {\em nonconvex and nonsmooth} quadratic loss function of absolute values of measurements. This loss function sacrifices the smoothness but enjoys advantages in statistical and computational efficiency. It has potential to be extended in various scenarios. One interesting direction is to extend such an algorithm to exploit signal structures (e.g., non-negativity, sparsity, etc) to assist the recovery. The lower-order loss function may offer great simplicity to prove performance guarantee in such cases. 

Another interesting direction is to study the convergence of algorithms from random initialization. In the regime of large sample size ($m\gg n$), the empirical loss surface approaches the asymptotic loss (Figure \ref{fig:rwfasym}) and hence has no spurious local minimums. Due to \cite{lee2016gradient}, it is conceivable that gradient descent converges from random starting point. Similar phenomenons have been observed in \cite{sun2016geometric, ge2016matrix}. However, under moderate number of measurements ($m<10n$), authentic local minimums do exist and often locate not far from the global ones. In this regime, the batch gradient method often fails with random initialization. As always believed, stochastic algorithms are efficient in escaping bad local minimums or saddle points in nonconvex optimization because of the inherent noise \cite{ge2015escaping,de2015global}. We observe numerically that IRWF and block IRWF from random starting point still converge to global minimum even with very small sample size which is close to the theoretical limits \cite{xu2015minimal}.    It is of interest to analyze theoretically that stochastic methods escape these local minimums (not just saddle points) efficiently. 

\clearpage

\onecolumn
\appendix
\noindent {\LARGE \textbf{Appendix}}
\vspace{1cm}

We first introduce some notations here. We let $\cA:\bbR^{n\times n}\mapsto \bbR^m$ be a linear map
\begin{flalign*}
	\bM \in \bbR^{n\times n} \mapsto \cA(\bM):=\{\ba_i^T\bM\ba_i\}_{1\le i\le m}.
\end{flalign*}	
We let $\|\cdot\|_1$ and $\|\cdot\|$ denote the $l_1$ norm and $l_2$ norm of a vector, respectively. Moreover, let $\|\cdot\|_F$ and $\|\cdot\|$ denote the Frobenius norm and the spectral norm of a matrix, respectively. We note that the constants $c, C, c_0, c_1, c_2$ may be different from line to line, for the sake of notational simplicity.

\section{Proof of Proposition \ref{prop:init}: Performance Guarantee for Initialization}\label{supp:initialization}

Compared to the proof for TWF \cite{chen2015solving}, this proof has new technical developments to address the magnitude measurements and truncation from both sides. 

We first estimate the norm of $\bx$ as
\begin{flalign}
\lambda_0=\frac{mn}{\sum_{i=1}^m \|\ba_i\|_1}\cdot \left(\frac{1}{m}\sum_{i=1}^m y_i\right). \label{eq:normest}
\end{flalign}
Since $\ba_i\sim \cN(0, \bI_{n\times n})$, by Hoeffding-type inequality, it can be shown that
\begin{flalign}
	\left|\frac{\sum_{i=1}^m \|\ba_i\|_1}{mn}-\sqrt{\frac{2}{\pi}}\right|<\frac{\epsilon}{3} \label{eq:scalingest}
\end{flalign}
holds with probability at least $1-2\exp(-c_1mn\epsilon^2)$ for some constant $c_1>0$.

Moreover, given $\bx$, $y_i$'s are independent sub-Gaussian random variables. Thus, by Hoeffding-type inequality, it can be shown that
\begin{flalign}
\left|\sqrt{\frac{\pi}{2}}\left(\frac{1}{m}\sum_{i=1}^m y_i\right)-\|\bx\|\right|<\frac{\epsilon}{3}\|\bx\| \label{eq:normestbound}
\end{flalign}
holds with probability at least $1-2\exp(-c_1m\epsilon^2)$ for some constant $c_1>0$.

On the event $E_1=\{ \text{both } \eqref{eq:scalingest} \text{ and }  \eqref{eq:normestbound} \text{ hold}\}$, it can be argued that 
\begin{flalign}
	\left|\lambda_0-\|\bx\|\right|<\epsilon\|\bx\|.
\end{flalign}

Without loss of generality, we let $\|\bx\|=1$. Then on the event $E_1$, the truncation function satisfies the following bounds
\begin{flalign*}
\bone_{\{\alpha_l(1+\epsilon)<|\ba_i^T\bx|< \alpha_u (1-\epsilon)\}}\le\bone_{\{\alpha_l \lambda_0<y_i< \alpha_u \lambda_0\}}\le \bone_{\{\alpha_l(1-\epsilon)<|\ba_i^T\bx|< \alpha_u(1 +\epsilon)\}}.
\end{flalign*}
Thus, by defining
\begin{flalign*}
\bY_1:=&\frac{1}{m} \sum \ba_i\ba_i^T |\ba_i^T\bx| \bone_{\{\alpha_l(1+\epsilon)<|\ba_i^T\bx|< \alpha_u (1-\epsilon)\}} \nn \\
\bY_2:=&\frac{1}{m} \sum \ba_i\ba_i^T |\ba_i^T\bx| \bone_{\{\alpha_l(1-\epsilon)<|\ba_i^T\bx|< \alpha_u(1 +\epsilon)\}},
\end{flalign*}
we have $\bY_1\prec \bY\prec \bY_2$. We further compute the expectations of $\bY_1$ and $\bY_2$ and obtain
\begin{flalign}
	\mE[\bY_1]=(\beta_1 \bx\bx^T+\beta_2 \bI), \quad \mE[\bY_2]=(\beta_3 \bx\bx^T +\beta_4 \bI),
\end{flalign}
where 
\begin{flalign}
& \beta_1:=\mE[|\xi|^3\bone_{\{\alpha_l(1+\epsilon)<|\xi|< \alpha_u(1 -\epsilon)\}}]-\mE[|\xi|\bone_{\{\alpha_l(1+\epsilon)<|\xi|< \alpha_u(1 -\epsilon)\}}], \nn\\
& \beta_2:=\mE[|\xi|\bone_{\{\alpha_l(1+\epsilon)<|\xi|< \alpha_u (1-\epsilon)\}}] \nn \\
& \beta_3:=\mE[|\xi|^3 \bone_{\{\alpha_l(1-\epsilon)<|\xi|< \alpha_u(1 +\epsilon)\}}]-\mE[|\xi| \bone_{\{\alpha_l(1-\epsilon)<|\xi|< \alpha_u(1 +\epsilon)\}}], \nn \\
& \beta_4:=\mE[|\xi| \bone_{\{\alpha_l(1-\epsilon)<|\xi|< \alpha_u (1+\epsilon)\}}] \nn
\end{flalign}
where $\xi\sim\cN(0,1)$. For given $\alpha_l$ and $ \alpha_u$, small value of $\epsilon$ yields arbitrarily close $\beta_1$ and $\beta_3$, as well as arbitrarily close $\beta_2$ and $ \beta_4$. For example, taking $\alpha_l=1, \alpha_u=5$ and $\epsilon=0.01$, we have $\beta_1=0.9678, \beta_2=0.4791, \beta_3=0.9678, \beta_4=0.4888$.


Now applying standard results on random matrices with non-isotropic sub-Gaussian rows \cite[equation (5.26)]{Vershynin2012} and noticing that $\ba_i\ba_i^T|\ba_i^T\bx|\bone_{\{\alpha_l(1+\epsilon)<|\ba_i^T\bx|< \alpha_u (1-\epsilon)\}}$ can be rewritten as $\bb_i\bb_i^T$ for sub-Gaussian vector $\bb_i:=\ba_i\sqrt{|\ba_i^T\bx|}\bone_{\{\alpha_l(1+\epsilon)<|\ba_i^T\bx|< \alpha_u (1-\epsilon)\}}$, one can derive
\begin{flalign}
	\|\bY_1-\mE[\bY_1]\|\le \delta, \quad \|\bY_2-\mE[\bY_2]\|\le \delta
\end{flalign}
with probability $1-4\exp(-c_1(\delta)m)$ for some positive $c_1$ which is only affected by $\delta$, provided that $m/n$ exceeds a certain constant. Furthermore, when $\epsilon$ is sufficiently small, one further has $\|\mE[\bY_1]-\mE[\bY_2]\|\le \delta$.
Combining the above facts together, one can show that
\begin{flalign}
\|\bY-(\beta_1 \bx\bx^T+\beta_2 \bI) \|\le 3\delta.
\end{flalign}
Let $\tilde{\bz}^{(0)}$ be the normalized leading eigenvector of $\bY$. Following the arguments in \cite[Section 7.8]{candes2015phase} and taking $\delta$ and $\epsilon$ to be sufficiently small, one has
\begin{flalign}
	\dist(\tilde{\bz}^{(0)},\bx)\le \tilde{\delta},
\end{flalign}
for a given $\tilde{\delta}>0$, as long as $m/n$ exceeds a certain constant.

\section{Supporting Arguments for Section~\ref{sec:gradient}}\label{supp:whyfast}

\subsection{Expectation of loss functions}\label{supp:exploss}
The expectation of the loss function \eqref{eq:WFloss} of WF  is given by \cite{sun2016geometric} as
\begin{flalign}
\mE[ \ell_{WF}(\bz)]&=\frac{3}{4}\|\bx\|^4+\frac{3}{4}\|\bz\|^4-\frac{1}{2}\|\bx\|^2\|\bz\|^2-|\bz^T\bx|^2 \label{eq:WFlossasym}.
\end{flalign}

We next show that the expectation of the loss function \eqref{eq:rushloss} of RWF  has the following form:
\begin{flalign}
\mE[ \ell(\bz)]&=\frac{1}{2}\|\bx\|^2+\frac{1}{2}\|\bz\|^2-\|\bx\|\|\bz\|\cdot\mE\left[ \frac{|\ba_i^T\bz|}{\|\bz\|}\cdot\frac{|\ba_i^T\bx|}{\|\bx\|}\right] \label{eq:rushlossasym},
\end{flalign}
where
\begin{flalign}
\mE\left[ \frac{|\ba_i^T\bz|}{\|\bz\|}\cdot\frac{|\ba_i^T\bx|}{\|\bx\|}\right]=\begin{cases}
\frac{(1-\rho^2)^{3/2}}{\pi }\int_0^\infty t(e^{\rho t}+e^{-\rho t}) K_0(t) dt,\quad& \text{if } |\rho|<1;\\
1, \quad &\text{if } |\rho|=1;
\end{cases} \label{eq:expectoftwogaussian}
\end{flalign}
where $\rho=\frac{\bz^T\bx}{\|\bx\|\|\bz\|}$ and $K_0(\cdot)$ is the modified Bessel function of the second kind.

In order to derive \eqref{eq:expectoftwogaussian}, we first define 
\begin{flalign*}
	u:=\frac{\ba_i^T\bz}{\|\bz\|} \text{  and  } v:=\frac{\ba_i^T\bx}{\|\bx\|},
\end{flalign*}
and it suffices to drive $\mE [|uv|]$. Note that $(u,v)\sim \cN(0,\Sigma)$, where
\begin{flalign*}
	\Sigma=\left[\begin{array}{cc}
1 &\rho\\
\rho &1
\end{array}\right], \quad \text{and}\quad \rho=\frac{\bz^T\bx}{\|\bx\|\|\bz\|}.
\end{flalign*}
Following \cite{donahue1964products}, the density function of $u\cdot v$ is given by
\begin{flalign*}
	\phi_{uv}(x)=\frac{1}{\pi \sqrt{1-\rho^2}}\exp\left(\frac{\rho x}{1-\rho^2}\right) K_0\left(\frac{|x|}{1-\rho^2}\right), \quad x\neq 0.
\end{flalign*}
Thus, the density of $|uv|$ is given by
\begin{flalign}\label{eq:psi_rho}
	\psi_{|uv|}(x)=\frac{1}{\pi \sqrt{1-\rho^2}}\left[\exp\left(\frac{\rho x}{1-\rho^2}\right) +\exp\left(-\frac{\rho x}{1-\rho^2}\right) \right]K_0\left(\frac{|x|}{1-\rho^2}\right),\quad  x> 0,
\end{flalign}
for $|\rho|<1$. Therefore, if $|\rho|<1$, then
\begin{flalign*}
	\mE [|uv|]&=\int_0^\infty x\cdot \psi_\rho(x) dx\\
	&=\int_0^\infty x\cdot \frac{1}{\pi \sqrt{1-\rho^2}}\left[\exp\left(\frac{\rho x}{1-\rho^2}\right) +\exp\left(-\frac{\rho x}{1-\rho^2}\right) \right]K_0\left(\frac{|x|}{1-\rho^2}\right) dx\\
	&=\frac{(1-\rho^2)^{3/2}}{\pi }\int_0^\infty t(e^{\rho t}+e^{-\rho t}) K_0(t) dt  
\end{flalign*}
where the last step follows by changing variables.

If $|\rho|=1$, then $|uv|$ becomes a $\chi_1^2$ random variable, with the density
\begin{flalign*}
	\psi_{|uv|}(x)=\frac{1}{\sqrt{2\pi}}x^{-1/2}\exp(-x/2), \quad x> 0,
\end{flalign*}
and hence $\mE[|uv|]=1$.



\subsection{Proof of Lemma \ref{clm:samesign}}\label{supp:samesign}

Let $\ba(1)$ denote the first element of a generic vector $\ba$, and $\ba(-1)$ denote the remaining vector of $\ba$ after eliminating the first element. Let $\bU_x$ be an orthonormal matrix with first row being $\bx^T/\|\bx\|$, $\tilde{\ba}_i=\bU_x \ba_i$, and $\tilde{\bh}=\bU_x \bh$.   Similarly define $\bU_{\tilde{h}(-1)}$ and let $\tilde{\bb}_i=\bU_{\tilde{h}(-1)}\tilde{\ba}_i(-1)$. Then $\tilde{\ba}_i(1)$ and $\tilde{\bb}_i(1)$ are independent standard Gaussian random variables. 

We evaluate the conditional probability as follows.
\begin{flalign*}
\bbP&\{(\ba_i^T\bx)(\ba_i^T\bz)<0\big|(\ba_i^T\bx)^2= t\|\bx\|^2\}\\
&=\bbP\{t\|\bx\|^2+(\ba_i^T\bx)(\ba_i^T\bh)<0\big|(\ba_i^T\bx)^2= t\|\bx\|^2\} \quad\text{due to } \bz=\bx+\bh\\
&\le\bbP\{t\|\bx\|^2-\sqrt{t}\|\bx\||\ba_i^T\bh|<0\big|(\ba_i^T\bx)^2= t\|\bx\|^2\}\\
&=\bbP\{|\ba_i^T\bh|>\sqrt{t}\|\bx\|\big|(\ba_i^T\bx)^2= t\|\bx\|^2\}\\
&=\bbP\{|\tilde{\ba}_i(1) \tilde{\bh}(1)+\tilde{\ba}_i(-1)^T\tilde{\bh}(-1)|>\sqrt{t}\|\bx\|\big||\tilde{\ba}_i(1)|= \sqrt{t}\} \quad\quad \text{orthogonal transformation } U_x\\
&\le \bbP\{|\tilde{\ba}_i(1) \tilde{\bh}(1)|+|\tilde{\ba}_i(-1)^T\tilde{\bh}(-1)|>\sqrt{t}\|\bx\|\big||\tilde{\ba}_i(1)|= \sqrt{t}\}\\
&=\bbP\left\{|\tilde{\ba}_i(-1)^T\tilde{\bh}(-1)|>\sqrt{t}\left(\|\bx\|-\frac{|\bh^T\bx|}{\|\bx\|}\right)\Big||\tilde{\ba}_i(1)|= \sqrt{t}\right\}  \quad \text{due to } \tilde{\bh}(1)=\frac{\bh^T\bx}{\|\bx\|}\\
&=\bbP\left\{|\bb_i(1)|\cdot \sqrt{\|\bh\|^2-\frac{(\bh^T\bx)^2}{\|\bx\|^2}}>\sqrt{t}\left(\|\bx\|-\frac{|\bh^T\bx|}{\|\bx\|}\right)\right\} \quad \quad \text{due to } \bb=U_{\tilde{\bh}(-1)}\tilde{\ba}_i(-1)\\
&=\bbP\left\{|\bb_i(1)| >\sqrt{t}\cdot \frac{\|\bx\|}{\|\bh\|}\left(1-\frac{|\bh^T\bx|}{\|\bx\|^2}\right)\big/\sqrt{1-(\bh^T\bx)^2/(\|\bh\|^2\|\bx\|^2)}\right\} \\
&\le\bbP\left\{|\bb_i(1)| >\sqrt{t}\cdot \frac{\|\bx\|}{\|\bh\|}\left(1-\frac{|\bh^T\bx|}{\|\bx\|^2}\right)\right\}\\
&\le\bbP\left\{|\bb_i(1)| >\sqrt{t}\cdot \left(\frac{\|\bx\|}{\|\bh\|}-1\right)\right\} \quad \quad \text{by Cauchy-Schwartz inequality}\\
&\le\bbP\left\{|\bb_i(1)/\sqrt{2}| >\frac{\sqrt{t}}{\sqrt{2}} \cdot \left(\frac{\|\bx\|}{\|\bh\|}-1\right)\right\}\\
&\le\erfc\left(\frac{\sqrt{t}\|\bx\|}{2\|\bh\|}\right)
\end{flalign*}
where $\erfc(z):=\frac{2}{\sqrt{\pi}}\int_z^\infty \exp(-t^2) dt$, and  the  last inequality holds if $\|\bh\|<(1-1/\sqrt{2}) \|\bx\|$. 

%
%
%

\section{Proof of Theorem \ref{th:mainthm}: Geometric Convergence of RWF}\label{supp:convergence}

The general structure of the proof follows that for WF in \cite{candes2015phase} and TWF in \cite{chen2015solving}. However, the proof  requires development of new bounds due to the nonsmoothness of the loss function and absolute value based measurements. On the other hand the proof is much simpler due to the lower-order loss function adopted in RWF.

The idea of the proof is to show that within the neighborhood of global optimums, RWF satisfies the {\em Regularity Condition} $\mathsf{RC}(\mu,\lambda,c)$ , i.e.,
\begin{flalign}
\left\langle \nabla \ell(\bz), \bh \right\rangle \ge \frac{\mu}{2} \left\|\nabla \ell(\bz)\right\|^2+\frac{\lambda}{2} \|\bh\|^2 \label{eq:RCsupp}
\end{flalign}
for all $\bz$ and $\bh=\bz-\bx$ obeying $\|\bh\|\le c \|\bx\|$, where $0<c<1$ is some constant. Then, as shown in \cite{chen2015solving}, once the initialization lands into this neighborhood, geometric convergence can be guaranteed, i.e., 
\begin{flalign}
	\dist^2\left(\bz+\mu\nabla\ell(\bz),\bx\right)\le (1-\mu\lambda)\dist^2(\bz,\bx),
\end{flalign}
for any $\bz$ with $\|\bz-\bx\|\le c\|\bx\|$.

To show the regularity condition, we first define a set $\cS:=\{i: 1\le i \le m, (\ba_i^T \bz)(\ba_i^T\bx)<0\}$, and then derive the following bound:
\begin{flalign}
\left\langle \nabla \ell(\bz), \bh \right\rangle &=\frac{1}{m} \sum_{i=1}^m \left(\ba_i^T\bz-|\ba_i^T\bx|\sgn(\ba_i^T\bz)\right)(\ba_i^T\bh)\nn\\
&=\frac{1}{m}\left[\sum_{i=1}^m (\ba_i^T\bh)^2+2\sum_{i\in \cS} (\ba_i^T\bx)(\ba_i^T\bh)\right] \nn\\
&\ge\frac{1}{m}\left[\sum_{i=1}^m (\ba_i^T\bh)^2-2\left|\sum_{i\in \cS} (\ba_i^T\bx)(\ba_i^T\bh)\right|\right] \nn\\
&\ge\frac{1}{m}\left[\sum_{i=1}^m (\ba_i^T\bh)^2-\sum_{i\in \cS}2\left| (\ba_i^T\bx)(\ba_i^T\bh)\right|\right]. \label{eq:condrc}
\end{flalign}
The first term in \eqref{eq:condrc} can be bounded using Lemma 3.1 in \cite{candes2013phaselift}, which we state below.
\begin{lemma} \label{lem:lemma1}
For any $0<\epsilon<1$,  if $m>c_0 n \epsilon^{-2}$, then with probability at least $1-2\exp(-c_1\epsilon^2 m)$,
\begin{flalign}
	(1-\epsilon)\|\bh\|^2\le \frac{1}{m}\sum_{i=1}^m (\ba_i^T\bh)^2\le (1+\epsilon)\|\bh\|^2
\end{flalign}
holds for all non-zero vectors $\bh\in \bbR^n$. Here, $c_0, c_1>0$ are some universal constants. 
\end{lemma}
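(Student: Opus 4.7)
The plan is to recast the statement as a sample-covariance concentration inequality and invoke the standard combination of Bernstein's inequality with a covering argument on the unit sphere. Observe that $\frac{1}{m}\sum_{i=1}^m(\ba_i^T\bh)^2=\bh^T\bS_m\bh$ where $\bS_m:=\frac{1}{m}\sum_{i=1}^m\ba_i\ba_i^T$ and $\mE[\ba_i\ba_i^T]=\bI_n$. So the two-sided bound is equivalent to $\|\bS_m-\bI_n\|\le\epsilon$, since by homogeneity it suffices to prove this for all $\bh\in S^{n-1}$.

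First I would fix a unit vector $\bu\in S^{n-1}$. Then $\ba_i^T\bu\sim\cN(0,1)$, so $X_i:=(\ba_i^T\bu)^2-1$ are i.i.d.\ centered sub-exponential random variables with sub-exponential norm bounded by a universal constant. Bernstein's inequality (see, e.g., Proposition 5.16 of Vershynin) gives, for every $\epsilon\in(0,1)$,
\[
\bbP\!\left\{\Big|\tfrac{1}{m}\textstyle\sum_{i=1}^m X_i\Big|>\tfrac{\epsilon}{2}\right\}\le 2\exp(-c\epsilon^2 m)
\]
for a universal $c>0$, which is the desired pointwise bound.

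Next I would pass to a uniform bound by an $\epsilon/8$-net $\cN$ of $S^{n-1}$ with $|\cN|\le(24/\epsilon)^n$. Applying the union bound, $|\bu^T\bS_m\bu-1|\le\epsilon/2$ holds simultaneously for every $\bu\in\cN$ with probability at least $1-2(24/\epsilon)^n\exp(-c\epsilon^2 m)$, which is at least $1-2\exp(-c_1\epsilon^2 m)$ provided $m\ge c_0 n\epsilon^{-2}$ with $c_0$ large enough to absorb the $n\log(24/\epsilon)$ term. Finally, an approximation step extends the bound from the net to the whole sphere: for any $\bu\in S^{n-1}$ pick $\bu_0\in\cN$ with $\|\bu-\bu_0\|\le\epsilon/8$, then
\[
|\bu^T\bS_m\bu-\bu_0^T\bS_m\bu_0|\le 2\|\bu-\bu_0\|\cdot\|\bS_m\|\le\tfrac{\epsilon}{4}\|\bS_m\|,
\]
and the standard bootstrap $\sup_{\bu\in S^{n-1}}|\bu^T\bS_m\bu-1|\le 2\sup_{\bu\in\cN}|\bu^T\bS_m\bu-1|$ yields $\|\bS_m-\bI\|\le\epsilon$ on the good event.

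The main obstacle is purely bookkeeping: one must choose the fineness of the net, the slack split in Bernstein's bound, and the constant $c_0$ so that the exponent $n\log(24/\epsilon)-c\epsilon^2 m$ is absorbed into $-c_1\epsilon^2 m$ while leaving a clean $\epsilon$ after the net-to-sphere approximation. No new idea is needed beyond this standard template for Gaussian sample covariance matrices, which is precisely Lemma~3.1 of the cited paper \cite{candes2013phaselift}.
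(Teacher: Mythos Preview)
Your proposal is correct and matches the paper exactly: the paper does not prove this lemma at all but simply invokes it as Lemma~3.1 of \cite{candes2013phaselift}, which you also identify at the end. The Bernstein-plus-net sketch you give is the standard argument behind that cited result (one minor bookkeeping note: use a fixed $1/4$-net together with the bootstrap inequality you already wrote, rather than an $\epsilon/8$-net, so that the net cardinality is $9^n$ and no extraneous $\log(1/\epsilon)$ appears in the sample-size requirement $m\ge c_0 n\epsilon^{-2}$).
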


For the second term in \eqref{eq:condrc}, we derive
\begin{flalign}
\sum_{i\in \cS} 2\left|\ba_i^T\bx\right|\left|\ba_i^T\bh\right|&\le \sum_{i\in \cS} \left[(\ba_i^T\bx)^2+(\ba_i^T\bh)^2 \right] \nn \\
&= \sum_{i=1}^m [(\ba_i^T\bx)^2+(\ba_i^T\bh)^2]\cdot \bone_{\{(\ba_i^T\bx)(\ba_i^T\bz)<0\}} \nn \\
&=  \sum_{i=1}^m [(\ba_i^T\bx)^2+(\ba_i^T\bh)^2]\cdot \bone_{\{(\ba_i^T\bx)^2+(\ba_i^T\bx)(\ba_i^T\bh)<0\}} \nn \\
&\le  \sum_{i=1}^m [(\ba_i^T\bx)^2+(\ba_i^T\bh)^2]\cdot \bone_{\{|\ba_i^T\bx|<|\ba_i^T\bh|\}} \nn\\
&\le  2\sum_{i=1}^m (\ba_i^T\bh)^2\cdot \bone_{\{|\ba_i^T\bx|<|\ba_i^T\bh|\}}.
\end{flalign}
The above equation can be further upper bounded by the following lemma.
\begin{lemma} \label{lem:lemma2}
For any $\epsilon>0$, if $m>c_0 n \epsilon^{-2}\log \epsilon^{-1}$, then with probability at least $1-C \exp(-c_1 \epsilon^2 m)$,
\begin{flalign}
	\frac{1}{m}\sum_{i=1}^m (\ba_i^T\bh)^2\cdot \bone_{\{|\ba_i^T\bx|<|\ba_i^T\bh|\}}\le \left(0.13+\epsilon\right)\|\bh\|^2 \label{eq:lemma2upper}
\end{flalign}
holds for all non-zero vectors $\bh\in \bbR^n$ satisfying $\|\bh\|\le \frac{1}{10}\|\bx\|$. Here, $c_0, c_1, C>0$ are some universal constants.
\end{lemma}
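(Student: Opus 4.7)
The plan is to bound $S(\bh) := \frac{1}{m}\sum_{i=1}^m (\ba_i^T\bh)^2 \bone_{\{|\ba_i^T\bx|<|\ba_i^T\bh|\}}$ in three stages: (i) compute its pointwise expectation; (ii) establish pointwise concentration via a Bernstein-type inequality; and (iii) upgrade this to a uniform bound over all admissible $\bh$ by combining an $\epsilon$-net argument with a careful treatment of the indicator's discontinuity.

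\textbf{Expectation.} For fixed $\bh$, set $\tau := \|\bh\|/\|\bx\|\in(0,1/10]$ and $\rho := \bh^T\bx/(\|\bh\|\|\bx\|)$. The variables $\tilde u := \ba^T\bh/\|\bh\|$ and $\tilde v := \ba^T\bx/\|\bx\|$ are jointly standard Gaussian with correlation $\rho$, so by homogeneity
\begin{equation*}
\mE[S(\bh)] = \|\bh\|^2\, \mE\bigl[\tilde u^2 \bone_{\{|\tilde v|<\tau|\tilde u|\}}\bigr].
\end{equation*}
Writing $\tilde v = \rho\tilde u+\sqrt{1-\rho^2}\,Z$ with $Z$ an independent standard normal, and bounding the conditional probability $\bbP(|\tilde v|<\tau|\tilde u|\mid\tilde u)$ by the length $2\tau|\tilde u|/\sqrt{1-\rho^2}$ of the interval for $Z$ times the supremum of the Gaussian density on that interval (attained at the endpoint closest to $0$), a direct integration yields
\begin{equation*}
\mE\bigl[\tilde u^2\bone_{\{|\tilde v|<\tau|\tilde u|\}}\bigr] \;\le\; \frac{4\tau(1-\rho^2)^{3/2}}{\pi(1-2\tau|\rho|+\tau^2)^2}.
\end{equation*}
A one-variable optimization of this upper bound over $|\rho|\in[0,1]$ shows that, for $\tau\le 1/10$, it never exceeds $\approx 0.1282$, leaving slack below the target $0.13+\epsilon$.

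\textbf{Pointwise concentration and the net.} For fixed $\bh$, the summands $(\ba_i^T\bh)^2\bone_{\{\cdot\}}$ are i.i.d., dominated by $\|\bh\|^2 \chi_1^2$, hence sub-exponential. Bernstein's inequality therefore gives $\bbP(|S(\bh)-\mE[S(\bh)]|>\tfrac{\epsilon}{4}\|\bh\|^2)\le 2\exp(-cm\epsilon^2)$ for a universal $c>0$. Since both sides of the target inequality are $2$-homogeneous in $\bh$, it suffices to prove the claim for $\bh\in S^{n-1}$. Fix a $\delta$-net $\cC\subseteq S^{n-1}$ with $|\cC|\le (3/\delta)^n$ and apply the Bernstein bound at each point of $\cC$; a union bound controls $S(\bh_0)$ uniformly over $\cC$ with failure probability at most $2\exp(-cm\epsilon^2+Cn\log(1/\delta))$.

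\textbf{Main obstacle: discontinuity of the indicator.} The substantive difficulty is passing from $\cC$ to arbitrary $\bh\in S^{n-1}$, since $\bone_{\{|\ba_i^T\bx|<|\ba_i^T\bh|\}}$ is discontinuous in $\bh$ and ordinary Lipschitz chaining fails. My plan is a \emph{buffered-indicator} argument. Writing $\bh = \bh_0 + \Delta$ with $\bh_0\in\cC$ the nearest net point and $\|\Delta\|\le\delta$, the inequalities $|\ba_i^T\bh|\le|\ba_i^T\bh_0|+\delta\|\ba_i\|$ and $(\ba_i^T\bh)^2\le(1+\epsilon)(\ba_i^T\bh_0)^2+(1+\epsilon^{-1})(\ba_i^T\Delta)^2$ give
\begin{equation*}
S(\bh) \;\le\; (1+\epsilon)\, T(\bh_0,\delta) \;+\; (1+\epsilon^{-1})\cdot \tfrac{1}{m}\textstyle\sum_i (\ba_i^T\Delta)^2,
\end{equation*}
where $T(\bh_0,\delta) := \tfrac{1}{m}\sum_i(\ba_i^T\bh_0)^2\bone_{\{|\ba_i^T\bx|<|\ba_i^T\bh_0|+\delta\|\ba_i\|\}}$. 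The second term is bounded by $(1+\epsilon^{-1})(1+\epsilon)\delta^2$ using Lemma~\ref{lem:lemma1} applied to $\Delta/\|\Delta\|$. For $T(\bh_0,\delta)$, restricting to the high-probability event $\{\max_i\|\ba_i\|\le C\sqrt n\}$, its expectation exceeds the unbuffered expectation by at most $O(\delta\sqrt n)$, which follows by conditioning on $\ba^T\bh_0$ and using that the conditional density of $|\ba^T\bx|$ is bounded; a fresh Bernstein bound then yields concentration of $T(\bh_0,\delta)$ around its expectation uniformly over $\cC$. Choosing $\delta$ of order $\epsilon/\sqrt n$ makes every approximation error at most $\epsilon\|\bh\|^2/4$, and the resulting net-size exponent $\exp(Cn\log(n/\epsilon))$ is absorbed into the failure probability as soon as $m\ge c_0 n\epsilon^{-2}\log\epsilon^{-1}$, closing the argument and yielding the claimed bound $(0.13+\epsilon)\|\bh\|^2$ with probability at least $1-C\exp(-c_1 m\epsilon^2)$.
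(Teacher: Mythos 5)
Your overall skeleton (expectation bound, pointwise Bernstein, $\epsilon$-net, then a continuity device for the discontinuous indicator) matches the paper's, and your analytic bound on $\mE[\tilde u^2\bone_{\{|\tilde v|<\tau|\tilde u|\}}]$ is a nice closed-form alternative to the paper's numerically evaluated integral (the paper instead bounds the expectation of the slightly enlarged indicator $\bone_{\{(1-\delta)|\ba_i^T\bx|^2<|\ba_i^T\bh|^2\}}$ and reads off $0.13$ from a plot). Two small points there: your denominator $(1-2\tau|\rho|+\tau^2)^2$ is only valid for $|\rho|\ge\tau$ (for $|\rho|<\tau$ the origin lies in the interval and the correct denominator is $(1-\rho^2)^2$, giving $\frac{4\tau}{\pi\sqrt{1-\rho^2}}\le 0.128$ for $\tau\le 1/10$, so the conclusion survives); and the reduction to the sphere is not by $2$-homogeneity — the indicator is not scale-invariant in $\bh$ — but by the monotonicity $S(\omega\bh)\le\omega^2 S(\bh)$ for $\omega<1$, which is exactly the observation the paper makes at the end of its proof.

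The genuine gaps are in your buffered-indicator chaining step. First, your buffer is the worst-case per-sample quantity $\delta\|\ba_i\|\approx\delta\sqrt n$, which forces $\delta\sim\epsilon/\sqrt n$ and hence a net of cardinality $\exp(Cn\log(n/\epsilon))$. The union bound then requires $m\epsilon^2\gtrsim n\log(n/\epsilon)$, i.e.\ $m\gtrsim n\epsilon^{-2}\log(n/\epsilon)$, which is strictly stronger than the stated hypothesis $m>c_0n\epsilon^{-2}\log\epsilon^{-1}$; for fixed $\epsilon$ this degrades the lemma from $O(n)$ to $O(n\log n)$ samples, which would break the paper's headline sample complexity. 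The paper avoids this by replacing the indicator with a Lipschitz surrogate $\chi_i$ and controlling $\frac{1}{m}\sum_i\big||\ba_i^T\bh|^2-|\ba_i^T\bh_0|^2\big|=\frac1m\|\cA(\bh\bh^T-\bh_0\bh_0^T)\|_1\lesssim\|\bh\bh^T-\bh_0\bh_0^T\|_F$, an averaged ($\ell_1$) bound over rank-two matrices that carries no $\sqrt n$ penalty and so permits a net resolution $\Theta(\epsilon)$ independent of $n$. Second, your bound on $\mE[T(\bh_0,\delta)]-\mE[S(\bh_0)]$ via "the conditional density of $|\ba^T\bx|$ given $\ba^T\bh_0$ is bounded" fails uniformly: that conditional variance is $(1-\rho^2)\|\bx\|^2$ with $\rho=\bh_0^T\bx/(\|\bh_0\|\|\bx\|)$, which degenerates as $\bh_0$ approaches a multiple of $\bx$, so the excess is not $O(\delta\sqrt n)$ uniformly over the net. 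The argument is salvageable — in that degenerate regime $|\ba^T\bx|\approx 10|\ba^T\bh_0|$ forces $|\ba^T\bh_0|\lesssim\delta\sqrt n$ on the buffered event, so the whole term is $O(\delta^2 n+(1-\rho^2))$ and hence small — but this case split is needed and is absent from your write-up.
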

\begin{proof}
See Section \ref{sec:prooflemma2}.
\end{proof}


Therefore, combining Lemmas \ref{lem:lemma1} and \ref{lem:lemma2} with \eqref{eq:condrc}  yields 
\begin{flalign}\label{eq:rc1}
\left\langle \nabla \ell(\bz), \bh \right\rangle\ge (1-0.26-2\epsilon)\|\bh\|^2=(0.74-2\epsilon)\|\bh\|^2.
\end{flalign} 

We further provide an upper bound on $\|\nabla \ell(\bz)\|$ in the following lemma.
\begin{lemma} \label{lem:lemma3}
Fix $\delta>0$, and assume $y_i=|\ba_i^T\bx|$. Suppose that $m\ge c_0 n$ for a certain constant $c_0>0$. There exist some universal constants $c,C>0$ such that with probability  at least $1-C \exp(-c m)$,
\begin{flalign}
	\|\nabla \ell(\bz)\|\le (1+\delta)\cdot 2\|\bh\| \label{eq:lemma3uppersupp}
\end{flalign}
holds for all non-zero vectors $\bh,\bz\in \bbR^n$ satisfying $\bz=\bx+\bh$ and  $\frac{\|\bh\|}{\|\bx\|}\le \frac{1}{10}$.
\end{lemma}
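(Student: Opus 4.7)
The plan is to decompose the gradient into a ``smooth'' piece that behaves like the least-squares gradient from \eqref{eq:LSgrad} plus a sign-flip correction coming only from samples where $\sgn(\ba_i^T\bz)\neq \sgn(\ba_i^T\bx)$. A direct case analysis using $\bz=\bx+\bh$ shows
\[
\ba_i^T\bz-|\ba_i^T\bx|\sgn(\ba_i^T\bz)=\ba_i^T\bh+2(\ba_i^T\bx)\bone_{\{(\ba_i^T\bz)(\ba_i^T\bx)<0\}},
\]
so
\[
\nabla\ell(\bz)=\left(\frac{1}{m}\sum_{i=1}^m \ba_i\ba_i^T\right)\bh+\frac{2}{m}\sum_{i=1}^m (\ba_i^T\bx)\bone_{\{(\ba_i^T\bz)(\ba_i^T\bx)<0\}}\ba_i.
\]
By the triangle inequality it suffices to bound each of the two pieces on the right by roughly $\|\bh\|$ and by a quantity strictly smaller than $\|\bh\|$, respectively.

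For the first piece, standard concentration for sample covariance matrices with Gaussian rows (Vershynin 2012) gives $\bigl\|\tfrac{1}{m}\sum_i \ba_i\ba_i^T-\bI\bigr\|\le \delta/2$ with probability at least $1-C\exp(-cm)$ provided $m\ge c_0 n$; hence this piece has operator-norm action bounded by $(1+\delta/2)\|\bh\|$. For the second piece, I will express its Euclidean norm as $\sup_{\|\bu\|=1}\tfrac{2}{m}\sum_i (\ba_i^T\bx)\bone_{\{\cdot\}}(\ba_i^T\bu)$ and apply Cauchy--Schwarz:
\[
\left|\frac{2}{m}\sum_i (\ba_i^T\bx)\bone_{\{\cdot\}}(\ba_i^T\bu)\right|\le 2\sqrt{\frac{1}{m}\sum_i (\ba_i^T\bx)^2\bone_{\{\cdot\}}}\cdot\sqrt{\frac{1}{m}\sum_i (\ba_i^T\bu)^2}.
\]
The key observation (already used in the derivation leading to \eqref{eq:lemma2upper}) is that on the event $\{(\ba_i^T\bz)(\ba_i^T\bx)<0\}$ one has $|\ba_i^T\bx|<|\ba_i^T\bh|$, so the first radical is dominated by $\sqrt{\tfrac{1}{m}\sum_i (\ba_i^T\bh)^2\bone_{\{|\ba_i^T\bx|<|\ba_i^T\bh|\}}}$ and Lemma \ref{lem:lemma2} upper-bounds this by $\sqrt{0.13+\epsilon}\,\|\bh\|$. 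The second radical is at most $\sqrt{1+\epsilon}$ uniformly in $\bu$ by Lemma \ref{lem:lemma1}. Consequently the second piece is at most $2\sqrt{(0.13+\epsilon)(1+\epsilon)}\,\|\bh\|$, which for small $\epsilon$ is strictly below $\|\bh\|$ since $2\sqrt{0.13}<1$.

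Adding the two bounds and choosing $\epsilon$ sufficiently small relative to $\delta$ gives
\[
\|\nabla\ell(\bz)\|\le (1+\tfrac{\delta}{2})\|\bh\|+2\sqrt{(0.13+\epsilon)(1+\epsilon)}\|\bh\|\le 2(1+\delta)\|\bh\|.
\]
A union bound over the three high-probability events (covariance concentration and the ones underlying Lemmas \ref{lem:lemma1} and \ref{lem:lemma2}) yields failure probability at most $C\exp(-cm)$. The main obstacle is the second piece: the summand is only linear in $\ba_i^T\bx$, so no direct $\chi^2$-style concentration is available. The trick that unlocks the proof is to trade a factor of $|\ba_i^T\bx|$ for $|\ba_i^T\bh|$ on the sign-flip event, converting the quantity into something quadratic in $\ba_i^T\bh$ that Lemma \ref{lem:lemma2} already controls; once this substitution is spotted, the rest is routine Cauchy--Schwarz and a union bound, and no new probabilistic machinery beyond Lemmas \ref{lem:lemma1}--\ref{lem:lemma2} is needed.
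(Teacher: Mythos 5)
Your proposal is correct and follows essentially the same route as the paper: the paper writes $\nabla\ell(\bz)=\frac{1}{m}\bA^T\bv$ with $v_i=\ba_i^T\bh+2(\ba_i^T\bx)\bone_{\{(\ba_i^T\bz)(\ba_i^T\bx)<0\}}$, bounds $\|\bA\|\le\sqrt{m}(1+\delta)$, and controls the two pieces of $\bv$ via Lemma \ref{lem:lemma1} and (through the same domination $|\ba_i^T\bx|<|\ba_i^T\bh|$ on the sign-flip event) Lemma \ref{lem:lemma2}. Your splitting of the gradient before applying the operator-norm/Cauchy--Schwarz step, rather than after, is only a cosmetic reordering of the same estimates.
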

\begin{proof}
See Section \ref{sec:prooflemma3}.
\end{proof}
 
Thus, applying Lemma \ref{lem:lemma3} to \eqref{eq:rc1}, we conclude that \emph{Regularity Condition} \eqref{eq:RCsupp} holds for $\mu$ and $\lambda$ satisfying
\begin{flalign}
	0.74-2\epsilon\ge \frac{\mu}{2}\cdot 4(1+\delta)^2+\frac{\lambda}{2},
\end{flalign} 
which concludes the proof. The proofs of two major lemmas are provided in the following two subsections.

\subsection{Proof of Lemma \ref{lem:lemma2}}\label{sec:prooflemma2}

We first prove bounds for any fixed $\bh\le \frac{1}{10}\|\bx\|$, and then develop a uniform bound later on.
We introduce a series of auxiliary random Lipschitz functions to approximate the indicator functions.  For $i=1,\ldots, m$, define
\begin{flalign}
	\chi_i(t):=\begin{cases}
t, &\text{if } t>(\ba_i^T\bx)^2;\\
\frac{1}{\delta}(t-(\ba_i^T\bx)^2)+(\ba_i^T\bx)^2, & \text{if }  (1-\delta)(\ba_i^T\bx)^2\le t\le (\ba_i^T\bx)^2;\\
0, & \text{else};
\end{cases}
\end{flalign}
and then $\chi_i(t)$'s are random Lipschitz functions with Lipschitz constant $\frac{1}{\delta}$. We further have 
\begin{flalign}
	|\ba_i^T\bh|^2\bone_{\{|\ba_i^T\bx|<|\ba_i^T\bh|\}}\le \chi_i(|\ba_i^T\bh|^2) \le  |\ba_i^T\bh|^2\bone_{\{(1-\delta)|\ba_i^T\bx|^2<|\ba_i^T\bh|^2\}}.
\end{flalign}

For convenience, we denote $\gamma_i:=\frac{|\ba_i^T\bh|^2}{\|\bh\|^2} \bone_{\{(1-\delta)|\ba_i^T\bx|^2<|\ba_i^T\bh|^2\}}$ and  $\theta:=\|\bh\|/\|\bx\|$. We next estimate the expectation of $\gamma_i$, by conditional expectation,
\begin{flalign}
	\mE [\gamma_i]=\int_{\Omega}\gamma_i d\bbP  =\iint_{-\infty}^\infty \mE\left[\gamma_i\big|\ba_i^T\bx=\tau_1\|\bx\|, \ba_i^T\bh=\tau_2\|\bh\|\right]\cdot f(\tau_1,\tau_2) d\tau_1 d\tau_2 ,
	\end{flalign}
where  $f(\tau_1,\tau_2)$  is the density of two joint Gaussian random variables with correlation $\rho=\frac{\bh^T\bx}{\|\bh\|\|\bx\|}\neq \pm 1$. We then continue to derive
\begin{flalign}
	\mE[\gamma_i] &= \iint_{-\infty}^\infty  \tau_2^2 \cdot\bone_{\{\sqrt{1-\delta}|\tau_1|<|\tau_2|\theta\}} \cdot f(\tau_1,\tau_2) d\tau_1 d\tau_2\nn\\
	&= \frac{1}{2\pi\sqrt{1-\rho^2}}\int_{-\infty}^\infty  \tau_2^2 \exp\left(-\frac{\tau_2^2}{2}\right) \cdot\int_{\frac{-|\tau_2|\theta}{\sqrt{1-\delta}}}^{\frac{|\tau_2|\theta}{\sqrt{1-\delta}}}\exp\left(-\frac{(\tau_1-\rho\tau_2)^2}{2(1-\rho^2)}\right) d\tau_1 d\tau_2 \label{eq:increasefun}\\
	&= \frac{1}{2\pi}\int_{-\infty}^\infty  \tau_2^2 \exp\left(-\frac{\tau_2^2}{2}\right) \cdot\int_{\frac{-\frac{|\tau_2|\theta}{\sqrt{1-\delta}}-\rho\tau_2}{\sqrt{1-\rho^2}}}^{\frac{\frac{|\tau_2|\theta}{\sqrt{1-\delta}}-\rho\tau_2}{\sqrt{1-\rho^2}}}\exp\left(-\frac{\tau^2}{2}\right) d\tau d\tau_2 \quad\quad\text{by changing variables}\nn\\
		&= \frac{1}{2\pi}\int_{-\infty}^\infty  \tau_2^2 \exp\left(-\frac{\tau_2^2}{2}\right) \cdot \sqrt{\frac{\pi}{2}}\left(\erf\left(\frac{\frac{|\tau_2|\theta}{\sqrt{1-\delta}}-\rho\tau_2}{\sqrt{1-\rho^2}}\right)-\erf\left(\frac{-\frac{|\tau_2|\theta}{\sqrt{1-\delta}}-\rho\tau_2}{\sqrt{1-\rho^2}}\right) \right) d\tau_2\nn\\
		&= \frac{1}{\sqrt{2\pi}}\int_{0}^\infty  \tau_2^2 \exp\left(-\frac{\tau_2^2}{2}\right) \cdot \left(\erf\left(\frac{(\frac{\theta}{\sqrt{1-\delta}}-\rho)\tau_2}{\sqrt{1-\rho^2}}\right)+\erf\left(\frac{(\frac{\theta}{\sqrt{1-\delta}}+\rho)\tau_2}{\sqrt{1-\rho^2}}\right) \right) d\tau_2. \label{eq:hardint}
\end{flalign}

For $|\rho|<1$, $\mE[\gamma_i]$ is a continuous function of $\rho$. For $|\rho|=1$, $\mE[\gamma_i]=0$. The last integral \eqref{eq:hardint} can be calculated numerically. Figure \ref{fig:hardintsupp} plots $\mE[\gamma_i]$ for $\theta=0.1$ and $\delta=0.01$ over $\rho\in [-1,1]$. Furthermore, \eqref{eq:increasefun} indicates that $\mE[\gamma_i]$ is monotonically increasing with both $\theta$ and $\delta$. Thus, we obtain a universal bound
\begin{flalign}
\mE[\gamma_i]\le 0.13 \quad \text{for } \theta<0.1 \text{ and } \delta=0.01,
\end{flalign}
which further implies $\mE[\chi_i(|\ba_i^T\bh|^2)]\le 0.13\|\bh\|^2$ for $ \theta<0.1$ and $\delta=0.01$.

\begin{figure}[th]
\centering 
\includegraphics[width=3.5in]{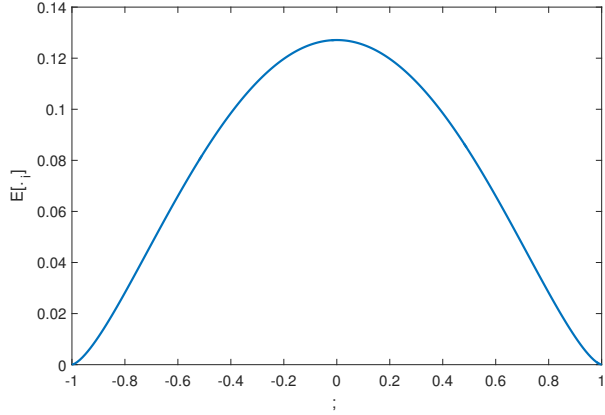}
\caption{$\mE[\gamma_i]$ with respect to $\rho$}
\label{fig:hardintsupp}
\end{figure}

Furthermore, $\chi_i(|\ba_i^T\bh|^2)$'s are sub-exponential with sub-exponential norm $\cO(\|\bh\|^2)$.
By the sub-exponential tail bound (Bernstein type) \cite{Vershynin2012}, we have
\begin{flalign}
	\cP\left[	\frac{1}{m}\sum_{i=1}^m \frac{\chi_i(|\ba_i^T\bh|^2)}{\|\bh\|^2} >\left(0.13+\epsilon\right) \right]<\exp(-cm\epsilon^2),
\end{flalign}
for some universal constant $c$, as long as $\|\bh\|\le \frac{1}{10}\|\bx\|$.

We have proved so far that the claim holds for a fixed $\bh$. 
We next obtain a uniform bound over all $\bh$ satisfying $\|\bh\|\le \frac{1}{10}\|\bx\|$. We first show the claim holds for all $\bh$ with $\|\bh\|=\frac{1}{10}\|\bx\|$ and then argue the claim holds when $\|\bh\|<\frac{1}{10}\|\bx\|$ towards the end of the proof. 
Let $\epsilon'=\epsilon\frac{\|\bx\|}{10}$ and we construct an $\epsilon'-$net $\cN_{\epsilon'}$ covering the sphere with radius $\frac{1}{10}\|\bx\|$ in $\bbR^n$ with cardinality $\left|\cN_{\epsilon'}\right|\le (1+\frac{2}{\epsilon})^n$. Then for any $\|\bh\|=\frac{1}{10}\|\bx\|$, there exists a $\bh_0\in \cN_{\epsilon'}$ such that $\|\bh-\bh_0\|\le \epsilon\|\bh\|$. Taking the union bound for all the points on the net, we claim that
\begin{flalign}
	\frac{1}{m}\sum_{i=1}^m \chi_i\left(|\ba_i^T\bh_0|^2\right) \le\left(0.13+\epsilon\right)\|\bh_0\|^2, \quad \forall \bh_0\in \cN_{\epsilon'} \label{eq:lemma2onnet}
\end{flalign}
holds with probability at least $1-(1+2/\epsilon)^n\exp(-cm\epsilon^2)$.

Since $\chi_i(t)$'s are Lipschitz functions with constant $1/\delta$, we have the following bound
\begin{flalign}
	\left|\chi_i(|\ba_i^T\bh|^2)-\chi_i(|\ba_i^T\bh_0|^2)\right|\le \frac{1}{\delta}\big||\ba_i^T\bh|^2-|\ba_i^T\bh_0|^2\big|. \label{eq:lipschitzcon}
\end{flalign}
Moreover, by \cite[Lemma 1]{chen2015solving}, we have 
\begin{flalign}
	\frac{1}{m}\|\cA(\bM)\|_1\le c_2 \|\bM\|_F, \quad \quad \text{for all symmetric rank-2 matrices } \bM\in \bbR^{n\times n}, \label{eq:eventA}
\end{flalign}
holds with probability at least $1-C\exp(-c_1m)$ as long as $m>c_0 n$ for some constants $C, c_0, c_1, c_2>0$. Consequently, on the event that \eqref{eq:eventA} holds, we have
\begin{flalign*}
	&\left|\frac{1}{m}\sum_{i=1}^m \chi_i\left(|\ba_i^T\bh|^2\right)-\frac{1}{m}\sum_{i=1}^m \chi_i\left(|\ba_i^T\bh_0|^2\right)\right|\\
	&\le \frac{1}{m}\sum_{i=1}^m \left|\chi_i\left(|\ba_i^T\bh|^2\right)-\chi_i\left(|\ba_i^T\bh_0|^2\right)\right|\\
	&\le \frac{1}{\delta}\cdot \frac{1}{m}\|\cA(\bh\bh^T-\bh_0\bh_0^T)\|_1 \quad\quad \text{because of \eqref{eq:lipschitzcon}}\\
	&\le \frac{1}{\delta} \cdot c_2\|\bh\bh^T-\bh_0\bh_0^T\|_F \quad\quad \text{because of \eqref{eq:eventA}}\\
	&\le \frac{1}{\delta}\cdot 3c_2\|\bh-\bh_0\|\cdot\|\bh\|\le 3c_3\epsilon/\delta\|\bh\|^2,
\end{flalign*}
where the last inequality is due to the Lemma 2 in \cite{chen2015solving}.

On the intersection of events that \eqref{eq:lemma2onnet} and \eqref{eq:eventA} hold, we have
\begin{flalign}
\frac{1}{m}\sum_{i=1}^m \chi_i\left(|\ba_i^T\bh|^2\right) \le\left(0.13+\epsilon+3c_3\epsilon/\delta\right)\|\bh\|^2, \label{eq:unitnet}
\end{flalign}
for all $\bh$ with $\|\bh\|=\frac{1}{10}\|\bx\|$.  

For the case when $\|\bh'\|<\frac{1}{10}\|\bx\|$,  $\bh'=\omega \bh$ for some $\bh$ satisfying $\|\bh\|=\frac{1}{10}\|\bx\|$ and $0<\omega<1$. By the definition of $\chi_i(\cdot)$, it can be verified that 
\begin{flalign}
	\chi_i(|\ba_i^T\bh'|^2)=\chi_i(|\ba_i^T(\omega\bh)|^2)\le \omega^2\chi_i(|\ba_i^T\bh|^2).
\end{flalign}
Applying \eqref{eq:unitnet}, on the same event that \eqref{eq:lemma2onnet} and \eqref{eq:eventA} hold, we have 
\begin{flalign}
\frac{1}{m}\sum_{i=1}^m \chi_i\left(|\ba_i^T\bh'|^2\right) \le\left(0.13+\epsilon+3c_3\epsilon/\delta\right)\|\bh'\|^2,
\end{flalign}
for all $\|\bh'\|<\frac{1}{10}\|\bx\|$. Since $\epsilon$ can be arbitrarily small, the proof is completed.

\subsection{Proof of Lemma \ref{lem:lemma3}}\label{sec:prooflemma3}

Denote $v_i:=\ba_i^T\bz-|\ba_i^T\bx|\sgn(\ba_i^T\bz)$. Then 
\begin{flalign}
	\nabla\ell(\bz)=\frac{1}{m}\bA^T\bv,
\end{flalign}
where $\bA$ is a matrix with each row being $\ba_i^T$ and $\bv$ is a $m-$dimensional vector with each entry being $v_i$. Thus,
\begin{flalign}
\left\|\nabla \ell(\bz)\right\|=\left\|\frac{1}{m}\bA^T\bv\right\|\le \frac{1}{m}\|\bA\|\cdot \|\bv\|\le (1+\delta)\frac{\|\bv\|}{\sqrt{m}}
\end{flalign}
as long as $m\ge c_1 n$ for some sufficiently large $c_1>0$, where the spectral norm bound $\|\bA\|\le \sqrt{m}(1+\delta)$ follows from \cite[Theorem 5.32]{Vershynin2012}.

We next bound $\|\bv\|$. Let $\bv=\bv^{(1)}+\bv^{(2)}$, where $v_i^{(1)}=\ba_i^T\bh$ and $v_i^{(2)}=2\ba_i^T\bx\bone_{\{(\ba_i^T\bz)(\ba_i^T\bx)<0\}}$. By triangle inequality, we have
$\|\bv\|\le \|\bv^{(1)}\|+\|\bv^{(2)}\|$. Furthermore, given $m>c_0n$, by \cite[Lemma 3.1]{candes2013phaselift} with probability $1-\exp(-cm)$, we have
\begin{flalign}
	\frac{1}{m}\|\bv^{(1)}\|^2=\frac{1}{m}\sum_{i=1}^{m} (\ba_i^T\bh)^2\le(1+\delta)\|\bh\|^2.
\end{flalign}
By Lemma \ref{lem:lemma2}, we have with probability $1-C\exp(-c_1 m)$
\begin{flalign}
	\frac{1}{m}\|\bv^{(2)}\|^2=\frac{1}{m}\sum_{i=1}^m 4(\ba_i^T\bx)^2\cdot \bone_{\{(\ba_i^T\bx)(\ba_i^T\bz)<0\}}\le 4(0.13+\epsilon)\|\bh\|^2.
\end{flalign}
Hence,
\begin{flalign}
\frac{\|\bv\|}{\sqrt{m}}\le [\sqrt{1+\delta}+2\sqrt{0.13+\epsilon}]\|\bh\|.
\end{flalign}
This concludes the proof. 

\section{Proof of Theorem 2: Stability to Bounded Noise}\label{sec:proofofstability}

We consider the model \eqref{eq:noisymodel} with bounded noise, i.e., $y_i=\left|\langle \ba_i,\bx \rangle\right|+w_i$ for $ i=1,\cdots,m$. The initialization analysis is similar to Appendix~\ref{supp:initialization}. To analyze the gradient loop, we consider two regimes.

$\bullet$ \textbf{Regime 1}: $c_4\|\bz\|\ge \|\bh\|\ge c_3\frac{\|\bw\|}{\sqrt{m}}$. In this regime, error contraction by each gradient step is given by
\begin{flalign}
	\dist\left(\bz+\mu\nabla \ell(\bz), \bx\right)\le (1-\rho) \dist(\bz,\bx).
\end{flalign}
It suffices to justify that $\nabla \ell(\bz)$ satisfies the $\mathsf{RC}$.  
We have
\begin{flalign}
\nabla \ell(\bz) &= \frac{1}{m}\sum_{i=1}^m \left(\ba_i^T\bz-y_i\cdot \frac{\ba_i^T\bz}{|\ba_i^T\bz|}\right) \ba_i = \underbrace{\frac{1}{m}\sum_{i=1}^m \left(\ba_i^T\bz-|\ba_i^T\bx|\cdot \frac{\ba_i^T\bz}{|\ba_i^T\bz|}\right)\ba_i}_{\nabla^{clean}\ell(\bz)} -\underbrace{\frac{1}{m}\sum_{i=1}^m \left(w_i\cdot \frac{\ba_i^T\bz}{|\ba_i^T\bz|}\right)\ba_i}_{\nabla^{noise}\ell(\bz)}.
\end{flalign}

All the proofs for Lemma \ref{lem:lemma1}, \ref{lem:lemma2} and \ref{lem:lemma3} are still valid for $\nabla^{clean} \ell(\bz)$, and thus we have
\begin{flalign}
	&\frac{1}{m}\langle \nabla^{clean} \ell(\bz), \bh\rangle \ge 0.74\|\bh\|^2,\\
	&\frac{1}{m}\left\|\nabla^{clean}\ell(\bz)\right\|\le 2(1+\delta)\|\bh\|.
\end{flalign}

Next, we analyze the contribution of the noise. Let $\tilde{w}_i=w_i\frac{\ba_i^T\bz}{|\ba_i^T\bz|}$, and then for sufficient large $m/n$, we have
\begin{flalign}
\|\nabla^{noise}\ell(\bz)\|=\left\|\frac{1}{m}\bA^T\tilde{\bw}\right\|\le \left\|\frac{1}{\sqrt{m}}\bA^T\right\|\left\|\frac{\tilde{\bw}}{\sqrt{m}}\right\|\le (1+\delta)\frac{\|\tilde{\bw}\|}{\sqrt{m}}\le (1+\delta)\frac{\|\bw\|}{\sqrt{m}},
\end{flalign}
where the second inequality is because the spectral norm bound $\|\bA\|\le \sqrt{m}(1+\delta)$ following from \cite[Theorem 5.32]{Vershynin2012}. Given the regime condition $\|\bh\|\ge c_3\frac{\|\bw\|}{\sqrt{m}}$, we further have
\begin{flalign}
&\|\nabla^{noise}\ell(\bz)\|\le  \frac{(1+\delta)}{c_3}\|\bh\|,\\
& \left|\left\langle \nabla^{noise} \ell(\bz), \bh\right\rangle\right| \le   \left\| \nabla^{noise} \ell(\bz)\right\|\cdot\|\bh\| \le \frac{(1+\delta)}{c_3}\|\bh\|^2.
\end{flalign}

Combining these together, one has
\begin{flalign}
\left\langle \nabla \ell(\bz), \bh \right\rangle & \ge \left\langle \nabla^{clean}\ell(\bz),\bh \right\rangle-\left|\left\langle  \nabla^{noise} \ell(\bz), \bh\right\rangle\right|\ge \left(0.74-\frac{(1+\delta)}{c_3}\right)\|\bh\|^2,
\end{flalign}
and
\begin{flalign}
\left\|\nabla \ell(\bz)\right\|&\le\left\|\nabla^{clean} \ell(\bz)\right\|+\left\|\nabla^{noise} \ell(\bz)\right\| \le (1+\delta)\left(2+\frac{1}{c_3}\right) \|\bh\|.
\end{flalign}

The $\mathsf{RC}$ is guaranteed if $\mu,\lambda,\epsilon$ are chosen properly, $c_3$ is sufficiently large, and $s$ is sufficiently small.

$\bullet$ \textbf{Regime 2}: Once the iterate enters the regime with $\|\bh\|\le \frac{c_3\|\bw\|}{\sqrt{m}}$, gradient update may not reduce the estimation error. However, in this regime, each move size $\mu\nabla \ell(\bz)$ is at most $\mathcal{O}(\|\bw\|/\sqrt{m})$. Then the estimation error cannot increase by more than $\|\bw\|/\sqrt{m}$ with a constant factor. Thus, one has
\begin{flalign}
	\dist\left(\bz+\mu\nabla \ell(\bz),\bx\right)\le c_5\frac{\|\bw\|}{\sqrt{m}}
\end{flalign}
for some constant $c_5$. As long as $\|\bw\|/\sqrt{m}$ is sufficiently small, it is guaranteed that $c_5\frac{\|\bw\|}{\sqrt{m}}\le c_4\|\bx\|$. If the iterate jumps out of \emph{Regime 2}, it falls into \emph{Regime 1}.

\section{Convergence of IRWF, Minibatch IRWF and Kaczmarz-PR}\label{supp:incremental}

\subsection{Proof of Theorem \ref{th:incremental}}\label{sec:proofofincremental}

Since the initialization is the same as that in Algorithm \ref{alg:rwf}, it suffices to show the convergence of gradient loops given that the initial point lands into the neighborhood of global minimums. 
To prove Theorem \ref{th:incremental}, the major step is to prove the following Proposition \ref{prop:oneiterate} which characterizes how the error of an estimate decays upon one iteration of Algorithm \ref{alg:irwf}. Once Proposition \ref{prop:oneiterate} is established, we take expectation on both sides of \eqref{eq:oneiterate} with respect to $i_{t-1}$, and apply Proposition \ref{prop:oneiterate} one more time to obtain
\begin{flalign}
\mE_{\{i_{t-1}, i_t\}}\left[\dist^2(\bz^{(t+1)},\bx)\right]\le \left(1-\frac{\rho}{n}\right)^2\dist^2(\bz^{(t-1)},\bx).
\end{flalign}
Continuing this process until the initialization point $\bz^{(0)}$ yields Theorem \ref{th:incremental}. We next focus on proving Proposition \ref{prop:oneiterate} stated bellow
\begin{proposition}\label{prop:oneiterate}
Assume the measurement vectors are independent and each $\ba_i\sim \cN(0,\bI)$.
There exist some universal constants $0<\rho,\rho_0<1$ and $c_0,c_1,c_2>0$ such that if $m\ge c_0 n$ and $\mu=\frac{\rho_0}{n}$, then with probability at least $1-c_1\exp(-c_2 m)$, we have
\begin{flalign}
	\mE_{i_t}\left[\dist^2(\bz^{(t+1)},\bx)\right]\le \left(1-\frac{\rho}{n}\right)\cdot \dist^2(\bz^{(t)},\bx)		\label{eq:oneiterate}
\end{flalign}
to hold for all $\bz^{(t)}$ satisfying $\frac{\dist(\bz^{(t)},\bx)}{\|\bz\|}\le \frac{1}{10}$.
\end{proposition}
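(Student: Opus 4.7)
The plan is to expand $\|\bz^{(t+1)}-\bx\|^2$ directly from the one-step update \eqref{eq:incrementalupdate}, take expectation over the random index $i_t$, and then invoke the technical lemmas already established in Appendix~\ref{supp:convergence} to show a $1/n$-order contraction. After phase-alignment, write $\bh := \bz^{(t)}-\bx$ with $\|\bh\|\le \tfrac{1}{10}\|\bx\|$, and abbreviate $v_i := \ba_i^T\bz^{(t)} - y_i\cdot\ba_i^T\bz^{(t)}/|\ba_i^T\bz^{(t)}|$. Since $\bz^{(t+1)} - \bx = \bh - \mu v_{i_t}\ba_{i_t}$, expansion and taking $\mE_{i_t}[\cdot]$ (uniform over $\{1,\ldots,m\}$) yields
\begin{equation*}
\mE_{i_t}\!\bigl[\|\bz^{(t+1)}-\bx\|^2\bigr] = \|\bh\|^2 - 2\mu\bigl\langle \nabla\ell(\bz^{(t)}),\bh\bigr\rangle + \frac{\mu^2}{m}\sum_{i=1}^m v_i^2\|\ba_i\|^2,
\end{equation*}
and since $\dist^2(\bz^{(t+1)},\bx)\le \|\bz^{(t+1)}-\bx\|^2$, it suffices to bound the right-hand side.

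The linear (cross) term is handled by exactly the argument already used for Theorem~\ref{th:mainthm}: on the high-probability event of Lemmas~\ref{lem:lemma1} and~\ref{lem:lemma2}, one has the uniform lower bound $\bigl\langle \nabla\ell(\bz^{(t)}),\bh\bigr\rangle \ge (0.74-2\epsilon)\|\bh\|^2$ for every $\bh$ with $\|\bh\|\le \tfrac{1}{10}\|\bx\|$. The genuinely new piece is a deterministic (uniform in $\bz^{(t)}$) upper bound on $\frac{1}{m}\sum_i v_i^2\|\ba_i\|^2$ of order $n\|\bh\|^2$. To obtain it, a sign-flip case analysis gives $v_i = \ba_i^T\bh$ whenever $\sgn(\ba_i^T\bz^{(t)}) = \sgn(\ba_i^T\bx)$ and $v_i = \ba_i^T\bh + 2\ba_i^T\bx$ otherwise, hence
\begin{equation*}
v_i^2 \le 2(\ba_i^T\bh)^2 + 8(\ba_i^T\bx)^2\,\bone_{\{(\ba_i^T\bz^{(t)})(\ba_i^T\bx)<0\}}.
\end{equation*}
A standard $\chi^2_n$ concentration and union bound give $\max_{i\le m}\|\ba_i\|^2 \le (1+\delta)n$ with probability at least $1-m\exp(-c\delta^2 n)$. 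On this event, Lemma~\ref{lem:lemma1} controls the first summand by $\lesssim n\|\bh\|^2$; for the second, the sign-flip event forces $(\ba_i^T\bx)^2 < (\ba_i^T\bh)^2$ (from $(\ba_i^T\bx)^2 < -(\ba_i^T\bx)(\ba_i^T\bh)$), so Lemma~\ref{lem:lemma2} again produces $\lesssim n\|\bh\|^2$. Combined, there is a universal constant $C_1$ with $\frac{1}{m}\sum_i v_i^2\|\ba_i\|^2 \le C_1 n\|\bh\|^2$ uniformly on the intersection of these high-probability events.

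Plugging both bounds in with $\mu = \rho_0/n$ yields
\begin{equation*}
\mE_{i_t}\!\bigl[\dist^2(\bz^{(t+1)},\bx)\bigr] \le \Bigl(1 - \tfrac{2\rho_0(0.74-2\epsilon) - C_1\rho_0^2}{n}\Bigr)\|\bh\|^2,
\end{equation*}
and choosing $\rho_0$ sufficiently small (so that the numerator is a constant in $(0,1)$) gives the claimed $\rho$, finishing the proof. The main obstacle I anticipate is securing the \emph{uniform} (over $i$) bound on $\|\ba_i\|^2$ simultaneously with the matrix/Lipschitz bounds inside Lemmas~\ref{lem:lemma1}--\ref{lem:lemma2}, since the $1/n$ step size leaves no slack: both the linear and quadratic terms must be controlled by constants of the same order in $\|\bh\|^2$, with the sign-flip contribution to $v_i^2$ being the delicate part where the $(\ba_i^T\bx)^2$ factor has to be traded against the sign-flip indicator via the already-established Lemma~\ref{lem:lemma2}.
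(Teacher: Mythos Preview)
Your proposal is correct and follows the same skeleton as the paper's proof: expand $\|\bz^{(t+1)}-\bx\|^2$, take $\mE_{i_t}$, bound the cross term via Lemmas~\ref{lem:lemma1}--\ref{lem:lemma2}, bound the quadratic term by pulling out $\max_i\|\ba_i\|^2=\cO(n)$, and finish with $\mu=\rho_0/n$.

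The one substantive difference is in the quadratic term, and it is worth noting because it dissolves exactly the ``delicate part'' you flagged. Rather than the crude inequality $v_i^2\le 2(\ba_i^T\bh)^2+8(\ba_i^T\bx)^2\bone_{\{i\in\cS\}}$, the paper uses the \emph{exact} identity
\[
v_i^2=(\ba_i^T\bh)^2+4(\ba_i^T\bx)(\ba_i^T\bz^{(t)})\bone_{\{i\in\cS\}},
\]
and observes that the sign-flip summand is \emph{negative} by the very definition of $\cS$, so it can simply be dropped. This leaves only $\frac{\mu^2}{m}\sum_i\|\ba_i\|^2(\ba_i^T\bh)^2$, which is handled by $\max_i\|\ba_i\|^2\le 6n$ (their event $E_1$) together with Lemma~\ref{lem:lemma1}. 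Your route---trading $(\ba_i^T\bx)^2\bone_{\{i\in\cS\}}$ for $(\ba_i^T\bh)^2\bone_{\{|\ba_i^T\bx|<|\ba_i^T\bh|\}}$ and invoking Lemma~\ref{lem:lemma2}---is also valid, but yields looser constants and is unnecessary once you see the sign.
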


\begin{proof}
Without loss of generality, we assume $\bz^{(t)}$ is in the neighborhood of $\bx$ (otherwise it is in the neighborhood of $-\bx$). Let $\bh=\bz^{(t)}-\bx$.

We follow the notations in Appendix \ref{supp:convergence} and let $\cS=\{i: (\ba_i^T\bx)(\ba_i^T\bz^{(t)})<0\}$. Then we have
\begin{flalign}
\mE_{i_t}&\left[\dist^2\left(\bz^{(t+1)}, \bx\right)\right] \nn\\
&=\mE_{i_t}\left[\left\|\left(\bz^{(t)}-\mu\left(\ba_{i_t}^T\bz^{(t)}-y_{i_t}\cdot \frac{\ba_{i_t}^T\bz^{(t)}}{|\ba_{i_t}^T\bz^{(t)}|}\right)\ba_{i_t}\right)-\bx\right\|^2\right] \nn\\
&=\|\bh\|^2-2\mu\mE_{i_t}\left[\ba_{i_t}^T\bh\left(\ba_{i_t}^T\bz^{(t)}-y_{i_t}\cdot\frac{\ba_{i_t}^T\bz^{(t)}}{|\ba_{i_t}^T\bz^{(t)}|}\right)\right]+\mu^2\mE_{i_t}\left[\|\ba_{i_t}\|^2\left(\ba_{i_t}^T\bz^{(t)}-y_{i_t}\cdot\frac{\ba_{i_t}^T\bz^{(t)}}{|\ba_{i_t}^T\bz^{(t)}|}\right)^2\right] \nn\\
&\overset{(a)}{=}\|\bh\|^2-\frac{2\mu}{m}\sum_{i=1}^m\left[\ba_{i}^T\bh\left(\ba_{i}^T\bz^{(t)}-y_{i}\cdot\frac{\ba_{i}^T\bz^{(t)}}{|\ba_{i}^T\bz^{(t)}|}\right)\right]+\frac{\mu^2}{m}\sum_{i=1}^m\left[\|\ba_{i}\|^2\left(\ba_{i}^T\bz^{(t)}-y_{i}\cdot\frac{\ba_{i}^T\bz^{(t)}}{|\ba_{i}^T\bz^{(t)}|}\right)^2\right] \nn\\
&=\|\bh\|^2-\frac{2\mu}{m}\left(\sum_{i=1}^m(\ba_{i}^T\bh)^2+\sum_{i\in \cS}2(\ba_{i}^T\bh)(\ba_i^T\bx)\right)+\frac{\mu^2}{m}\left(\sum_{i=1}^m\|\ba_i\|^2(\ba_i^T\bh)^2+4\sum_{i\in \cS}\|\ba_i\|^2(\ba_i^T\bx)(\ba_i^T\bz^{(t)})\right)\nn\\
&\le \|\bh\|^2-\frac{2\mu}{m}\sum_{i=1}^m(\ba_{i}^T\bh)^2+\frac{4\mu}{m}\sum_{i\in \cS}\left|(\ba_{i}^T\bh)(\ba_i^T\bx)\right|+\frac{\mu^2}{m}\sum_{i=1}^m\|\ba_i\|^2(\ba_i^T\bh)^2, \label{eq:incrementalexpansion}
\end{flalign}
where $(a)$ is due to the fact $i_t$ is sampled uniformly at random from $\{1,2,\cdots, m\}$.
Applying Lemma \ref{lem:lemma1}, we have that if $m\ge c_0 \epsilon^{-2}n$, then with probability $1-2\exp(-c_1m\epsilon^2)$
\begin{flalign*}
(1-\epsilon)\|\bh\|^2\le\frac{1}{m}\sum_{i=1}^m(\ba_{i}^T\bh)^2\le (1+\epsilon)\|\bh\|^2
\end{flalign*}
holds for all vectors $\bh$.
Furthermore, by Lemma \ref{lem:lemma2}, we have that with probability $1-C \exp(-c_1  m\epsilon^2)$,
\begin{flalign*}
\frac{1}{m}\sum_{i\in \cS}\left|(\ba_{i}^T\bh)(\ba_i^T\bx)\right|\le (0.13+\epsilon)\|\bh\|^2
\end{flalign*}
holds for all $\bh$ satisfying $\|\bh\|/\|\bx\|\le \frac{1}{10}$.

  Define an event $E_1:=\{\max_{1\le i\le m}\|\ba_i\|^2\le 6n\}$. It can be shown that $\bbP\{E_1\}\ge 1-m \exp (-1.5n)$. Then on the event $E_1$, \eqref{eq:incrementalexpansion} is further upper bounded by
  \begin{flalign}
\mE_{i_t}\left[\dist^2\left(\bz^{(t+1)}, \bx\right)\right]&\le \left(1-2\mu(1-\epsilon)+4\mu(0.13+\epsilon)+\mu^2\cdot 6n(1+\epsilon)\right)\|\bh\|^2\nn\\
	&\le \large(1-2\mu(0.74-3\epsilon-3n(1+\epsilon)\mu)\large)\|\bh\|^2.\label{eq:incrementalLast}
\end{flalign}
By choosing the step size $\mu\le \frac{0.24}{n}$, the proposition is proved.
\end{proof}

\subsection{Proof of Theorem \ref{th:blockincremental}}\label{suppsub:blockincremental}

As argued in Appendix \ref{sec:proofofincremental}, it suffices to show that one iteration of Algorithm \ref{alg:irwf} satisfies the following property.
\begin{proposition}\label{prop:oneiterateblock}
Assume the measurement vectors are independent and each $\ba_i\sim \cN(0,\bI)$.
There exist some universal constants $0<\rho,\rho_0<1$ and $c_0,c_1,c_2>0$ such that if $m\ge c_0 n$ and $\mu=\rho_0/n$ for the update rule \eqref{eq:birwfUpdate}, then with probability at least $1-c_1\exp(-c_2 m)$, we have
\begin{flalign}
	\mE_{\Gamma_t}\left[\dist^2(\bz^{(t+1)},\bx)\right]\le \left(1-\frac{k\rho}{n}\right)\cdot \dist^2(\bz^{(t)},\bx)		\label{eq:oneiterateblock}
\end{flalign}
to hold for all $\bz^{(t)}$ satisfying $\frac{\dist(\bz^{(t)},\bx)}{\|\bz\|}\le \frac{1}{10}$.
\end{proposition}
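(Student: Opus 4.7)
The plan is to follow the same strategy as the proof of Proposition \ref{prop:oneiterate} in Appendix \ref{sec:proofofincremental}, with adjustments to accommodate sampling a uniformly random size-$k$ subset $\Gamma_t$ of $\{1,\dots,m\}$ rather than a single index. After absorbing the global phase, write $\bh := \bz^{(t)}-\bx$ (so $\bh$ lies in the regime required by Lemmas \ref{lem:lemma1}--\ref{lem:lemma3}) and define the full residual vector $\bv\in\bbR^m$ by $v_i = \ba_i^T\bz^{(t)} - y_i\, \ba_i^T\bz^{(t)}/|\ba_i^T\bz^{(t)}|$, with $\bv_{\Gamma_t}$ its restriction to $\Gamma_t$. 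The starting point is the expansion
\begin{flalign*}
\mE_{\Gamma_t}\|\bz^{(t+1)}-\bx\|^2 = \|\bh\|^2 - 2\mu\,\mE_{\Gamma_t}\langle \bA_{\Gamma_t}^T\bv_{\Gamma_t},\bh\rangle + \mu^2\,\mE_{\Gamma_t}\|\bA_{\Gamma_t}^T\bv_{\Gamma_t}\|^2.
\end{flalign*}

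Each index lies in $\Gamma_t$ with probability $k/m$ by symmetry, so the linear term collapses cleanly to the batch-gradient inner product: $\mE_{\Gamma_t}\langle \bA_{\Gamma_t}^T\bv_{\Gamma_t},\bh\rangle = (k/m)\sum_{i=1}^m v_i(\ba_i^T\bh) = k\langle\nabla\ell(\bz^{(t)}),\bh\rangle$. Applying Lemmas \ref{lem:lemma1} and \ref{lem:lemma2} exactly as in Appendix \ref{supp:convergence} gives the lower bound $\mE_{\Gamma_t}\langle \bA_{\Gamma_t}^T\bv_{\Gamma_t},\bh\rangle \ge k(0.74-2\epsilon)\|\bh\|^2$ on the same high-probability event that underlies Proposition \ref{prop:oneiterate}.

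The main obstacle is the quadratic term $\mE_{\Gamma_t}\|\bA_{\Gamma_t}^T\bv_{\Gamma_t}\|^2$, which unlike in the single-index case mixes rows of $\bA$ through cross-products $\ba_i^T\ba_j$ for distinct $i,j\in\Gamma_t$. I plan to sidestep the combinatorics by using the crude spectral bound $\|\bA_{\Gamma_t}^T\bv_{\Gamma_t}\|^2 \le \|\bA_{\Gamma_t}\|^2\|\bv_{\Gamma_t}\|^2 \le \|\bA\|^2\|\bv_{\Gamma_t}\|^2$ (the last step because $\bA_{\Gamma_t}$ is a row-submatrix of $\bA$), and then pulling the deterministic factor $\|\bA\|^2$ outside the expectation to obtain
\begin{flalign*}
\mE_{\Gamma_t}\|\bA_{\Gamma_t}^T\bv_{\Gamma_t}\|^2 \;\le\; \|\bA\|^2 \cdot \tfrac{k}{m}\|\bv\|^2.
\end{flalign*}
From \cite[Theorem 5.32]{Vershynin2012}, $\|\bA\|^2 \le (1+\delta)^2 m$ with probability $1-c_1\exp(-c_2 m)$, and the argument inside the proof of Lemma \ref{lem:lemma3} already furnishes $\|\bv\|^2 \le C_1 m\|\bh\|^2$. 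Hence $\mu^2\,\mE_{\Gamma_t}\|\bA_{\Gamma_t}^T\bv_{\Gamma_t}\|^2 \le C_2\mu^2 k m\|\bh\|^2$.

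Substituting $\mu = \rho_0/n$ and $m \le c_0 n$ collapses everything into
\begin{flalign*}
\mE_{\Gamma_t}\|\bz^{(t+1)}-\bx\|^2 \;\le\; \left(1 - \tfrac{k}{n}\bigl[2\rho_0(0.74-2\epsilon) - C_2 c_0 \rho_0^2\bigr]\right)\|\bh\|^2,
\end{flalign*}
so taking $\rho_0$ small (and $\epsilon$ small via a sufficiently large hidden constant in $m \ge c_0 n$) makes the bracketed quantity a positive constant $\rho$, yielding the claimed $(1-k\rho/n)$ contraction; a union bound over the $\mathcal{O}(1)$ high-probability events already invoked preserves the $1 - c_1\exp(-c_2 m)$ guarantee. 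The only delicate point is whether the coarse inequality $\|\bA_{\Gamma_t}^T\bv_{\Gamma_t}\|^2 \le \|\bA\|^2\|\bv_{\Gamma_t}\|^2$ is too lossy, but the scaling works out: $\mu^2\|\bA\|^2 = \Theta(\rho_0^2/n)$ because $m = \Theta(n)$, and this combines with $\mE_{\Gamma_t}\|\bv_{\Gamma_t}\|^2 = \Theta(k\|\bh\|^2)$ to deliver exactly the target $k/n$ rate. If this crude bound were insufficient, a finer analysis exploiting the exchangeability of $\Gamma_t$ to split the sum $\sum_{i\ne j\in\Gamma_t} v_i v_j\ba_i^T\ba_j$ into diagonal and off-diagonal pieces would be the natural fallback, but I do not expect it to be needed.
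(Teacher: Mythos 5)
Your decomposition and your treatment of the linear term coincide with the paper's: $\mE_{\Gamma_t}\langle\bA_{\Gamma_t}^T\bv_{\Gamma_t},\bh\rangle = k\langle\nabla\ell(\bz^{(t)}),\bh\rangle \ge k(0.74-2\epsilon)\|\bh\|^2$ via Lemmas \ref{lem:lemma1} and \ref{lem:lemma2}. The gap is in the quadratic term. Your chain $\|\bA_{\Gamma_t}^T\bv_{\Gamma_t}\|^2\le\|\bA\|^2\|\bv_{\Gamma_t}\|^2$ with $\|\bA\|^2\le(1+\delta)^2m$ yields $\mu^2\,\mE_{\Gamma_t}\|\bA_{\Gamma_t}^T\bv_{\Gamma_t}\|^2\lesssim\rho_0^2\,km/n^2\cdot\|\bh\|^2$, which is of order $k/n$ only when $m=O(n)$. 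But the hypothesis is $m\ge c_0n$ --- a lower bound with no upper bound --- and indeed you write ``substituting $\mu=\rho_0/n$ and $m\le c_0 n$,'' which flips the stated inequality. For $m\gg n$ your quadratic term swamps the gain $2\mu k(0.74-2\epsilon)\|\bh\|^2$ unless $\rho_0$ is shrunk like $n/m$, which then fails to deliver the claimed contraction $1-k\rho/n$ with a universal $\rho$. The inequality $\|\bA_{\Gamma_t}\|\le\|\bA\|$ is precisely the ``too lossy'' step you worried about: it replaces the operator norm of a $k\times n$ submatrix, which is $O(\sqrt{n})$ for $k\le n$, by $\sqrt{m}$.

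The paper avoids this by taking the expectation over $\Gamma_t$ coordinate-wise before applying any norm bound: the diagonal contribution is $\frac{k}{m}\sum_i\|\ba_i\|^2v_i^2$, and on the event $\max_i\|\ba_i\|^2\le 6n$ (probability at least $1-m\exp(-1.5n)$) this is at most $6nk\cdot\frac{1}{m}\|\bv\|^2\lesssim nk\|\bh\|^2$, so that multiplying by $\mu^2=\rho_0^2/n^2$ gives $O(\rho_0^2k/n)\|\bh\|^2$ for every $m\ge c_0 n$. (In fairness, the paper's display silently drops the off-diagonal terms $\sum_{i\ne j\in\Gamma_t}v_iv_j\ba_i^T\ba_j$ that you correctly identify as the obstacle; their conditional expectation is $\frac{k(k-1)}{m(m-1)}\bigl(\|\bA^T\bv\|^2-\sum_iv_i^2\|\ba_i\|^2\bigr)$ and is controlled by $\|\bA^T\bv\|^2=m^2\|\nabla\ell(\bz^{(t)})\|^2\le 4(1+\delta)^2m^2\|\bh\|^2$ via Lemma \ref{lem:lemma3}, contributing $O(\mu^2k^2)\|\bh\|^2=O(\rho_0^2k^2/n^2)\|\bh\|^2\le O(\rho_0^2k/n)\|\bh\|^2$ for $k\le n$. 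So the exchangeability computation you relegate to a ``fallback'' is in fact the argument that is needed; the crude spectral bound should be discarded in its favor.)
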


\begin{proof}
Without loss of generality, we assume $\bz^{(t)}$ is in the neighborhood of $\bx$ (otherwise it is in the neighborhood of $-\bx$). Let $\bh=\bz^{(t)}-\bx$.

We follow the notations in Appendix \ref{supp:convergence} and let $\cS=\{i: (\ba_i^T\bx)(\ba_i^T\bz^{(t)})<0\}$. Then we have
\begin{flalign}
\mE_{\Gamma_t}&\left[\dist^2\left(\bz^{(t+1)}, \bx\right)\right] \nn\\
&=\mE_{\Gamma_t}\left[\left\|\bz^{(t)}-\mu\bA_{\Gamma_t}^T\left(\bA_{\Gamma_t}\bz^{(t)}-y_{\Gamma_t}\odot \sgn(\bA_{\Gamma_t}\bz^{(t)})\right)-\bx\right\|^2\right] \nn\\
&=\|\bh\|^2-2\mu\mE_{\Gamma_t}\left[\left(\bA_{\Gamma_t}\bz^{(t)}-y_{\Gamma_t}\odot\sgn(\bA_{\Gamma_t}\bz^{(t)})\right)^T(\bA_{\Gamma_t}\bh)\right]\nn\\
&\quad+\mu^2\mE_{\Gamma_t}\left[\left(\bA_{\Gamma_t}^T\left(\bA_{\Gamma_t}\bz^{(t)}-y_{\Gamma_t}\odot\sgn(\bA_{\Gamma_t}\bz^{(t)})\right)\right)^T\left(\bA_{\Gamma_t}^T\left(\bA_{\Gamma_t}\bz^{(t)}-y_{\Gamma_t}\odot\sgn(\bA_{\Gamma_t}\bz^{(t)})\right)\right)\right] \nn\\
&\overset{(a)}{=}\|\bh\|^2-\frac{2\mu k}{m}\sum_{i=1}^m\left[\ba_{i}^T\bh\left(\ba_{i}^T\bz^{(t)}-y_{i}\cdot\frac{\ba_{i}^T\bz^{(t)}}{|\ba_{i}^T\bz^{(t)}|}\right)\right]+\frac{\mu^2k}{m}\sum_{i=1}^m\left[\|\ba_{i}\|^2\left(\ba_{i}^T\bz^{(t)}-y_{i}\cdot\frac{\ba_{i}^T\bz^{(t)}}{|\ba_{i}^T\bz^{(t)}|}\right)^2\right] \nn\\
&=\|\bh\|^2-\frac{2\mu k}{m}\left(\sum_{i=1}^m(\ba_{i}^T\bh)^2+\sum_{i\in \cS}2(\ba_{i}^T\bh)(\ba_i^T\bx)\right)+\frac{\mu^2k}{m}\left(\sum_{i=1}^m\|\ba_i\|^2(\ba_i^T\bh)^2+4\sum_{i\in \cS}\|\ba_i\|^2(\ba_i^T\bx)(\ba_i^T\bz^{(t)})\right)\nn\\
&\le \|\bh\|^2-\frac{2\mu k}{m}\sum_{i=1}^m(\ba_{i}^T\bh)^2+\frac{4\mu k}{m}\sum_{i\in \cS}\left|(\ba_{i}^T\bh)(\ba_i^T\bx)\right|+\frac{\mu^2 k}{m}\sum_{i=1}^m\|\ba_i\|^2(\ba_i^T\bh)^2, \label{eq:blockexpansion}
\end{flalign}
where $(a)$ is due to the fact that $\Gamma_t$ is uniformly chosen from all subsets of $\{1,2,\ldots, m\}$ with cardinality $k$. 

%

Following the arguments for obtaining \eqref{eq:incrementalLast}, we have
  \begin{flalign}
	\mE_{\Gamma_t}\left[\dist^2\left(\bz^{(t+1)}, \bx\right)\right]&\le \left(1-2\mu k(1-\epsilon)+4\mu k(0.13+\epsilon)+\mu^2 k\cdot 6n(1+\epsilon)\right)\|\bh\|^2\nn\\
	&\le \large(1-2\mu k(0.74-3\epsilon-3n\mu(1+\epsilon))\large)\|\bh\|^2.
\end{flalign}
By choosing the step size $\mu\le \frac{0.24}{n}$, the proposition is proved.
\end{proof}

\subsection{Proof of Theorem \ref{cor:kaczmarz}}\label{supp:subsec:kaczmarz}

Without loss of generality, we assume $\bz^{(t)}$ is in the neighborhood of $\bx$ (otherwise it is in the neighborhood of $-\bx$). Let $\bh=\bz^{(t)}-\bx$.

We follow the notations in Appendix \ref{supp:convergence} and let $\cS=\{i: (\ba_i^T\bx)(\ba_i^T\bz^{(t)})<0\}$. Then we have
\begin{flalign}
\mE_{i_t}&\left[\dist^2\left(\bz^{(t+1)}, \bx\right)\right] \nn\\
&=\mE_{i_t}\left[\left\|\left(\bz^{(t)}-\frac{1}{\|\ba_{i_t}\|^2}\left(\ba_{i_t}^T\bz^{(t)}-y_{i_t}\cdot \frac{\ba_{i_t}^T\bz^{(t)}}{|\ba_{i_t}^T\bz^{(t)}|}\right)\ba_{i_t}\right)-\bx\right\|^2\right] \nn\\
&=\|\bh\|^2-2\mE_{i_t}\left[\frac{1}{\|\ba_{i_t}\|^2}\left(\ba_{i_t}^T\bh\right)\left(\ba_{i_t}^T\bz^{(t)}-y_{i_t}\cdot\frac{\ba_{i_t}^T\bz^{(t)}}{|\ba_{i_t}^T\bz^{(t)}|}\right)\right]+\mE_{i_t}\left[\frac{1}{\|\ba_{i_t}\|^2}\left(\ba_{i_t}^T\bz^{(t)}-y_{i_t}\cdot\frac{\ba_{i_t}^T\bz^{(t)}}{|\ba_{i_t}^T\bz^{(t)}|}\right)^2\right] \nn\\
&\overset{(a)}{=}\|\bh\|^2-\frac{2}{m}\sum_{i=1}^m\left[\frac{1}{\|\ba_{i}\|^2}\left(\ba_{i}^T\bh\right)\left(\ba_{i}^T\bz^{(t)}-y_{i}\cdot\frac{\ba_{i}^T\bz^{(t)}}{|\ba_{i}^T\bz^{(t)}|}\right)\right]+\frac{1}{m}\sum_{i=1}^m\left[\frac{1}{\|\ba_{i}\|^2}\left(\ba_{i}^T\bz^{(t)}-y_{i}\cdot\frac{\ba_{i}^T\bz^{(t)}}{|\ba_{i}^T\bz^{(t)}|}\right)^2\right] \nn\\
&=\|\bh\|^2-\frac{2}{m}\left(\sum_{i=1}^m\frac{(\ba_{i}^T\bh)^2}{\|\ba_{i}\|^2}+\sum_{i\in \cS}\frac{2(\ba_{i}^T\bh)(\ba_i^T\bx)}{\|\ba_{i}\|^2}\right)+\frac{1}{m}\left(\sum_{i=1}^m\frac{(\ba_{i}^T\bh)^2}{\|\ba_{i}\|^2}+4\sum_{i\in \cS}\frac{(\ba_i^T\bx)(\ba_i^T\bz^{(t)})}{\|\ba_{i}\|^2}\right)\nn\\
&= \|\bh\|^2-\frac{1}{m}\sum_{i=1}^m\frac{(\ba_{i}^T\bh)^2}{\|\ba_{i}\|^2}+\frac{4}{m}\sum_{i\in \cS}\frac{(\ba_{i}^T\bx)^2}{\|\ba_{i}\|^2} \label{eq:kaczmarzexpansion}
\end{flalign}
where $(a)$ is due to the fact that $i_t$ is sampled uniformly at random from $\{1,2,\cdots, m\}$.
By the spectral case of Lemma 5.20 in \cite{Vershynin2012}, $\{\sqrt{n}\frac{\ba_i}{\|\ba_i\|}\}_{i=1}^m$ are independent isotropic random vectors in $\bbR^n$ and hence
\begin{flalign*}
\mE\left[n\frac{(\ba_i^T\bh)^2}{\|\ba_i\|^2}\right]=\|\bh\|^2.
\end{flalign*}  
Moreover, $\{\sqrt{n}\frac{\ba_i}{\|\ba_i\|}\}_{i=1}^m$ are sub-Gaussian and the sub-Gaussian norm is bounded by an absolute constant. Thus, we have that if $m\ge c_0 \epsilon^{-2}n$, then with probability $1-2\exp(-c_1m\epsilon^2)$,
\begin{flalign*}
\frac{1}{m}\sum_{i=1}^m\frac{(\ba_{i}^T\bh)^2}{\|\ba_{i}\|^2}\ge \frac{(1-\epsilon)}{n}\|\bh\|^2.
\end{flalign*}
holds for all vectors $\bh$.
By Lemma \ref{lem:lemma2}, we have that with probability $1-C \exp(-c_1  m\epsilon^2)$
\begin{flalign*}
\frac{1}{m}\sum_{i\in \cS}\left|\ba_i^T\bx\right|^2\le \frac{1}{m}\sum_{i=1}^m\left|\ba_i^T\bh\right|^2\bone_{\{|\ba_i^T\bx|<|\ba_i^T\bh|\}}\le (0.13+\epsilon)\|\bh\|^2
\end{flalign*}
holds for all $\bh$ satisfying $\|\bh\|/\|\bx\|\le \frac{1}{10}$.

  Define an event $E_2:=\{\min_{1\le i\le m}\|\ba_i\|^2\ge \frac{2}{3} n\}$. It can be shown that $\bbP\{E_2\}\ge 1-m \exp (-n/12)$. Then on the event $E_2$, \eqref{eq:kaczmarzexpansion} is further upper bounded by
  \begin{flalign}
\mE_{i_t}\left[\dist^2\left(\bz^{(t+1)}, \bx\right)\right]\le \left(1-\frac{1-\epsilon}{n}+\frac{6(0.13+\epsilon)}{n}\right)\|\bh\|^2\le \left(1-\frac{0.22-7\epsilon}{n}\right)\|\bh\|^2,
\end{flalign}
which concludes the proof.

\end{document}